\definecolor{OI-vermillion}{RGB}{213,94,0}
\definecolor{OI-blue}{RGB}{0,114,178}
\newcommand{\hlcell}{\cellcolor[rgb]{0.95,0.9,0.25}}
\newcommand{\ba}        {\mathbf{a}}
\newcommand{\bA}        {\mathbf{A}}
\newcommand{\tnsrA}     {\underline{\bA}}
\newcommand{\bb}        {\mathbf{b}}
\newcommand{\bB}        {\mathbf{B}}
\newcommand{\cB}        {\mathcal{B}}
\newcommand{\tnsrB}     {\underline{\bB}}
\newcommand{\cC}        {\mathcal{C}}
\newcommand{\E}         {\mathbb{E}}
\newcommand{\bg}        {\mathbf{g}}
\newcommand{\bG}        {\mathbf{G}}
\newcommand{\tnsrG}     {\underline{\bG}}
\newcommand{\cH}        {\mathcal{H}}
\newcommand{\bI}        {\mathbf{I}}
\newcommand{\tnsrI}     {\underline{\bI}}
\newcommand{\cL}        {\mathcal{L}}
\newcommand{\bm}        {\mathbf{m}}
\newcommand{\wtm}       {\widetilde{m}}
\newcommand{\N}         {\mathbb{N}}
\newcommand{\cN}        {\mathcal{N}}
\newcommand{\bbO}       {\mathbb{O}}
\newcommand{\bo}        {\mathbf{o}}
\newcommand{\bO}        {\mathbf{O}}
\newcommand{\bbP}       {\mathbb{P}}
\newcommand{\cP}        {\mathcal{P}}
\newcommand{\bQ}        {\mathbf{Q}}
\newcommand{\R}         {\mathbb{R}}
\newcommand{\wtr}       {\widetilde{r}}
\newcommand{\bS}        {\mathbf{S}}
\newcommand{\tnsrS}     {\underline{\bS}}
\newcommand{\bT}        {\mathbf{T}}
\newcommand{\bU}        {\mathbf{U}}
\newcommand{\bv}        {\mathbf{v}}
\newcommand{\bV}        {\mathbf{V}}
\newcommand{\X}         {\mathbb{X}}
\newcommand{\bx}        {\mathbf{x}}
\newcommand{\bX}        {\mathbf{X}}
\newcommand{\cX}        {\mathcal{X}}
\newcommand{\tnsrX}     {\underline{\bX}}
\newcommand{\wtbx}      {\widetilde{\bx}}
\newcommand{\Y}         {\mathbb{Y}}
\newcommand{\by}        {\mathbf{y}}
\newcommand{\why}       {\widehat{y}}
\newcommand{\whby}      {\widehat{\by}}
\newcommand{\bmu}       {\boldsymbol{\mu}}
\newcommand{\whmu}      {\widehat{\mu}}
\newcommand{\bomega}    {\boldsymbol{\omega}}
\newcommand{\bSigma}    {\boldsymbol{\Sigma}}
\DeclareMathOperator*{\argmin}  {arg\,min}                              
\newcommand{\rank}      {\mathrm{rank}}                                 
\newcommand{\tvec}      {\mathrm{vec}}                                  
\newcommand{\mi}[2]{\mathbb{I}(#1; #2)}
\newcommand{\cmi}[3]{\mathbb{I}(#1; #2 | #3)}
\newcommand{\ip}[2]{ \left\langle #1, #2 \right\rangle }
\newcommand{\norm}[1]{ \left\| #1 \right\| }
\newcommand{\Bvl}{\mathbf{B}_{l}}
\newcommand{\Bvlprime}{\mathbf{B}_{l'}}
\newcommand{\Bpk}{\mathbf{B}_{k,p_k}}
\newcommand{\gf}{\mathbf{g}_{f}}
\newcommand{\gfprime}{\mathbf{g}_{f'}}
\newcommand{\Bks}[2]{\mathbf{B}_{(#1, #2)}}
\newcommand{\tildBks}[2]{\widetilde{\mathbf{B}}_{(#1, #2)}}
\newcommand{\Bpks}[2]{\mathbf{B}_{p_{(#1, #2)}}}
\newcommand{\tildBpks}[2]{\widetilde{\mathbf{B}}_{p_{(#1, #2)}}}
\newcommand{\Bpksprime}[2]{\mathbf{B}_{p_{(#1, #2)}'}}
\newcommand{\tildBpksprime}[2]{\widetilde{\mathbf{B}}_{p_{(#1, #2)}'}}
\newcommand{\Bpksi}[3]{\mathbf{B}_{(p_{(#1, #2)},#3)}}
\newcommand{\sff}{\mathbf{s}_f}
\newcommand{\sfprime}{\mathbf{s}_{f'}}
\newcommand{\Sp}{\mathbf{S}_{p}}
\newcommand{\Spprime}{\mathbf{S}_{p'}}
\newcommand{\Spks}[2]{\mathbf{S}_{p_{(#1,#2)}}}
\newcommand{\gfi}[1]{\mathbf{g}_{(f,#1)}}
\newcommand{\tildgfi}[1]{\widetilde{\mathbf{g}}_{(f,#1)}}
\newcommand{\gfiprime}[1]{\mathbf{g}_{(f',#1)}}
\newcommand{\tildgfiprime}[1]{\widetilde{\mathbf{g}}_{(f',#1)}}
\algnewcommand{\Initialize}[1]{%
  \State \textbf{Initialize:}
  \Statex \hspace*{\algorithmicindent}\parbox[t]{.8\linewidth}{\raggedright #1}
}
\algnewcommand{\Estimate}[1]{%
  \State \textbf{Estimate}
  \Statex \hspace*{\algorithmicindent}\parbox[t]{.8\linewidth}{\raggedright #1}
}
\theoremstyle{plain}
\newtheorem{theorem}{Theorem}
\newtheorem{lemma}{Lemma}
\newtheorem{corollary}{Corollary}
\newtheorem{assumption}{Assumption}
\newtheorem{definition}[theorem]{Definition}
\newcommand\redout{\bgroup\markoverwith{\textcolor{red}{\rule[.5ex]{2pt}{0.4pt}}}\ULon}
\title{Structured Low-Rank Tensors for Generalized Linear Models}
\author{\name Batoul Taki \email batoul.taki@rutgers.edu \\
      \addr Department of Electrical and Computer Engineering\\
      Rutgers University-New Brunswick
      \AND
      \name Anand D. Sarwate \email anand.sarwate@rutgers.edu \\
      \addr Rutgers University-New Brunswick
      \AND
      \name Waheed U. Bajwa \email waheed.bajwa@rutgers.edu\\
      \addr Rutgers University-New Brunswick\\
      }
\begin{document}

\maketitle

\begin{abstract}
Recent works have shown that imposing tensor structures on the coefficient tensor in regression problems can lead to more reliable parameter estimation and lower sample complexity compared to vector-based methods. This work investigates a new low-rank tensor model, called Low Separation Rank (LSR), in Generalized Linear Model (GLM) problems. The LSR model – which generalizes the well-known Tucker and CANDECOMP/PARAFAC (CP) models, and is a special case of the Block Tensor Decomposition (BTD) model – is imposed onto the coefficient tensor in the GLM model. This work proposes a block coordinate descent algorithm for parameter estimation in LSR-structured tensor GLMs. Most importantly, it derives a minimax lower bound on the error threshold on estimating the coefficient tensor in LSR tensor GLM problems. The minimax bound is proportional to the intrinsic degrees of freedom in the LSR tensor GLM problem, suggesting that its sample complexity may be significantly lower than that of vectorized GLMs. This result can also be specialised to lower bound the estimation error in CP and Tucker-structured GLMs. The derived bounds are comparable to tight bounds in the literature for Tucker linear regression, and the tightness of the minimax lower bound is further assessed numerically. Finally, numerical experiments on synthetic datasets demonstrate the efficacy of the proposed LSR tensor model for three regression types (linear, logistic and Poisson). Experiments on a collection of medical imaging datasets demonstrate the usefulness of the LSR model over other tensor models (Tucker and CP) on real, imbalanced data with limited available samples.
\end{abstract}

\section{Introduction}
In machine learning, regression models are used to understand the relationship between a set of independent variables (also known as covariates) and a dependent outcome. More formally, given a vector of covariates, $\bx$, and outcome, $y$, jointly distributed according to $\bbP_{\bx y}$, the goal is to produce a function that will predict $y$ when given $\bx$. Under the Mean Squared Error (MSE) criterion, the solution would be finding the conditional mean $\E_{y|\bx}[y|\bx]$, which is obtained by modeling the conditional probability $\bbP_{y|\bx}$ and estimating the model class parameters. For example, in linear regression, the predictor is given by $\bb^T\bx$, where the model class parameters are denoted by a vector $\bb$ and estimated  through a set of $n$ training data samples $\{ \bx_i, y_i\}_{i = 1}^{n}$ , where $\bx_i$ is the $i^{th}$ sample vector of covariates. Different regression models suit different types of prediction problems: some common examples are linear, logistic and Poisson regression, all of which fall under a broader class of models called Generalized Linear Models (GLMs). GLMs were introduced to encompass classes of models that cannot be appropriately modeled as a simple `linear-response model'~\citep{mccullagh2019generalized}. In particular, GLMs refer to a parametric statistical framework that models the conditional probability of a scalar response variable $y_i$ that follows an exponential family distribution. Because the exponential family distribution encompasses a large range of widely-used distributions, GLMs allow one to study a broader class of regression problems. 

GLM regression models are used for wide array of datasets in a multitude of applications. However, modern-day technologies are creating data-intensive environments and collecting increasingly high-dimensional data. In particular, we often encounter data in the form of variegated and structured multi-dimensional arrays (tensors), where the number of available data samples is far smaller than the number of variables (the dimensionality of the data). Prominent examples of two-dimensional arrays include biological imaging data such as electroencephalography (EEG) and fiber-bundle imaging \citep{dumas2019computational}. Examples of three-dimensional arrays include functional Magnetic Resonance Images (fMRI) \citep{bellec2017neuro} and Magnetic Resonance Angiography (MRA) \citep{yang2020intra}. Though such data has been used in many instances throughout the literature \citep{li2018tucker, hung2013matrix, zhou2013tensor, dumas2019computational}, classical parameter estimation methods assume vector-structured covariates and estimate a corresponding coefficient vector. There are two major concerns associated with multidimensional (a.k.a. tensor) data and its vectorization. First, vectorizing data that was originally in tensor form destroys its underlying structure (which often contains rich information valuable for regression analysis). Secondly, the resulting vector model exhibits a very large number of parameters, in the sense that in the high-dimensional setting the GLM regression model becomes ill-posed. This is an instance of `the curse of dimensionality' \citep{hung2013matrix, zhang2018rank}. 


A common solution to the curse of dimensionality in tensor GLM problems is to impose structure on the model parameters: this is the focus of this work. If the covariates are \textit{tensor} structured, we expect to estimate coefficient tensors whilst exploiting the multidimensional structure of the data and rich information lying in the correlation between tensor modes. Common structures include: the addition of a sparsity regularizer or a low-rank inducing regularizer to the regression problem \citep{abramovich2018high, seber2003linear, zhang2018rank, an2020logistic, raskutti2019convex}; the imposition of some tensor factorisation on the model parameters \citep{ahmed2020tensor, li2018tucker, zhou2013tensor, zhang2020islet, tan2012logistic, zhang2016decomposition, wu2022bayesian, taki2021minimax}; or both of the above. The imposition of such structures should ultimately lead to the estimation of fewer parameters, improving the computational complexity and performance of our parameter estimation problem.  


Two commonly used tensor factorisations are the CANDECOMP/PARAFAC (CP) and Tucker decompositions \citep{kolda2009tensor}. These decompositions impose a compact structure on the coefficient tensors, thereby restricting the class of possible solutions in the parameter estimation problem. Compared to simple vector regression, these decompositions can decrease the number of training samples needed for reliable coefficient estimation (also referred to as `sample complexity'). A smaller sample complexity can lower the variance of the model, yet the restrictive nature of these decompositions can also reduce the representation power of the coefficient tensors for many classes of tensors, causing a non-favourable bias-variance trad-eoff. A less studied decomposition -- particularly in regression works -- is the Block Tensor Decomposition (BTD) \citep{de2008decompositions}, which can be expressed as a Tucker decomposition with block diagonal core tensor. We further discuss BTD in relation to our work in Section~\ref{prob_state}.

In this work we impose a new decomposition on the coefficient tensors that we will promptly refer to as the Low Separation Rank (LSR) model. In fact we will show that the LSR decomposition is a generalization of the Tucker decomposition, and a special case of the BTD model \citep{de2008decompositions}. The LSR model maintains a lower sample complexity than vectorization-based tensor GLM regression but is less restrictive than Tucker or CP models of same sized core tensor. Though estimating an LSR-structured tensor introduces a greater sample complexity than the aforementioned decompositions, this increase is compensated by a stronger representation power, leading to better parameter estimation performance. In other words, we show that the increase in variance of the LSR model is conquered by its stronger representation power for a larger class of tensors (ergo, its decrease in bias), leading to a more favourable bias-variance trade-off (model compactness vs. representation power).

\subsection{Contributions}
In this paper we make the following contributions. We introduce the LSR-structured tensor problem under the GLM framework that we appropriately denote as LSR-TGLM. GLMs encompass various flavours of regression including linear, logistic and Poisson. We focus on the high-dimensional setting and discuss the various parameters of an LSR-structured tensor (such as `rank' and `separation rank', terms we will introduce in Section~\ref{prelim}) that reduce the sample complexity of parameter estimation in GLMs. We also compare sample complexities between different tensor decomposition models and the LSR model. 

Additionally, we explore two problems at the core of this work. First, we propose a parameter estimation algorithm (which we name LSRTR) for the LSR-TGLM problem. The main idea is that parameter estimation in GLMs can be achieved through Maximum Likelihood Estimation (MLE) \citep{mccullagh2019generalized}. However, for structured tensor settings, such as estimating CP or Tucker-structured tensors, the objective function of the MLE problem is highly non-convex \citep{li2018tucker,zhang2016decomposition,zhou2013tensor,tan2012logistic}. This is also true when the tensor is LSR-structured. To overcome this, we observe that the problem can be partitioned into several convex sub-problems that can then be solved alternately. On the basis thereof we propose a block coordinate descent algorithm to find the MLE of the LSR-structured coefficient tensor. Secondly, and perhaps most importantly we investigate the fundamental error threshold of the LSR-TGLM problem by deriving a minimax lower bound on the estimation error. This minimax bound is useful for assessing the performance of the proposed algorithm and ascertaining the parameters that may affect the sample complexity of the parameter estimation problem. The bound is general and can be specialised to previously introduced regression types under the GLM framework, such as CP and Tucker tensor GLMs. Obtaining the bound requires a special construction of a packing set of LSR-structured tensors. The methods we develop are systematic and can be appropriate in other works that consider similar topological properties of structured tensors (i.e., LSR-structured tensors). We also assess the tightness of our bounds in two ways: 1) Through a numerical study where we show that the ratio of the empirical error through LSRTR and the minimax bound is approximately constant with increasing sample size, and 2) We specialise our minimax bound to the Tucker linear regression case and show that our bound matches the optimal error rates for Tucker linear regression found in recent works \citep{zhang2020islet}.

Finally, we evaluate the performance of our algorithm through extensive numerical experiments on synthetic data. We also test the performance of imposing the LSR structure on several classification problems with multidimensional medical imaging datasets. We show that while the LSR model outperforms the vector model, its rich representation power also allows for enhanced performance over the Tucker (and CP) case. 

\subsection{Relation To Prior Work}
Regression problems have been a major focus of high-dimensional statistics for many years \citep{giraud2021introduction}. Some works on linear and logistic regression impose sparsity on the model parameter in order to reduce the sample complexity of the vector-based regression problems \citep{abramovich2018high, sun2012scaled}. However, in very high-dimensional regimes such as when the data is tensor-structured, i.e., we have $\{\tnsrX_i\}_{i=1}^n$, sparsity assumptions do not provide enough reduction in the sample complexity \citep{raskutti2019convex, lee2020minimax}. 
Several works overcome the limitations of sparse vector regression by extending regular regression to the high-dimensional and low-rank matrix settings. Low-rank assumptions on data are common throughout the literature and are used to reduce the sample complexity of estimation problems \citep{barnes2019minimax, shi2014sparse}. Such works propose regularized matrix linear and logistic regression models to obtain low-rank and/or sparse estimates of the coefficient matrix in regression problems, such as those on inference on images or graph data \citep{hung2013matrix,zhang2018rank,shi2014sparse,an2020logistic, berthet2020statistical}. Some works directly impose low-rank structures on coefficient matrices through the rank-$r$ singular value decomposition (SVD) \citep{taki2021minimax}. 

Additionally, though tensors and their decompositions have long since been introduced in the literature \citep{kolda2009tensor}, their applications in regression analysis have recently become established. Analogous to the low-rank matrix regression works, a variety of works have introduced low-rank structures on coefficient tensors for tensor regression problems. For logistic regression, \citet{tan2012logistic} first introduced using a low-rank and/or sparse CANDECOMP/PARAFAC (CP) decomposition. A more flexible generalization of this work is imposing the Tucker decomposition on the coefficient tensor in tensor logistic regression~\citep{zhang2016decomposition,wu2022bayesian}. The Tucker structure has also been introduced for tensor linear regression \citep{zhang2020islet, ahmed2020tensor, wu2022bayesian}. To the best of our knowledge, the BTD structure has yet to be introduced for regression and GLMs.

More works to our interest generalize tensor linear and logistic regression works by imposing the CP and Tucker decompositions in tensor GLMs \citep{li2018tucker,zhou2013tensor}. Both structures have been shown to significantly reduce the number of learnable parameters, leading to efficient estimation and prediction in a variety of regression problems, particularly with medical imaging data. The aforementioned works develop efficient estimation algorithms and provide empirical results on their performance. The proposed approaches outperform vector-based methods (in terms of estimation and prediction accuracy) in the high-dimensional setting when the number of available samples is limited. However, these matrix and tensor structures are aimed at being compact (in the sense that they decrease the number of learnable parameters in a given problem), and are therefore also quite restrictive in their representation power of the true coefficient tensor. A more general and flexible tensor model is required to achieve accurate and efficient estimation while maintaining a useful level of compactness.

In terms of theoretical guarantees, various regression works provide local identifiability guarantees of the proposed CP and Tucker tensor models for GLMs, and asymptotic consistency and normality results for the MLE estimator of the model parameter \citep{li2018tucker, zhang2020islet, zhou2013tensor}. Some works on high-dimensional regression also provide sample complexity bounds of the proposed model in the form of risk upper bounds \citep{zhang2020islet, ahmed2020tensor} or minimax lower bounds \citep{barnes2019minimax,zhang2020islet, raskutti2019convex, foster2018logistic, abramovich2018high, lee2020minimax, abramovich2016model,raskutti2011minimax}; however, these works are specific to vector-based logistic regression or Tucker linear regression. Current works in tensor logistic regression or tensor-based GLMs do not provide any theoretical guarantees for sample complexity (upper or lower bounds).

In terms of the LSR model, its motivational roots are two fold. First, a special case of the LSR model was introduced by \citet{tsiligkaridis2013covariance} for covariance estimation problems. An extension of this model has only recently been used on tensor data for dictionary learning \citep{ghassemi2019learning}. Secondly, the LSR model can be rearranged into a specialised form of the BTD model, equipped with further constraints. In terms of the GLM framework, to the best of our knowledge, our work is the first to consider the LSR (or BTD) model in regression problems. 

\subsection{Organization}
The organization of this paper is as follows. In Section~\ref{prelim} we establish a background on various tensor models, as well as the LSR tensor model. In Section~\ref{prob_state} we formulate the LSR-TGLM model and introduce two objectives: parameter estimation and minimax risk. In Section \ref{estimation} we discuss the estimation problem of LSR-structured coefficient tensors in GLMs and propose an efficient algorithm for parameter estimation. In Section \ref{numerical_study} we provide a numerical study of the LSRTR algorithm with synthetic data and experiments on real data. In Section \ref{minimax} we introduce a sample complexity bound in the form of a minimax lower bound on the estimation error of the low-rank LSR-GLM model and provide a formal proof in Section \ref{proof}. We conclude our work in Section \ref{conc}. Proofs of lemmas for the main theorem, and additional numerical results are provided in the appendix.

\section{Preliminaries} \label{prelim}
This work is based on structured tensor decompositions. For a more comprehensive tutorial on tensor decompositions, see the survey of \citet{kolda2009tensor}. We will now list some necessary preliminaries regarding tensors and tensor structures.

We use the following notation convention throughout the paper: $x$, $\bx$, $\bX$ and $\tnsrX$ denote scalars, vectors, matrices and tensors, respectively. Specifically, $\cX$ is the tensor defined as the aggregation of $n$ tensors: $\cX = [\tnsrX_1, \tnsrX_2,\dots, \tnsrX_n]$. Given a fixed tensor $\tnsrX$, $\bx \triangleq \text{vec}(\tnsrX)$ is the column-wise vectorization of $\tnsrX$. The tensor $\tnsrI_m$ is the $m \times \dots \times m$ identity tensor, such that for any tensor $\tnsrS$, the product $\tnsrI\cdot\tnsrS = \tnsrS\cdot\tnsrI = \tnsrS$. Given $n$ $K$-mode tensors $\{\tnsrX_i\}_{i=1}^n$ of dimension $m_1\times m_2 \times \dots \times m_K$, $\cX$ is the combined (aggregated) tensor of $n$ samples of dimension $m_1\times m_2 \times \dots \times m_K\times n$. For a positive integer $K$, the set $[K] = \{1,2,\dots, K\}$ so that $(\bX_k)_{k\in[K]}$ is the ordered set $(\bX_1, \bX_2, \dots, \bX_K)$. The symbol $-[K]$ for the reverse order, so that $(\bX_k)_{k\in-[K]} = (\bX_K, \bX_{K-1}, \dots, \bX_1)$. For a matrix $\bX$, the vector $\bx^{(j)}$ is the $j^{th}$ column of $\bX$ and the vector $\bx^{T(j)}$ is its $j^{th}$ row. For a vector $\bx$, $\bx(j)$ is the $j^{th}$ element of $\bx$. If $x \in \R$ then $\lfloor x \rfloor$ is the greatest integer less than or equal to $x$. We use the standard notation $\norm{\bx}_p$ for the $p$-norm $(p \geq 1)$ of a vector $\bx$. For a matrix $\bX$ or tensor $\tnsrX$, the Frobenius norms are $\norm{\bX}_F$ and $\norm{\tnsrX}_F$, respectively. For two vectors $\bx_1$ and $\bx_2$, $\bx_1 \circ \bx_2$ denotes their outer product. Similarly, for two matrices $\bX_1$ and $\bX_2$, $\bX_1 \otimes \bX_2$ denotes their Kronecker product. The inner product between two vectors, matrices or tensors is denoted as $\left< \cdot, \cdot \right>$. For a set of $K$ vectors $\{\bx_i\}_{i=1}^K$, $\bx_1 \circ \bx_2 \circ \dots \circ \bx_K$ produces a $k$-dimensional rank-$1$ tensor. For $K$ matrices $\{\bX_i\}_{i=1}^K$, $\bigotimes_{k\in[K]} \bX_k \triangleq \bX_1 \otimes \bX_2 \otimes \dots \otimes \bX_K$ produces a `$K$-order Kronecker-structured matrix'. We call a matrix a $K$-order Kronecker-structured matrix if it is a product of $K\geq2$ matrices. The mode-$k$ matricization of tensor $\tnsrX$ is $\tnsrX_{(k)}$ \citep{kolda2009tensor}, and given a matrix $\bB$, $\tnsrX \times_k \bB$ denotes the multiplication of $\tnsrX$ by $\bB$ along mode $k$. Finally, for all $k\in [K]$ we have $\tnsrX \times_{[K]} \bB_k \triangleq \tnsrX \times_1 \bB_1 \times_2 \dots \times_K \bB_K$.

We now formally define GLMs for vector-structured covariates, as discussed in the literature.
\begin{definition}[Vector-structured Generalized Linear Models]\label{vector_glm}
    Consider an observation $y$, a vector of covariates $\bx \in \R^{m}$, and a bias $z$ and regression coefficient vector $\bb$, both to be estimated. Let $y$ be a response variable generated from a distribution in the exponential family with probability mass/density function as follows: 
    \begin{align}
        \bbP(y,\eta) = b(y)\exp(\eta T(y) - a(\eta)). \label{exp_dist}
    \end{align} 
    Here, $\eta$ is called the natural parameter, $T(y)$ is the sufficient statistic and $a(\eta)$ is the log-partition (or cumulant) function. Consider a given regression problem of estimating $y$ given $\bx$. This requires minimising the MSE as follows:
    \begin{align}
        \widehat{y}_{\mathrm{MMSE}} = \argmin_{\widehat{y}}\E_{\bx,y}[(\widehat{y}-y)^T(\widehat{y}-y)]
    \end{align}
    The Minimum MSE (MMSE) solution is the expected value of $y$ conditioned on $\bx$, or $\E[y|\bx] = \mu$.
    Now, let $H$ be the support of $y$ and consider a strictly increasing and invertible link function $g(\cdot): H \rightarrow \R$. The Generalized Linear Model is then defined as
    \begin{align}
        g(\mu) = \eta \triangleq \left<\bb,\bx\right> +z,\label{link_fnt}
    \end{align}
    where the natural parameter $\eta$ is the linear predictor, $\mu = g^{-1}(\eta)$, and $z$ is the bias.
\end{definition}
The core idea behind GLMs is that though many regression problems are not linear, the distribution of the observation $y$ is only affected by the linear combination $\left<\bb, \bx \right> + z$. If the distribution of $y$ falls under the exponential family (which it often does), then $y$ is related to the linear predictor via $g(\cdot)$. In the case of linear regression where $y \in \R$ is assumed to be a continuous response variable with Gaussian distribution, we have $\mu = \left<\bb,\bx\right> +z$ and $g(\cdot)$ is just the identity function. In logistic regression $y$ is Bernoulli distributed, taking values $y \in \{0,1\}$, and we have $\mu = \frac{1}{1 + \exp(-\left<\bb,\bx\right> +z)}$ and $g(\mu) = \log\left(\frac{\mu}{1-\mu}\right)$. In Poisson regression $y \in \N$ is a Poisson distributed random variable that expresses the count of an event occurring in a fixed time interval, while $\mu = \exp(\left<\bb,\bx\right> +z)$and $g(\mu) = \log\left(\mu\right)$. 

\begin{definition}[CP Decomposition \citep{kolda2009tensor, zhou2013tensor}]
    Consider a $K$-mode tensor $\tnsrB \in \R^{m_1 \times \dots \times m_K}$. The rank-$r$ CANDECOMP/PARAFAC (CP) decomposition decomposes $\tnsrB$ into a sum of $r$ rank-$1$ tensors as follows:
    \begin{align}
        \tnsrB = \sum_{i \in [r]} \bb_{1,i} \circ \dots \circ \bb_{K,i},\label{cp_model}
    \end{align}
    where $\bb_{k,i} \in \R^{m_k},~k \in [K],~i \in [r]$ is a column vector. Equivalently, \eqref{cp_model} can be expressed in vector form as follows: 
    \begin{align}
        \tvec(\tnsrB) \triangleq \bb = \sum_{i\in[r]}\bb_{K,i}\otimes \dots \otimes \bb_{1,i} . \label{cp_vec_model}
    \end{align} 
\end{definition}

\begin{definition}[Tucker decomposition \citep{kolda2009tensor, li2018tucker}]
    Consider a $K$-mode tensor $\tnsrB \in \R^{m_1 \times \dots \times m_K}$. The rank-$(r_1,\dots,r_k)$ Tucker decomposition decomposes $\tnsrB$ as follows:
    \begin{align}
        \tnsrB = \tnsrG \times_1 \bB_1 \times_2 \dots \times_K \bB_K, \label{tucker_model}
    \end{align}
    where $\tnsrG \in \R^{r_1 \times \dots \times r_K}$ denotes the core tensor and $\left\{\bB_k \in \R^{m_k \times r_k} \right\}_{k \in [K]}$ denote the factor matrices. Equivalently, by defining $\bg \triangleq \tvec(\tnsrG)$, \eqref{tucker_model} can be expressed in vector form as follows: 
    \begin{align}
        \tvec(\tnsrB) \triangleq \bb = \bigl(\bB_K\otimes \dots \otimes \bB_1 \bigr) \bg. \label{tucker_vec_model}
    \end{align}
\end{definition}

\begin{figure}[ht]
     \centering
     \begin{subfigure}[b]{0.49\textwidth}
         \centering
         \includegraphics[width=\textwidth]{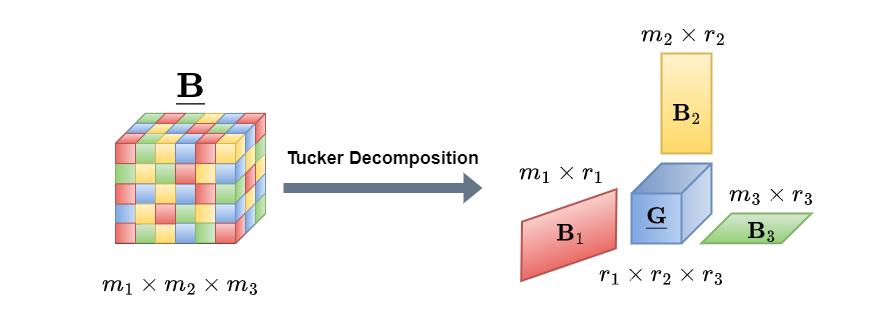}
         \caption{ }
         \label{tucker_fig}
     \end{subfigure}
     \hfill
     \begin{subfigure}[b]{0.49\textwidth}
         \centering
         \includegraphics[width=\textwidth]{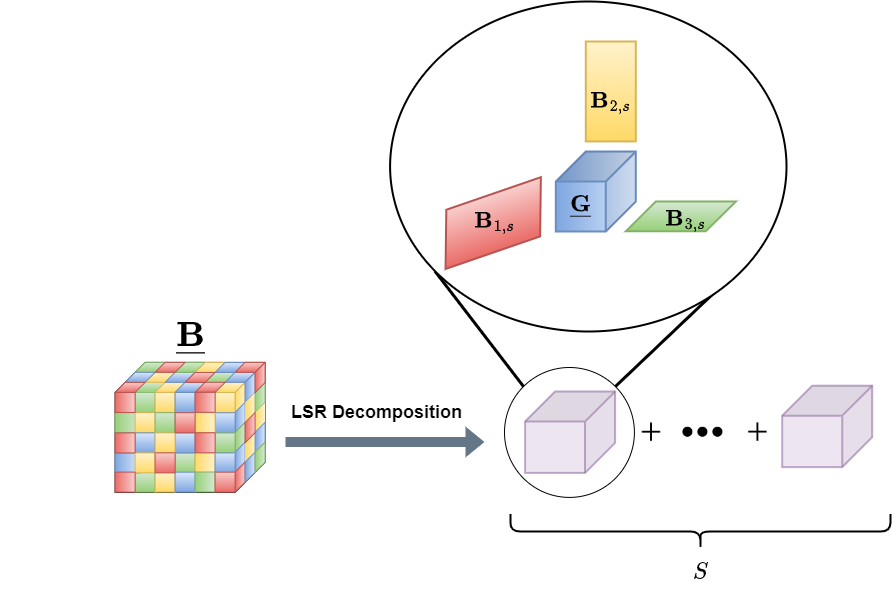}
         \caption{ }
         \label{lsr_fig}
     \end{subfigure}
        \caption{$(a)$: A third-order tensor under the Tucker model. Tensor $\tnsrB$ is decomposed into a core tensor $\tnsrG$ and factor matrices $\bB_k$ multiplied along the $k^{th}$ mode of $\tnsrG$ for $k \in [3]$. The CP model appears as a special case of the Tucker model where $\tnsrG$ is a diagonal tensor of equal dimension along each mode. $(b)$: A third-order tensor under the LSR decomposition. Tensor $\tnsrB$ is comprised of a sum of Tucker-structured tensors, with core tensor $\tnsrG$ fixed across all summands.}
        \label{decomps_figs}
\end{figure}


A visual depiction of the Tucker model is in Figure~\ref{tucker_fig}. The Tucker model decomposes a tensor into a core tensor $\tnsrG$ of dimension $r_1 \times \dots \times r_K$ which is then multiplied by a factor matrix $\bB_k$ along each mode $k \in [K]$. Assuming $\tnsrG$ is small (i.e., $r_k \ll m_k ~\forall k\in[K]$), the factor matrices $\bB_k ~\forall k\in[K]$ are tall, rank $r_k$ matrices. Additionally, the CP model appears as a specialised case of the Tucker model as follows: Fix the number of basis vectors along all modes to some $r \in \R$ (i.e., fix the rank of all factor matrices to $r$) and impose $\tnsrG \in \R^{\overbrace{r\times \dots \times r}^{k \rm\ times}}$ as a diagonal tensor. The CP model lends a desirable compactness as the number of learnable parameters in a CP-structured tensor can be far fewer than that of an unstructured tensor. Despite this, however, the restrictive nature of the CP model (specifically its rank restriction, where each factor matrix must have equal rank) renders it unfavourable against the more `rank-flexible' Tucker model. 

Our second observation refers to \eqref{tucker_vec_model}. The vectorization of a Tucker-structured tensor shows the Kronecker-structured matrix composed of the $K$ factor matrices. However this `Kronecker' structure -- which implies that the coefficient vector $\bb$ is composed of separable sub-matrices weighted by some vector $\bg = \tvec(\tnsrG)$ -- is also quite restrictive, when considering different tensor structures having the same rank. To overcome this, the Block Tensor Decomposition (BTD) was introduced and studied in recent works~\citep{de2008decompositions, rontogiannis2021block, fu2020computing}. An alternative way of viewing the BTD structure is as a summation of $S$ Tucker-structured tensors. The LSR decomposition we define next is a special case of a BTD that uses the concept of matrix separation rank.

\begin{definition}[Matrix Separation Rank \citep{tsiligkaridis2013covariance}]
Fix $\mathbf{m} = (m_1,m_2,\dots,m_K) \in \mathbb{N}^K$ and $\mathbf{r} = (r_1,r_2,\dots,r_K) \in \mathbb{N}^K$, and set $\wtm = \prod_{k\in [K]} m_k$ and $\wtr = \prod_{k\in [K]} r_k$. Then for a matrix $\bB \in \R^{\wtm \times \wtr}$, its separation rank $\mathfrak{S}_{m,p}^K(\cdot)$ is the minimum number $S$ of $K$-order Kronecker-structured matrices such that 
\begin{align}
    \bB = \sum_{s \in [S]} \Bks{1}{s} \otimes \dots \otimes \Bks{K}{s}, \label{lsr_matrix_model}
\end{align}
where $\Bks{k}{s} \in \R^{m_k \times r_k}$.
\end{definition}

The LSR model in \eqref{lsr_matrix_model} generalizes \eqref{tucker_vec_model} by replacing the single Kronecker-structured matrix in \eqref{tucker_vec_model} with a sum of the form in \eqref{lsr_matrix_model}. It poses that a matrix can be expressed as a sum of $S$ Kronecker-structured matrices, with $S=1$ being a specialised case and what we observe in the Tucker model when the tensor is vectorized, as shown in~\eqref{tucker_vec_model}. Having introduced the concept of separation rank, we are now ready to define Low Separation Rank (LSR) tensors. 

\begin{definition}[Low Separation Rank (LSR) Tensor Decomposition]
    Consider a $K$-mode tensor $\tnsrB \in \R^{m_1 \times \dots \times m_K}$. The rank-$(r_1,\dots,r_k)$ LSR decomposition with separation rank $S$ decomposes $\tnsrB$ as follows:
    \begin{align}
        \tnsrB = \sum_{s\in[S]}\tnsrG \times_1 \Bks{1}{s} \times_2 \dots \times_K \Bks{K}{s}, \label{lsr_model} 
    \end{align}
    where $\tnsrG \in \R^{r_1 \times \dots \times r_K}$ denotes the core tensor and $\Bks{k}{s} \in \R^{m_k \times r_k},~ k \in [K],~s\in [S]$ denote the Kronecker-structured factor matrices. Equivalently, by defining $\bg \triangleq \tvec(\tnsrG)$, \eqref{lsr_model} can be expressed in vector form as follows: 
    \begin{align}
        \tvec(\tnsrB) \triangleq \bb = \sum_{s \in[S]} \left( \Bks{K}{s} \otimes \cdots \otimes \Bks{1}{s} \right) \bg. \label{lsr_vec_model}
    \end{align}
\end{definition}

A visual depiction of the LSR model is in Figure~\ref{lsr_fig}. For the purposes of this work, we pose some constraints on the LSR decomposition. First, we assume that the $KS$ factor matrices $\Bks{k}{s}$ are `tall' and rank $r_k$ (i.e., $r_k \ll m_k ~\forall k \in [K]$) and have orthonormal columns. Secondly, the LSR model corresponds to tensor $\tnsrB$ having a separation rank that is relatively small so that $1 \leq S < \min\left(\prod_{k \in [K]}m_k, \prod_{k \in [K]}r_k\right)$. Thus, defining $\bbO^{m \times r}$ as the $m\times r$ Stiefel manifold, and for a fixed tensor rank $(r_1,r_2,\dots,r_K)$ and separation rank $S$, the LSR structured tensor $\tnsrB$ belongs to the following parameter space $\cP_{\{r_k\},S}$:
\begin{align}
   \cP_{\{r_k\},S} \triangleq & \biggl\{ \tnsrB' = \sum_{s\in[S]}\tnsrG' \times_1 \Bks{1}{s}' \times_2 \dots \times_K \Bks{K}{s}'\in \R^{m_1 \times \dots \times m_K} \colon \tnsrG' \in \R^{r_1\times\dots\times r_K},
   \notag \\
   &\qquad
   ~\rank(\tnsrB') = (r_1, \dots, r_K), 
   ~\Bks{k}{s}\in \bbO^{m_k \times r_k},~k \in [K],~s\in [S] \biggr\}\label{param_space} 
\end{align}
The orthonormality assumption on the columns of $\Bks{k}{s}$ is a common assumption made in the GLM literature~\citep{zhang2016decomposition}. Similar assumptions such as unit-norm columns and columns with fixed entries are also common~\citep{zhou2013tensor, li2018tucker}.  

We have described how the LSR structure in \eqref{lsr_model} is a generalization of the Tucker (and therefore of CP) decomposition. The form in \eqref{lsr_model} is also a special case of BTD, or a summation of $S$ Tucker-structured tensors, equipped with orthogonality constraints and a common core tensor among all summands. The BTD can also be rearranged into a specialised Tucker-structured tensor with block-diagonal core tensor of dimensions $Kr_1 \times \dots \times Kr_K$ and factor matrices of size $m_k \times Sr_k$. In our work, however, we compare the number of learnable parameters between tensor decompositions (CP, Tucker and LSR) of same-sized core tensor (of same rank). When comparing tensor decompositions for a fixed rank $(r_1, \dots, r_K)$, LSR generalizes the Tucker decomposition. We also remark that while the differences between LSR and a general BTD might appear nuanced, they are significant, particularly the critical requirement for a common core tensor in the LSR structure, which allows for summing the Kronecker-structured factor matrices. This distinction is fundamental for retaining the LSR matrix structure, thereby offering a number of benefits, on which we will elaborate in Sections~\ref{prob_state},~\ref{estimation}, and~\ref{minimax}.

Moving forward, Table \ref{params_comp} reviews the number of learnable parameters for the three models of CP, Tucker and LSR, for a fixed rank $(r_1, \dots, r_K)$. The CP model contains the least number of parameters, especially if $r$ is small. The LSR model is the most complex of the three models in that it has more parameters than the Tucker model. The working hypothesis throughout this work is that the price we are likely to pay for an increase in sample complexity (by using the LSR model for tensor GLM problems) is worth the gain we are likely to achieve in representation power and estimation accuracy.   

\begin{table}[ht]
\centering
{\renewcommand{\arraystretch}{2}
\begin{tabular}{cccc} 
\toprule
   ~~~ &  \textbf{CP} & \textbf{Tucker} & \textbf{LSR} \\
\midrule
  \textbf{Parameters}  &   $\sum_{k=1}^K (m_k r) + r$     &    $\sum_{k=1}^K (m_k r_k) + \prod_{k=1}^K r_k$  &    $S\sum_{k=1}^K (m_k r_k) + \prod_{k=1}^K r_k$  \\
\bottomrule
\end{tabular}}\caption{\label{params_comp} Number of learnable parameters in the three tensor models (CP, Tucker and LSR).}
\end{table}


\section{Problem Statement} \label{prob_state}

We are now ready to propose the Low Separation Rank Tensor Generalized Linear Model (LSR-TGLM). Consider a response variable $y$ with probability distribution belonging to the exponential family with parameter $\eta$, as in~\eqref{exp_dist}. Consider also tensor-structured covariates $\tnsrX \in \R^{m_1\times \dots\times m_K}$ and a coefficient tensor $\tnsrB \in \R^{m_1\times \dots \times m_K}$ that assumes a low-rank LSR structure as shown in \eqref{lsr_model}. Given a link function $g(\cdot)$, the LSR-TGLM model assumes $\eta$ is given by 
\begin{align}
    g(\mu)  = \eta \triangleq \left< \sum_{s \in [S]} \tnsrG \times_1 \Bks{1}{s} \times_2 \dots \times_{K} \Bks{K}{s}, \tnsrX\right> + z. \label{link_fnt_lsr} 
\end{align}  
It is important to note here that we assume that the tensor rank $(r_1, r_2, \dots, r_K)$ and LSR rank $S$ are known for reasons we will discuss in Section \ref{sec:paramest}. Additionally for algebraic simplicity, from this point forward we will consider the standard case where $z=0$ in~\eqref{link_fnt} and without loss of generality we will express the LSR-TGLM model as 
\begin{align}
    g(\mu) = \left< \sum_{s \in [S]} \mathbf{\underline{G}} \times_1 \Bks{1}{s} \times_2 \dots \times_{K} \Bks{K}{s}, \tnsrX \right>. \label{LSR-TGLM}
\end{align}
The LSR-TGLM model in \eqref{LSR-TGLM} can be written as a standard GLM by vectorizing the parameters as follows
\begin{align}
    g(\mu) &= \left< \tvec(\tnsrB), \tvec(\tnsrX)\right> = \left< \sum_{s \in[S]} \left( \Bks{K}{s} \otimes \cdots \otimes \Bks{1}{s} \right) \bg, \bx \right>.\label{LSR-TGLM_vec}
\end{align}
The sum of Kronecker-structured matrices in \eqref{LSR-TGLM_vec} is due to the LSR-TGLM model in~\eqref{LSR-TGLM}.  We can now formally define the two goals that are at the core of this work: 1) Parameter estimation for LSR-TGLM and 2) Minimax lower bound on the estimation error. One of the benefits of the requirement of a common core tensor is an intuitive reasoning pertaining to the role of the core tensor in the LSR-structured tensor. If one was to view $\mathbf{\underline{G}}$ as a `weight tensor', the same $\mathbf{\underline{G}}$ implies that the $S$ groups of factor matrices must be given the same weight. We also emphasise here that the conceptual novelty of our work resides not in introducing the concept of LSR-structured tensors, but in our unique application of the LSR decomposition to the GLM model and our comprehensive analysis of the resulting GLM, which is non-trivial.

\subsection{Parameter Estimation for LSR-TGLM}\label{sec:paramest}

The objective in regression theory is to predict an outcome $y$ based on $\tnsrX$, which is achieved by first estimating the model parameter. For LSR-TGLMs, we wish to find an estimate of $\tnsrB$ that best fits the model in \eqref{LSR-TGLM}. The underlying (true) $\tnsrB$ is a low-rank LSR-structured tensor belonging to a constraint set $\cC$:
\begin{align}
    \cC \triangleq &\left\{\tnsrB \in \R^{m_1 \times \dots \times m_K} : \tnsrB \in \cP_{\{r_k\},S}, r_k \leq R_K ,S\leq \min\left\{\prod_k m_k, \prod_k r_k\right\}, k\in[K] \right\}.
\end{align}
This is the set of all LSR-structured tensors with constrained tensor rank tuple $(r_1, r_2, \dots, r_K)$ and constrained separation rank $S$. Now consider the space of tensor-structured covariates $\X \subset \R^{m_1\times m_2\times\dots\times m_K}$, and space of scalar observations $\Y \subset \R$. There exists a probability measure, denoted as $\bbP_{\bx y}$, that allows a learning procedure to randomly draw points $\{\tnsrX \in \X, y \in \Y\}$ from the product space $\X\times\Y$. We do not know $\bbP_{\bx y}$, but we have access to $n$ independently sampled observations $\{\tnsrX_i,y_i\}_{i=1}^n$. Therefore, we find an estimate of $\tnsrB$ through Maximum Likelihood Estimation (MLE), making the the objective function the negative log-likelihood:
\begin{align}
    \cL_n(\tnsrB) = \sum_{i =1}^n \log(b(y_i)) + \sum_{i=1}^n \left(\langle\tnsrB,\tnsrX_i\rangle T(y_i) - a(\langle\tnsrB,\tnsrX_i\rangle)\right). \label{constrained_obj_fnt}
\end{align}
To find the MLE, we minimise \eqref{constrained_obj_fnt} over the constraint set $\cC$:
\begin{align}
    \underset{\tnsrB \in \cC}{\arg\min} ~  \sum_{i =1}^n \log(b(y_i)) + \sum_{i=1}^n \left(\langle\tnsrB,\tnsrX_i\rangle T(y_i) - a(\langle\tnsrB,\tnsrX_i\rangle)\right).\label{constrained_opt_prob}
\end{align}
The vectorized LSR-structured tensor uses a sum of $S$ Kronecker-structured matrices. Lemma~$1$ from \citet{ghassemi2019learning} shows that there is a $1$-to-$1$ mapping between Kronecker-structured matrices with separation rank $S$ and a set of rank $S$ tensors. Finding the rank $r$ of a tensor is NP-hard, \citep{haastad1989tensor}, and thus so is finding the separation rank of a Kronecker-structured matrix. Therefore, in the context of our work on LSR-TGLM, finding the LSR rank $S$ is NP-hard, and the problem in \eqref{constrained_opt_prob} is therefore intractable. To mitigate this issue we first assume that the tensor rank $(r_1,r_2,\dots,r_K)$ and separation rank $S$ are known and we solve the following factorised problem for parameter estimation for LSR-TGLMs:
\begin{align}
    \underset{\{\Bks{k}{s}\}, \tnsrG}{\arg\min} &= \sum_{i =1}^n \log(b(y_i)) 
        + \sum_{i=1}^n\left< \sum_{s = 1}^S \tnsrG \times_{[K]} \bB_k,\tnsrX_i\right> T(y_i) \label{lsr_opt_prob} - a\left(\left< \sum_{s = 1}^S \tnsrG \times_{[K]} \bB_k,\tnsrX_i\right>\right). \\ 
    &\nonumber \qquad ~\text{subject to} ~\Bks{k}{s} \in \bbO^{m_k \times r_k}. 
\end{align}
The expression in \eqref{lsr_opt_prob} provides a tractable relaxation for \eqref{constrained_opt_prob}, where the coefficient tensor is explicitly written in terms of the core tensor $\tnsrG$ and factor matrices $\Bks{k}{s}$ of the low-rank LSR structure. In this work, we will study the problem in~\eqref{lsr_opt_prob}. 

\subsection{Minimax Lower Bound for LSR-TGLM}
Our second goal is to derive a lower bound on the minimax risk of estimating LSR-structured coefficient tensors for the LSR-TGLM problem in~\eqref{LSR-TGLM}. Minimax bounds can be useful tools in developing an insight into the parameters on which an achievable error of a given problem might depend and shed light on the benefits of imposing tensor structures in regression problems. They also provide a means of quantifying the performance of existing algorithms. Previous studies of tensor-structured GLMs \citep{zhou2013tensor, li2018tucker} fall short of providing a sample complexity analysis. The analysis in this work is thus instrumental in revealing the potential benefits of imposing structure on the coefficient tensor within the GLM context. We will show that the LSR model exhibits a lower sample complexity as compared to the vector case, and provides an expressive representation of tensor data.

We adopt a local analysis and assume that the LSR-TGLM's underlying (true) $\tnsrB$ resides within a neighbourhood with known radius around a fixed point. However, for a sufficiently large neighborhood, the minimax lower bounds derived in this work effectively become independent of the radius. Therefore we assume that for a fixed tensor rank $(r_1,r_2,\dots,r_K)$ and separation rank $S$, the underlying $\tnsrB$ belongs to the set
\begin{align}
    \cB_d(\mathbf{\underline{0}}) \triangleq \{\tnsrB' \in \cP_{\{r_k\},S} : \rho(\tnsrB',\mathbf{\underline{0}}) < d \},\label{set_Bd}
\end{align}
the ball of radius $d$ with distance metric $\rho = \norm{\cdot}_F$, which resides in the parameter space $\cP_{\{r_k\},S}$ defined in \eqref{param_space}. Thus $\cB_d(\mathbf{0}) \subset \cP_{\{r_k\},S} $ and $\tnsrB$ has energy bounded by $\norm{\tnsrB}_F^2< d^2$. Note that we fix the reference point as the tensor of all zero-elements $\mathbf{\underline{0}}$ without loss of generality. Indeed, any neighbourhood $\cB_d(\tnsrA)$ around a point $\tnsrA \in \cP_{\{r_k\},S}$ is just a translation from $\cB_d(\mathbf{\underline{0}})$ with known distance $\norm{\tnsrA}_F$. Additionally, we note that the point $\mathbf{\underline{0}}$ also belongs to the parameter space $\cP_{\{r_k\},S}$. The minimax risk is defined as the minimum worst-case behaviour for any estimator. Mathematically, it is expressed as follows.
\begin{align}
    \varepsilon^* = \underset{\widehat{\tnsrB}}{\inf} \underset{\tnsrB \in 
    \cB_d(\mathbf{\underline{0}})}{\sup} \E_{\by,\mathcal{X}} \biggl\{\phi(\widehat{\tnsrB},\tnsrB) \biggr\}.
\end{align}
Here, $\widehat{\tnsrB}$ denotes an estimator of $\tnsrB$, $\by = [y_1, y_2, \dots, y_n]$, $\mathcal{X} = [\tnsrX_1, \tnsrX_2, \dots, \tnsrX_n]$, and $\phi$ is a function with $\phi(0) = 0$. If we define $\phi =\norm{\cdot}_F^2  \triangleq \R_+ \rightarrow \R_+$ with $\phi(0) = 0$ then the minimax risk is simply the worst-case Mean Squared Error (MSE) for the best estimator, i.e.,
\begin{align}
    \varepsilon^* = \underset{\widehat{\tnsrB}}{\inf} \underset{\tnsrB \in 
    \cB_d(\mathbf{\underline{0}})}{\sup} \E_{\by,\mathcal{X}} \biggl\{\norm{\widehat{\tnsrB} - \tnsrB}_F^2 \biggr\}. \label{minimax_risk}
\end{align}
Proving a lower bound $\varepsilon^* > \varepsilon_0$ on the minimax risk shows that any estimator must have a risk lower bounded by $\varepsilon_0$. Existing minimax bounds on the parameter estimation problem in GLMs or regression models provided in the literature~\citep{abramovich2016model,lee2020minimax,raskutti2011minimax} cannot be applied here for two reasons. Primarily, these bounds do not account for the impact of the structural assumptions we make on our model. In fact, we require bounds that accurately reflect the sample complexity of estimation algorithms for LSR-structured coefficient tensors. Secondly, the link function in GLMs also involves the analysis of the space of LSR-structured coefficients, making this part of our analysis non-trivial and fundamentally different to such works. We elaborate this point further in Section~\ref{proof}. The derived minimax bound in this section conveniently generalizes the CP and Tucker tensor structures and can be specialised to existing bounds in the literature such as that for Tucker-structured linear regression \citep{zhang2020islet}. 

We now introduce some standard lemmas and assumptions used in this work.
\begin{assumption}[Covariate Distribution]\label{cov_dist}
    For $\tnsrX \in \R^{m_1\times\dots\times m_K}$, define $\tvec(\tnsrX)\triangleq \bx$ and $\wtm = \underset{k \in [K]}{\prod} m_k$. Then, $\bx \sim \cN(\mathbf{0}, \bSigma_x)$, and thus $\E[\bx] = \mathbf{0}$ and $\E[\bx\bx^T] = \mathbf{\Sigma}_x$.
\end{assumption}

\begin{lemma}[\citep{mccullagh2019generalized}]
    Any observation $y$ generated according to a distribution from the exponential family has mean $a'(\eta)$, i.e., the first derivative of $a(\eta)$, and variance $a''(\eta)$, i.e., the second derivative of $a(\eta)$.
\end{lemma}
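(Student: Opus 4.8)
The plan is to exploit the fact that $\bbP(y,\eta)$ in~\eqref{exp_dist} is a normalized probability distribution, so that the log-partition function $a(\eta)$ is exactly the cumulant generating function of the sufficient statistic $T(y)$. The whole argument is a two-step differentiation of the normalization identity under the integral sign, and I expect the only real subtlety to lie in justifying that interchange. I would prove the moment identities for $T(y)$ in general and then note that they reduce to the stated mean and variance of $y$ in the canonical case $T(y)=y$, which is the relevant one for the linear/logistic/Poisson GLMs discussed after Definition~\ref{vector_glm}.

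First I would start from the requirement that the density integrate (or, in the discrete case, sum) to one over the support $H$ of $y$:
\begin{align}
\int_H b(y)\exp\bigl(\eta T(y) - a(\eta)\bigr)\,dy = 1,
\end{align}
which rearranges to $\exp(a(\eta)) = \int_H b(y)\exp(\eta T(y))\,dy$. This already displays $a(\eta)$ as the logarithm of a Laplace-type transform of $b(\cdot)$, i.e. the cumulant generating function, and signals that its derivatives should return cumulants of $T(y)$.

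Next I would differentiate both sides with respect to $\eta$, moving the derivative inside the integral. The left-hand side produces $a'(\eta)\exp(a(\eta))$, while the right-hand side produces $\int_H T(y)\,b(y)\exp(\eta T(y))\,dy$. Dividing through by $\exp(a(\eta))$ reconstitutes the density inside the integrand, giving
\begin{align}
a'(\eta) = \int_H T(y)\,b(y)\exp\bigl(\eta T(y) - a(\eta)\bigr)\,dy = \E[T(y)],
\end{align}
which establishes the mean. Differentiating a second time in the same fashion yields $\bigl(a''(\eta) + (a'(\eta))^2\bigr)\exp(a(\eta))$ on the left and $\int_H (T(y))^2\,b(y)\exp(\eta T(y))\,dy$ on the right; dividing by $\exp(a(\eta))$ gives $a''(\eta) + (a'(\eta))^2 = \E[(T(y))^2]$. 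Subtracting the square of the first-moment identity then gives $a''(\eta) = \E[(T(y))^2] - (\E[T(y)])^2 = \mathrm{Var}[T(y)]$, establishing the variance.

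The only delicate point, and the step I expect to be the main obstacle, is the legitimacy of differentiating twice under the integral sign. This is a standard regularity property of exponential families: on the interior of the natural parameter space the integral defining $\exp(a(\eta))$ is an analytic function of $\eta$, and dominated convergence (with the exponential integrand furnishing the needed dominating function on compact neighborhoods of $\eta$) licenses passing the derivative through the integral. Granting this, the remainder is the elementary differentiation above, and specializing to $T(y)=y$ recovers the statement exactly as phrased in the lemma.
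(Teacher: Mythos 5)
Your proof is correct and is exactly the standard cumulant-generating-function argument (differentiating the normalization identity $\exp(a(\eta)) = \int_H b(y)\exp(\eta T(y))\,dy$ twice under the integral sign), which is the derivation given in the cited reference; the paper itself states this lemma without proof, attributing it to \citet{mccullagh2019generalized}. Your added care in noting that the statement as phrased presumes the canonical case $T(y)=y$ is a fair and accurate observation, and the regularity justification for interchanging differentiation and integration is handled appropriately.
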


\begin{assumption}\label{derivative_bound}
    The first derivative of the cumulant function, $a(\eta)$, with respect to $\eta$ is bounded uniformly by a constant $M \geq 0 $: $a'(\eta) \leq M$.
\end{assumption}

\begin{lemma}\label{gradient}
    Consider the standard GLM problem in Definition~\ref{vector_glm} with negative log-likelihood function
    \begin{align}
        \sum_{i =1}^n \log(b(y_i)) + \sum_{i=1}^n \left(\langle\bb,\bx_i\rangle^T T(y_i) - a(\langle\bb,\bx_i\rangle)\right).\label{nnl}
    \end{align}
    The gradient of \eqref{nnl} with respect to $\bb$ is $\sum_{i=1}^n \left(T(y_i) - g^{-1}(\ip{\bb}{\bx_i})\right) \bx_i$.
\end{lemma}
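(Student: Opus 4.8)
My plan is to treat this as a direct gradient computation, where the only conceptual step is identifying the derivative of the cumulant function with the inverse link. First I would observe that the term $\sum_{i=1}^n \log(b(y_i))$ in \eqref{nnl} does not depend on $\bb$, so it drops out upon differentiation and contributes nothing to the gradient. This isolates the second sum as the only object requiring attention.

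Next I would introduce the shorthand $\eta_i \triangleq \langle \bb, \bx_i\rangle$ and note that, since $\eta_i$ is linear in $\bb$, we have $\nabla_{\bb}\, \eta_i = \bx_i$. Differentiating the summand $\eta_i\, T(y_i) - a(\eta_i)$ term by term and applying the chain rule to the cumulant term yields
\begin{align}
    \nabla_{\bb}\bigl(\eta_i\, T(y_i) - a(\eta_i)\bigr) = T(y_i)\,\bx_i - a'(\eta_i)\,\bx_i = \bigl(T(y_i) - a'(\eta_i)\bigr)\,\bx_i,
\end{align}
so that summing over $i$ gives $\sum_{i=1}^n \bigl(T(y_i) - a'(\langle\bb,\bx_i\rangle)\bigr)\,\bx_i$.

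It remains to rewrite $a'(\eta_i)$ in terms of the link function, and this is the one step where I would invoke the structure of GLMs rather than pure calculus. By Lemma~1, the mean of the response satisfies $\mu = a'(\eta)$, while the GLM link relation \eqref{link_fnt} gives $g(\mu) = \eta$, equivalently $\mu = g^{-1}(\eta)$. Combining these two identities yields $a'(\eta) = g^{-1}(\eta)$ for every admissible $\eta$, and in particular $a'(\langle\bb,\bx_i\rangle) = g^{-1}(\langle\bb,\bx_i\rangle)$. Substituting this into the expression above produces the claimed gradient $\sum_{i=1}^n \bigl(T(y_i) - g^{-1}(\langle\bb,\bx_i\rangle)\bigr)\,\bx_i$, completing the argument. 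I do not anticipate a genuine obstacle here: the computation is elementary, and the only point requiring care is the identification $a' = g^{-1}$, which follows immediately from Lemma~1 together with the definition of the link function rather than from any further estimate.
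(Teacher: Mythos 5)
Your proof is correct and is exactly the standard computation the paper defers to: the paper gives no explicit derivation, stating only that the proof ``follows the same steps explained in \citet{mccullagh2019generalized},'' and those steps are precisely your three moves (drop the $\log b(y_i)$ term, apply the chain rule to $a(\langle\bb,\bx_i\rangle)$, and identify $a'(\eta) = \mu = g^{-1}(\eta)$ via Lemma~1 and \eqref{link_fnt}). The only point worth flagging is that the identification $a' = g^{-1}$ tacitly assumes the canonical setting in which $T(y)=y$ so that $\E[T(y)\mid\bx] = \mu$ --- but this is the same implicit assumption the paper itself makes in Definition~\ref{vector_glm} and Lemma~1, so your argument is faithful to the paper's framework.
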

The proof of Lemma~\ref{gradient} follows the same steps explained in~\citet{mccullagh2019generalized}. Assumption~\ref{cov_dist} states that $\tvec(\tnsrX)\triangleq \bx$ is a zero-mean Gaussian random variable with covariance matrix $\bSigma_x$. 
We do not place any further assumptions on $\bSigma_x$.
Assumption~\ref{derivative_bound} implies that the mean of any GLM observation $y$ is bounded. This is a common assumption made in the literature~\citep{lee2020minimax}. For binary logistic regression, e.g., this assumption is satisfied with $M =1$. For linear or Poisson regression, this assumption implies that the energy of $y$ is bounded. Though the range of $y$ is $\R$ and $\N$ for linear and Poisson regression, respectively, any outcome can be bounded by a positive constant $M$ with high probability in most instances.



\section{Estimation Problem and Algorithm}\label{estimation}
To solve \eqref{lsr_opt_prob} we propose an approach similar to those found in prior works \citep{li2018tucker, zhou2013tensor, zhang2016decomposition, tan2012logistic}. The main idea behind this approach is to recognise that although the objective function in \eqref{lsr_opt_prob} is non-convex in all elements $\tnsrG$ and $\{\Bks{k}{s}\}_{k\in[K], s \in [S]}$ jointly, it is convex with respect to each element separately. In this work, we propose a Block Coordinate Descent (BCD) algorithm that estimates each element of the LSR structured tensor $\tnsrB$ alone while holding all other elements constant. We note for the reader that BCD is simply Alternating Minimisation (AM) when there are only two factors to be estimated. Additionally, one may use any solver to solve each convex sub-problem to estimate each element; however, we choose to solve it via gradient steps in this work. First, for every $k' \in [K]$ and $s' \in [S]$, we estimate the factor matrix $\bB_{k',s'}$ while keeping all other factor matrices $\{\bB_{(k,s)}\}_{k \in [K] \setminus k', s\in [S]\setminus s'}$ and core tensor $\tnsrG$ fixed. Secondly, we estimate the core tensor $\tnsrG$ while keeping all factor matrices $\{\bB_{(k,s}\}_{k \in [K] , s\in [S]}$ fixed. We also point out that though our BCD approach is similar to some existing works, in this work we also show how each sub-problem reduces to a smaller scale GLM problem with fewer learnable parameters. We also evaluate the computational complexity of each sub-problem. We derive the optimisation problem for each step of the algorithm below.

\subsection{Estimating Factor Matrices}
To estimate the factor matrices we perform gradient descent over our objective function and project each iterate onto the Stiefel manifold, with the objective function corresponding to the block $\Bks{k'}{s'}$ given by:

\begin{align}
    \label{opt_prob_isolate_Bks}
    &\cL_n\left(\Bks{k'}{s'}\right)\\
    &\nonumber\quad= \sum_{i=1}^n 
        \left(\left<\bB_{(k',s')}, 
            \bX_{i_{(k')}} \left(\bigotimes_{-[K]\setminus k'}\Bks{k}{s'}\right)\bG_{(k')}^T
            \right> 
            + \sum_{s\in[S] \setminus s'}\left<\tnsrG \times_{[K]} \Bks{k}{s}, \tnsrX_i\right>\right)T(y_i) \\ 
    & \nonumber \qquad - a\left(\left<\Bks{k'}{s'}, \bX_{i_{(k')}}\left(\bigotimes_{-[K]\setminus k'}\Bks{k}{s'}\right)\bG_{(k')}^T\right> + \sum_{s\in[S] \setminus s'}\left<\tnsrG \times_{[K]} \Bks{k}{s}, \tnsrX_i\right>\right). 
\end{align}
In order to derive the expression in \eqref{opt_prob_isolate_Bks} for every $\bB_{(k',s')}$ for $k' \in [K]$ and $s' \in [S]$, notice that:
\begin{align}
    \nonumber \left<\tnsrB, \tnsrX\right> &= \left< \tnsrG \times_1 \Bks{1}{s'} \times_2 \dots \times_K \Bks{K}{s'}, \tnsrX\right> + \sum_{s\in[S]\setminus s'}\left<\tnsrG \times_1 \Bks{1}{s} \times_2 \dots \times_K \Bks{K}{s}, \tnsrX\right> \\
    \label{link_isolate_Bks} & = \left<\Bks{k'}{s'}, \bX_{(k')}\left(\Bks{K}{s'} \otimes \dots \otimes \Bks{k'+1}{s'}\otimes \Bks{k'-1}{s'}\otimes\dots\otimes\Bks{1}{s'}\right)\bG_{(k')}^T\right> \\ \nonumber & \qquad + \sum_{s\in[S] \setminus s'}\left<\tnsrG \times_1 \Bks{1}{s} \times_2 \dots \times_K \Bks{K}{s}, \tnsrX\right>.
\end{align}
Now we define the following notations in order to keep the expressions in \eqref{link_isolate_Bks} concise. From the first summand in \eqref{link_isolate_Bks} we define
\begin{align}
    \bomega_{(k',s')} =  \tvec\left(\bX_{(k')}\left(\Bks{K}{s'} \otimes \dots \otimes \Bks{k'+1}{s'}\otimes \Bks{k'-1}{s'}\otimes\dots\otimes\Bks{1}{s'}\right)\bG_{(k')}^T\right),
\end{align}
and from the second summand in \eqref{link_isolate_Bks} we define
\begin{align}
    \gamma_{(k',s')} = \sum_{s\in[S] \setminus s'}\left<\tnsrG \times_1 \Bks{1}{s} \times_2 \dots \times_K \Bks{K}{s}, \tnsrX\right>.
\end{align}
We can then express \eqref{link_isolate_Bks} as
\begin{align}
    \left<\tnsrB,\tnsrX\right> &= \left<\tvec(\Bks{k'}{s'},\bomega_{(k',s')})\right> + \gamma_{(k',s')} =  \left<\left[\tvec(\Bks{k'}{s'}),1\right],\left[\bomega_{(k',s')},\gamma_{(k',s')}\right]\right>.
\end{align}
By defining $\wtbx \triangleq \left[\bomega_{(k',s')},\gamma_{(k',s')}\right]$, we rewrite \eqref{opt_prob_isolate_Bks} as
\begin{align}
    \underset{\Bks{k'}{s'} \in \bbO^{m_{k'} \times r_{k'}}}{\argmin} \sum_{i=1}^n \left(\left<\left[\tvec(\Bks{k'}{s'}),1\right],\wtbx_i\right>\right) T(y_i) - a\left(\left<\left[\tvec(\Bks{k'}{s'}),1\right],\wtbx_i\right>\right).
\end{align}
The most important insight here is that the resulting problem can be viewed as a parameter estimation problem for GLMs with $\Bks{k'}{s'}$ as the `parameter' and $\wtbx_i$ as the `predictor' or structured covariates. Estimating $\Bks{k'}{s'}$ alone in particular results in a low-dimensional problem with $m_{k'}r_{k'}$ parameters. We solve \eqref{opt_prob_isolate_Bks} via projected gradient descent. Using the above defined notations, the gradient of $\cL_n(\Bks{k'}{s'})$ with respect to $\Bks{k'}{s'}$ is
\begin{align}
    \frac{\partial \cL_{n}}{\partial \tvec(\bB_{k',s'})}  = \sum_{i =1}^{n} \left(T(y_i) - g^{-1}\left(\left<\left[\tvec(\Bks{k'}{s'}),1\right],\wtbx_i\right>\right)\right)\wtbx_i, \label{grad_Bks}
\end{align}
and the projection operator $\cH : \R^{m_{k'} \times r_{k'}} \longrightarrow \bbO^{m_{k'} \times r_{k'}}$  that projects the obtained iterate onto the manifold of orthogonal matrices is defined as 
\begin{align}
    \cH(\Bks{k'}{s'}) \triangleq \argmin_{\widehat{\bB} \in \bbO^{m_{k'}\times r_{k'}}} \norm{\widehat{\bB} - \Bks{k'}{s'}}_F^2.\label{projection_step}
\end{align}
The solution to the problem in~\eqref{projection_step} is simply obtained using the QR decomposition of $\Bks{k'}{s'}$. 
\subsection{Estimating the Core Tensor}
For core tensor $\tnsrG$, we are only required to perform gradient descent. The optimisation problem is 
\begin{align}
    \underset{\tnsrG}{\argmin} ~\cL_n\left(\tnsrG\right), ~\text{where}\label{opt_prob_isolate_G}
\end{align}
\begin{align}
    \nonumber \cL_n\left(\tnsrG\right)&= \sum_{i=1}^n \left(\left<\bg,\sum_{s\in[S]}\left(\bigotimes_{-[K]}\Bks{k}{s}\right)^T \bx_i\right>\right)T(y_i) - a\left(\left<\bg,\sum_{s\in[S]}\left(\bigotimes_{-[K]}\Bks{k}{s}\right)^T \bx_i\right>\right). 
\end{align}
In order to see the derivative of the expression in \eqref{opt_prob_isolate_G}, notice that,
\begin{align}
    \left<\tnsrB,\tnsrX\right>  = \left<\tvec(\tnsrB),\tvec(\tnsrX)\right>&= \sum_{s \in [S]}  \left< \left( \bigotimes_{-[K]} \Bks{k}{s}\right)\bg, \bx\right>  = \left<\bg,\sum_{s\in[S]}\left(\bigotimes_{-[K]}\Bks{k}{s}\right)^T \bx\right>. \label{link_isolate_G}
\end{align}
With a slight overload of notation we also define the following: 
\begin{align}
    \wtbx =  \sum_{s\in[S]}\left( \Bks{K}{s} \otimes \dots \otimes \Bks{1}{s}\right)^T \bx,
\end{align}
and further express \eqref{link_isolate_G} and \eqref{opt_prob_isolate_G} as
\begin{align}
    \left<\tnsrB,\tnsrX\right> = \left<\bg, \wtbx \right>
\end{align}
and 
\begin{align}
    \underset{\tnsrG}{\argmin} \sum_{i=1}^n \left(\left<\bg,\wtbx_i\right>\right)T(y_i) - a\left(\left<\bg,\wtbx_i\right>\right),
\end{align}
respectively. Once again, the resulting problem can be viewed as a parameter estimation problem for GLMs with $\tnsrG$ as the `parameter' and $\wtbx_i$ as the `predictor' or structured covariates. Estimating $\tnsrG$ alone results in a low-dimensional problem with $\underset{k\in[K]}{\prod} r_k$ parameters. 
We solve \eqref{opt_prob_isolate_G} via gradient descent, where the gradient of $\cL_n(\tnsrG)$ with respect to $\tnsrG$ is
\begin{align}
    \frac{\partial \cL_{n}}{\partial \text{vec}(\tnsrG)}  = \sum_{i =1}^{n} \left(T(y_i) - g^{-1}\left(\left<\bg,\wtbx_i\right> \right)\right)\wtbx_i. \label{grad_G}
\end{align}

\subsection{Final Algorithm: LSRTR}

We summarise the procedure discussed above in Algorithm~\ref{algo} and we name our algorithm Low Separation Rank Tensor Regression (LSRTR). We also show the prediction procedure performed using the estimated coefficient tensor in Algorithm~\ref{pred_algo}, where we calculate the mean $\E[y|\tnsrX]$ of the posterior distribution based on the estimated coefficient tensor $\tnsrB$ from Algorithm~\ref{algo}. The posterior mean allows us to make predictions for an observation $y$ and report confidence probabilities. Note that the convergence of BCD on non-convex problems is a function of the initialisation. In this work, we initialise LSRTR randomly on the constraint set. That is: an LSR-structured tensor with $(KS)$ orthogonal factor matrices, randomly generated on the Stiefel manifold, and a core tensor with random Gaussian entries. We also note that though BCD algorithms such as LSRTR are popular amongst tensor-structured regression works and have been shown to be effective in practice \citep{zhou2013tensor, tan2012logistic, li2018tucker, zhang2016decomposition}, BCD algorithms in prior tensor-structured GLM works do not explicitly exploit the coefficient tensor's Kronecker structure that appears upon vectorization. Keeping a common core tensor and maintaining the LSR matrix structure in \eqref{LSR-TGLM_vec} allows us to explore other parameter estimation algorithms that exploit the Kronecker matrix structure, similar to those in existing dictionary learning works \citep{ghassemi2019learning}. However, we leave this for future work.

We next provide a brief discussion on the per-iteration computational complexity of each sub-problem of LSRTR. For factor matrix $\Bks{k}{s}$, each gradient step has complexity of order $\mathcal{O}(m_k r_k)$, and the QR projection step has complexity of order $\mathcal{O}((m_k r_k)^3)$. Thus the per-iteration computational complexity of estimating a factor matrix is $\mathcal{O}((m_k r_k)^3)$. Since there is no projection when estimating the core tensor $\mathbf{\underline{G}}$, its per-iteration sample complexity is just $\mathcal{O}(\prod_{k}r_k)$.
\begin{algorithm}
\caption{LSRTR: A block coordinate descent algorithm for LSR-TGLMs}\label{algo}
\begin{algorithmic}[1]
    \State{\textbf{Input:} $n$ training samples $\{\tnsrX_i,y_i\}_{i=1}^n$, step size $\alpha$, separation rank $S$, tensor rank $(r_1, r_2,\dots,r_K)$.}
    \State{\textbf{Initialise:} Factor matrices $\Bks{k}{s}^{0} ~\forall~ k \in[K], s \in[S]$, core tensor $\tnsrG^{0}$ and $t \leftarrow 0$.}
    \Repeat:
    \For{$s' \in [S]$}
    \For{$k' \in [K]$}
    \State{$\tildBks{k'}{s'}^{(t)}\gets \tvec\left(\Bks{k'}{s'}^{(t)}\right) - \alpha\sum_{i=1}^n\left(T(y_i) - g^{-1}\left(\left<\left[\tvec(\Bks{k'}{s'}),1\right],\wtbx_i\right>\right)\right)\wtbx_i$}
    \State{ $\Bks{k'}{s'}^{(t+1)} \gets \cH\left(\tildBks{k'}{s'}^{(t)}\right)$}
    \EndFor
    \EndFor
    \State{$\widetilde{\tnsrG}^{(t+1)} \gets \tvec(\tnsrG^{(t)}) - \alpha \sum_{i =1}^{n} \left(T(y_i) - g^{-1}\left(\left<\bg,\wtbx_i\right> \right)\right)\wtbx_i$}
    \State{$t \gets t+1$}
    \Until{convergence} \\
    \Return{$\widehat{\tnsrB} \gets \sum_{s\in[S]} \tnsrG^{(t)} \times_1 \Bks{1}{s}^{(t)} \times_2 \dots \times_K \Bks{K}{s}^{(t)}$}
\end{algorithmic}
\end{algorithm}

\begin{algorithm}
\caption{Posterior prediction for LSR-TGLMs}\label{pred_algo}
\hspace*{\algorithmicindent} {\textbf{Input} Estimate $\widehat{\tnsrB} \in \R^{m_1 \times \dots \times m_K}$ and $n_{te}$ test data points $\{\tnsrX_i\}_{i=1}^{n_{te}}$}\\
    \hspace*{\algorithmicindent} {\textbf{Output} Expectation $\widehat{\bmu} = [\E[y_1|\tnsrX_1], \E[y_2|\tnsrX_2] \dots \E[y_{n_{te}}|\tnsrX_{n_{te}}]$}
\begin{algorithmic}[1]
    \State{Define: $\underline{\cX} \triangleq \left[\tnsrX_1, \tnsrX_2, \dots, \tnsrX_{n_{te}}\right]$}
    \State{Compute $\widehat{\bmu}$ for input $\underline{\cX}$ as: $\widehat{\bmu} = g^{-1}(\langle\widehat{\tnsrB}, \underline{\cX} \rangle)$}
    \\
    \Return{$\widehat{\bmu}$}
\end{algorithmic}
\end{algorithm}

In this work, we do not provide convergence guarantees for LSRTR; this is a non-trivial task that we defer to future work. The main challenge in proving convergence is the constraint set in LSRTR, which is a product space of Stiefel manifolds and is neither closed nor convex. The non-convexity of the constraint space makes a convergence analysis for LSRTR difficult since we cannot apply general results for the convergence of BCD \citep{bertsekas1997nonlinear}. Therefore, one can at best expect LSRTR to converge to a local minimum. The non-convexity also presents challenges for analysing projections, in particular when using the QR decomposition. More specifically, projection operators in projected gradient methods must be non-expansive in order to prevent potential amplification of the estimation error. Since the general method of proving the non-expansiveness of a projection also assumes a closed and convex constraint set -- which the Stiefel manifold is not \citep{bertsekas2009convex} -- demonstrating the non-expansiveness of QR projection in the general sense is also non-trivial. In many works, however, the QR decomposition is a common tool for projecting onto and/or optimising over the Stiefel manifold \citep{absil2008optimization}, yet proving stronger guarantees on optimisation over Stiefel manifold is a larger and non-trivial problem. Therefore, we leave the theoretical analysis of Algorithm~\ref{algo} as a significant direction for future works. However, we make some promising remarks on its practical performance, convergence, and projection behaviour. Specifically in Section~\ref{numerical_study} we assess the performance of our proposed algorithm on various GLM problems for $2$-dimensional and $3$-dimensional synthetic data. Moreover, in Section~\ref{real_data}, we use the LSR-TGLM model and the proposed LSRTR algorithm to solve classification problems on several real-world medical imaging datasets. 


\section{Numerical Study and Experiments on Medical Imaging Data}\label{numerical_study}
In this section we provide a comprehensive numerical study in order to assess the efficacy of the LSR-TGLM regression model and the corresponding proposed LSRTR algorithm. The objectives of this study are three-fold. First, we investigate the performance of our algorithm, particularly: $1)$ The performance of our proposed approach (parameter estimation and prediction) against a substantial growth in number of parameters, and $2)$ The performance of our algorithm, particularly when faced with a large increase in sample size. Secondly, we compare the estimation and prediction accuracy gains of the LSR model to current regression models and algorithms in the literature on synthetic data \citep{zhou2013tensor, li2018tucker, zhang2020islet, mccullagh2019generalized,seber2003linear}. Thirdly, we assess the performance of the LSR-TGLM model on classification problems for medical imaging datasets.

\subsection{Experiments on Synthetic Data}\label{synthetic_experiments}
We begin with experiments on synthetic data for three GLM problems: linear regression, logistic regression and Poisson regression. The purpose of the experiments is to answer the following questions:
\begin{enumerate}
    \item When the underlying coefficient tensor has an LSR structure, does the proposed LSRTR algorithm lead to reliable parameter estimation? Specifically, does our algorithm achieve a reduction in sample complexity and estimation error compared to vector-based methods (for a fixed sample size)?  
    \item Does our proposed algorithm: $1)$ provide reliable parameter estimation and prediction against a substantial growth in number of parameters and $2)$ have per-iteration computation time comparable to other tensor-based methods, when faced with a large increase in sample size?
    \item Empirical convergence behaviour: does the proposed algorithm converge to a stationary point?
\end{enumerate}

In our experiments we compute and report the following:
\begin{enumerate}
    \item A coefficient tensor $\widehat{\tnsrB}$ estimated through the following learning methods: $(i)$ The LSRTR method (Algorithm~\ref{algo}), $(ii)$ Low-rank Tucker model methods, specifically, a `block relaxation' algorithm for parameter estimation of Tucker-structured GLMs proposed by \citet{li2018tucker} (we will call this procedure TTR), and a similar iterative procedure for logistic Tucker regression (LTuR) with Frobenius norm regularization proposed by \citet{zhang2016decomposition}, $(iii)$ unstructured (vector-based) methods for regression. In the case of $(iii)$ we use Least Squares (LS) for linear regression, a first-order method we name LR \citep{seber2003linear} for logistic regression, and a GLM fitting algorithm we name PR, with log link function from \citet{mccullagh2019generalized} for Poisson regression. The performance of each method is evaluated via the normalised estimation error defined as $\frac{\norm{\tnsrB -\widehat{\tnsrB}}_F^2}{\norm{\tnsrB}_F^2}$.
    \item A predicted response vector $\whby$. The prediction accuracy for linear, logistic and Poisson regression is evaluated via the normalised squared error defined as $\frac{\norm{\whby -\by}_2^2}{\norm{\by}_2^2}$, the Mean Absolute Error (MAE) defined as $\frac{\norm{\whby - \by}_1}{n_{te}}$ for $n_{te}$ testing samples, and the normalised squared logarithmic error defined as $\frac{\norm{\log(\whby +1 ) -\log(\by+1)}_2^2}{\norm{\log(\by+1)}_2^2}$, respectively.
    \item The magnitude of the computed gradient of the loss function, denoted as $\norm{\triangledown \cL(\cdot)}_2$, at the final iteration of every sub-problem in LSRTR.
\end{enumerate}
Since we are reporting normalised estimation error, we only consider errors below $1$. Indeed, the normalised error compares the performance of an estimator relative to the `trivial estimator' that always outputs $0$. A normalised error of $1$ occurs when the estimate is $0$, yet an algorithm can potentially perform worse than the trivial estimator, allowing an error greater than $1$. Such an error is insubstantial and we disregard it when analysing the performance of models and algorithms.

\subsubsection{Experimental Setup}

In our experiments we generate an $(r_1,r_2,\dots,r_K)$-rank LSR-structured coefficient tensor $\tnsrB\in\R^{m_1\times\dots\times m_K}$ with separation rank $S$ in \eqref{lsr_model} as follows. The entries of the core tensor $\tnsrG$ are sampled from a Gaussian$(0,\frac{1}{r_1r_2\dots r_K})$ distribution. Each factor matrix $\Bks{k}{s}$ is constructed by first constructing matrix $\bA_{(k,s)}\in \R^{m_k\times r_k}$ with independent standard normal entries. The QR decomposition is then applied to $\bA_{(k,s)}$ in order to obtain the orthonormal matrix $\widetilde{\bA}_{(k,s)}\in \bbO^{m_k\times m_k}$. The first $r_k$ columns are then extracted from $\widetilde{\bA}_{(k,s)}$ in order to obtain $\Bks{k}{s}$. With the above construction, we have $\norm{\tnsrB}_F^2 \leq S^2\wtr \norm{\tnsrG}_F^2$ (we derive a more precise bound on $\norm{\tnsrB}_F^2$ in Section~\ref{proof}). We also generate $n$ data samples $\{\tnsrX,y\}$ used for training and estimation, and $n_{te}$ samples used for testing and prediction. We generate independent tensor-structured covariates $\{\tnsrX_i\}_{i =1}^{n}$ and $\{\tnsrX_i\}_{i =1}^{n_{te}}$ with zero-mean Gaussian entries and covariance $\bSigma_x$, and observations $\{y_i\}_{i=1}^{n}$ and $\{y_i\}_{i=1}^{n_{te}}$ which are randomly generated according to the probabilistic model in \eqref{LSR-TGLM} and with appropriate link function $g(\mu)$. Additionally, $z$ is assumed to follow a standard Normal distribution. Thus, for linear regression, the observation $y_i$ conditioned on data $\tnsrX_i$ follows a Gaussian distribution, i.e., $y_i \sim \cN\left(\left<\tnsrB,\tnsrX_i \right>, 1\right)$. In logistic regression, the observation $y_i$ conditioned on data $\tnsrX_i$ follows a Bernoulli distribution, i.e., $y_i \sim \mathrm{Bernoulli}\left(\frac{1}{1 + \exp (-\left<\tnsrB,\tnsrX_i\right>)}\right)$. Finally, in Poisson regression, the observation $y_i$ conditioned on data $\tnsrX_i$ follows a Poisson distribution, i.e., $y_i \sim \mathrm{Poisson}\left(\exp(\left<\tnsrB,\tnsrX_i\right>)\right)$. The experiment for each GLM problem is performed for various model sizes $(m_k)_{k\in[K]}$, tensor ranks $(r_k)_{k\in[K]}$ and separation rank $S$. The coefficient tensor $\tnsrB$ and data samples $\{\tnsrX_i,y_i\}_{i=1}^n$ and $\{\tnsrX_i,y_i\}_{i=1}^{n_{te}}$ are generated as described above and each experiment is repeated for increasing value of $n$. We then compute the estimate $\widehat{\tnsrB}$ and prediction $\whby = [\why_1,\why_2,\dots,\why_{n_{te}}]= [\whmu_1,\whmu_2,\dots,\whmu_{n_{te}}]$ where $\whmu_i= g^{(-1)}\left(\left<\widehat{\tnsrB},\tnsrX_i\right>\right)$.
Finally, for each GLM problem, unless otherwise stated, we conduct $50$ repetitions of each experiment (for a fixed coefficient tensor $\tnsrB$) and average the error over the repetitions. 

\subsubsection{2D Synthetic Data }\label{2d_exp}
We begin with the two-dimensional (matrix) case with $K=2$ as many medical imaging data fall under this case (x-rays, EEG, fiber-bundle images.). In this set of experiments the underlying coefficient matrix $\bB$ has dimensions $m\times m$ and rank $r$. The dimensions and rank of $\bB$ are thus represented as the tuple $(m,r)$. For all GLM problems we fix the separation rank $S=2$; thus the number of learnable parameters is $S(mr + mr) + r^2= 4mr + r^2$. We now proceed with the specific experimental setups of the three GLM problems under study.

For linear regression we consider various model sizes and ranks, i.e., $m \in \{64, 128, 256\}$, $r \in \{4,8\}$. We range sample size $n$ from (roughly) the degrees of freedom (number of learnable parameters) of the smallest model size to no smaller than the degrees of freedom of the largest model size. The smallest model size is for the tuple $(m=64,~ r=4)$ and the largest model size is for the tuple $(m=256,~ r=8)$, which, under the LSR structure, have $4(64 \times 4) + 4^2 = 1040$ and $4(256 \times 8) + 8^2 = 8256$ learnable parameters, respectively. We compare the performance of LSRTR to TTR and LS. Figure~\ref{2d_linear} reports the mean normalised estimation and prediction accuracy, respectively, across $50$ repetitions. The shaded regions are one standard deviation of mean estimation and prediction accuracy, based on 50 replications. 

For logistic and Poisson regression we consider $m \in \{32, 64, 128\}$ and $r \in \{4,8\}$. Since the models are non-linear, we follow the heuristic rule of ranging sample size $n$ from (roughly) $5$ times the degrees of freedom of the smallest model size to no smaller than that of the largest model size \citep{li2018tucker}. The smallest model size is for the tuple $(m=32,~ r=4)$ and the largest model size is for the tuple $(m=128,~ r=8)$, which, under the LSR structure, have $4(32 \times 4) + 4^2 = 528$ and $4(128 \times 8) + 8^2 = 4160$ learnable parameters, respectively. We compare the performance of the LSRTR method to $i)$ LTuR and LR and $ii)$ TTR and PR for logistic and Poisson regression, respectively. Figure~\ref{2d_log} and Figure~\ref{2d_pois} in the appendix report the estimation and prediction accuracy for logistic and Poisson regression, respectively, with shaded regions depicting one standard deviation of mean estimation and prediction accuracy, based on 50 replications.  

We pause to make a few remarks. Firstly, in the case of synthetic data, the rank $r$ and separation rank $S$ are assumed to be known and other algorithmic parameters (the step size $\alpha$) are set using separate validation experiments. Secondly, though the generated coefficient $\bB$ is a matrix, the numerical study by \citet{li2018tucker}, shows that the low-rank Tucker model effectively estimates two-dimensional parameters and the TTR algorithm performs well in the matrix setting even with huge increases in model and sample sizes. The study by \citet{li2018tucker} also shows how TTR generalizes CP tensor regression algorithms and improves upon the performance of several other methods (such as PCA and Bayesian regression methods) in terms of estimation and prediction accuracy in experiments with synthetic data and for several regression types (linear and logistic) \citep{li2018tucker}. For these reasons we limit our scope of comparative algorithms for matrix-structured regression to TTR. 
 \begin{figure}
     \centering
     \begin{subfigure}[b]{0.45\textwidth}
         \centering
         \includegraphics[width=\textwidth]{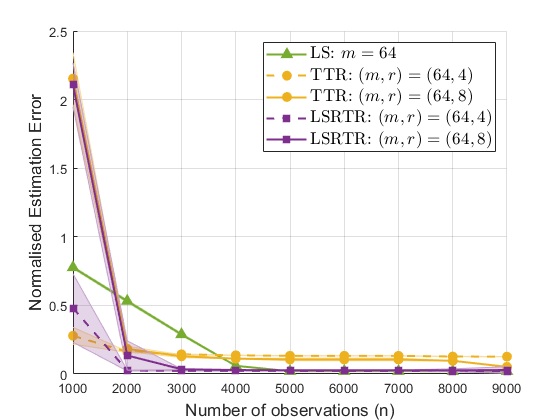}
        \caption{ }
         \label{2d_linear_est_64}
     \end{subfigure}
     \hfill
     \begin{subfigure}[b]{0.45\textwidth}
         \centering
         \includegraphics[width=\textwidth]{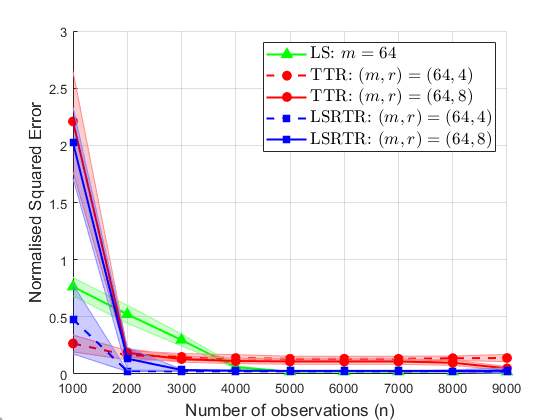}
         \caption{ }
         \label{2d_linear_pred_64}
     \end{subfigure}
      \begin{subfigure}[b]{0.45\textwidth}
         \centering
         \includegraphics[width=\textwidth]{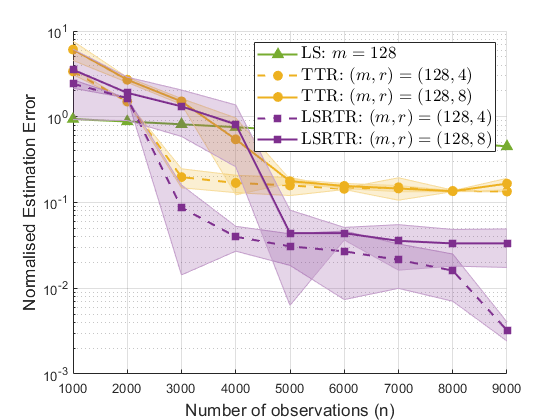}
         \caption{ }
         \label{2d_linear_est_128}
     \end{subfigure}
     \hfill
     \begin{subfigure}[b]{0.45\textwidth}
         \centering
         \includegraphics[width=\textwidth]{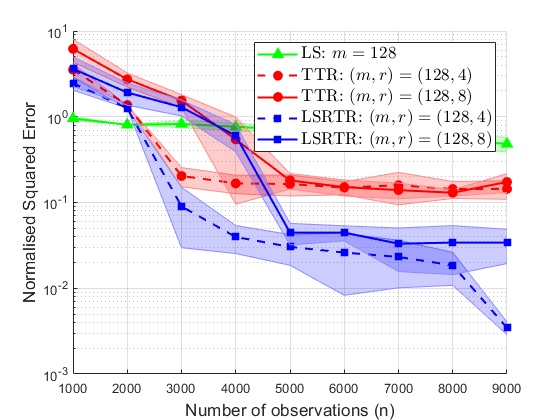}
         \caption{ }
         \label{2d_linear_pred_128}
     \end{subfigure}
     \begin{subfigure}[b]{0.45\textwidth}
         \centering
         \includegraphics[width=\textwidth]{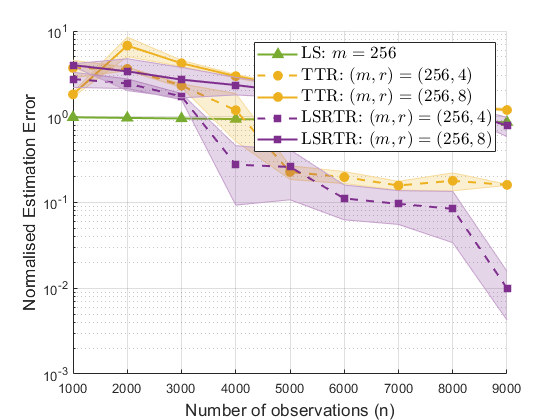}
         \caption{ }
         \label{2d_linear_est_256}
     \end{subfigure}
     \hfill
     \begin{subfigure}[b]{0.45\textwidth}
         \centering
         \includegraphics[width=\textwidth]{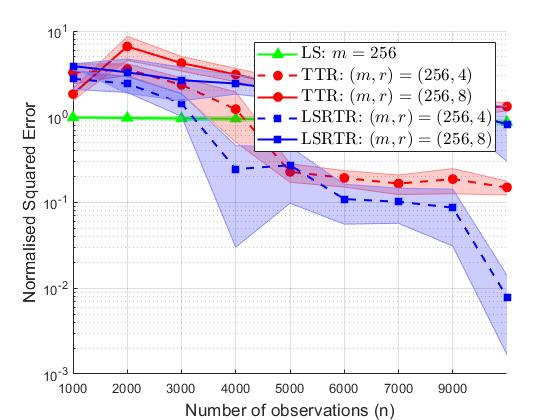}
         \caption{ }
         \label{2d_linear_pred_256}
     \end{subfigure}
    \caption{Comparison of LSRTR with LS and TTR for two-dimensional synthetic data when $m\in\{64,~128,~256\}$, $r\in\{4,8\}$ and $S=2$. Normalised estimation error for $m=64,~128,$ and $256$ is shown in $(a),~(c),$ and $(e)$, respectively. Normalised prediction error for $m=64,~128,$ and $256$ is shown in $(b),~(d),$ and $(f)$, respectively. Each marker represents the mean normalised estimation/prediction errors, over 50 repetitions. The shaded regions correspond to one standard deviation of the mean normalised errors.}\label{2d_linear} 
\end{figure}

 \begin{figure}
     \centering
     \begin{subfigure}[b]{0.45\textwidth}
         \centering
         \includegraphics[width=\textwidth]{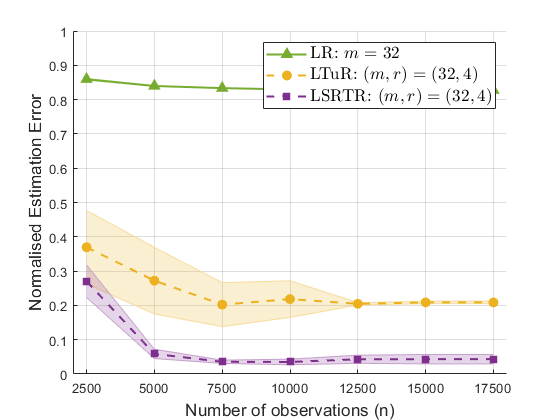}
         \caption{ }
         \label{2d_log_est_32}
     \end{subfigure}
     \hfill
     \begin{subfigure}[b]{0.45\textwidth}
         \centering
         \includegraphics[width=\textwidth]{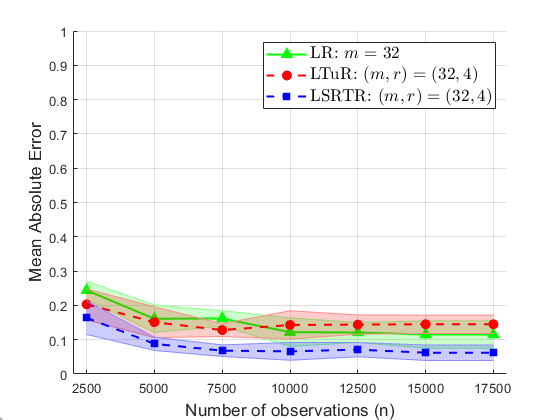}
         \caption{ }
         \label{2d_log_pred_32}
     \end{subfigure}
     \begin{subfigure}[b]{0.45\textwidth}
         \centering
         \includegraphics[width=\textwidth]{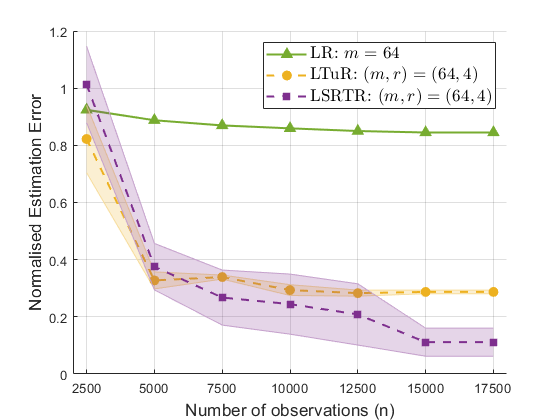}
         \caption{ }
         \label{2d_log_est_64}
     \end{subfigure}
     \hfill
     \begin{subfigure}[b]{0.45\textwidth}
         \centering
         \includegraphics[width=\textwidth]{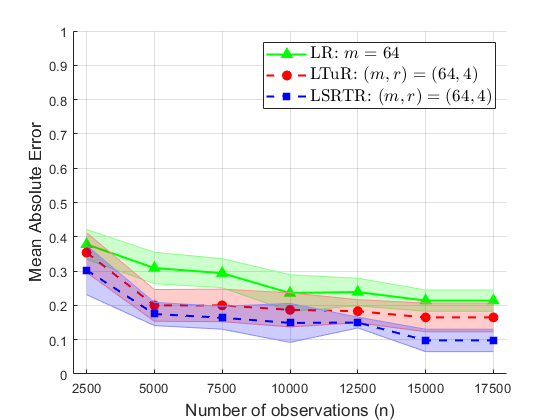}
         \caption{ }
         \label{2d_log_pred_64}
     \end{subfigure}
     \begin{subfigure}[b]{0.45\textwidth}
         \centering
         \includegraphics[width=\textwidth]{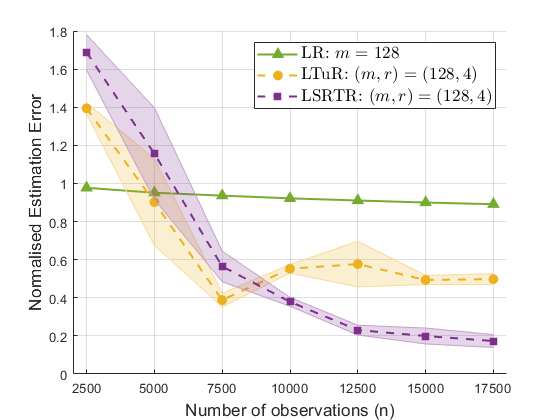}
         \caption{ }
         \label{2d_log_est_128}
     \end{subfigure}
     \hfill
     \begin{subfigure}[b]{0.45\textwidth}
         \centering
         \includegraphics[width=\textwidth]{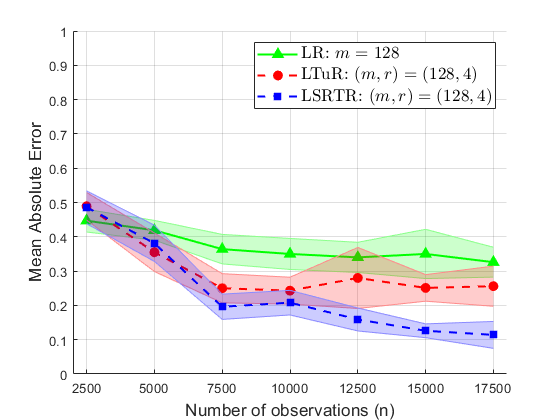}
         \caption{ }
         \label{2d_log_pred_128}
     \end{subfigure}
     \caption{Comparison of LSRTR with LR and LTuR for two-dimensional synthetic data when $m\in\{32,~64,~128\}$, $r=4$ and $S=2$. Normalised estimation error for $m=32,~64,$ and $128$ is shown in $(a),~(c),$ and $(e)$, respectively. Normalised prediction error for $m=32,~64,$ and $128$ is shown in $(b),~(d),$ and $(f)$, respectively. Each marker represents the mean normalised estimation/prediction errors, over 50 repetitions. The shaded regions correspond to one standard deviation of the mean normalised errors.}
        \label{2d_log}
\end{figure}  

We also note that since we report normalised errors, we do not investigate any errors greater than $1$ in Figures~\ref{2d_linear},~\ref{2d_log} and~\ref{2d_pois}. The plots in Figures~\ref{2d_linear},~\ref{2d_log} and~\ref{2d_pois} show that, for all algorithms in all GLM problems, the estimation and prediction accuracy improves and the shaded regions (which characterise the standard deviation of the error points over 50 repetitions) decay with an increase in observations. In particular, on the whole, the tensor-based methods LSRTR, TTR and LTuR outperform the vector-based methods LR, LS and PR, particularly with larger model size $m$. This is because the number of observations provided in the experiments is typically much lower than the sample complexity of the vector methods. For example, consider linear regression with $m=256$. LS requires at least $256\times256 = 65{,}536$ observations for reliable parameter estimation, while if we set $r=4$ and $S=2$, TTR and LSRTR require only $2064$ and $4112$ observations for reliable parameter estimation, respectively. Additionally, for TTR, LTuR and LSRTR, as the core tensor dimensions grow (i.e., as we increase the Tucker rank of our tensor models), we require a relatively larger sample size to achieve better accuracy. This is not surprising as a larger core tensor increases the number of parameters to be learned. Figures~\ref{2d_linear},~\ref{2d_log} and~\ref{2d_pois} also show that for all GLM problems under study (namely linear, logistic and Poisson regression), the LSRTR algorithm shows `consistent' performance for estimation and prediction. What we mean by this is that, even with a substantial growth in model size, the estimation error induced by LSRTR decreases with increasing sample size, and the LSRTR algorithm can still achieve acceptable estimation and prediction errors as long as the number of observations provided is large enough. Additionally, we can see that LSRTR algorithm can avoid over-fitting in the prediction step even with a very large number of observations. 

We also gain some additional insights from the results. First, if the underlying parameter model $\bB$ is LSR-structured, then LSRTR can recover $\bB$ with a relatively low number of observations, in the sense that the number of observations required is proportional to the intrinsic degrees of freedom in the LSR model (rather than the extrinsic dimensionality of $\bB$). This implies that though we do not provide any theoretical guarantees for Algorithm~\ref{algo}, it can be employed successfully in practice. 
Second and more interestingly, if the underlying parameter model $\bB$ is LSR-structured, then imposing the Tucker model on $\bB$ in the parameter estimation procedure -- as in TTR and LTuR algorithms -- results in less accurate estimation compared to LSRTR. Particularly, we see that for relatively large model sizes, such as in Figures~\ref{2d_linear_est_256} and~\ref{2d_linear_pred_256}, or for non-linear models, such as in Figures~\ref{2d_log_est_128}, and~\ref{2d_pois_est_128}, TTR and LTuR algorithms' performance plateaus, even when the number of observations has far exceeded the degrees of freedom of a Tucker-structured coefficient matrix $\bB$. In fact, even LS and LR outperform TTR and LTuR when the model size is small enough and enough observations have been provided. Though we expect LSRTR to perform better than TTR or LTuR when $\bB$ is originally LSR-structured, the results suggest that if regression problems with real data have model parameters that are approximately-LSR structured, then TTR or LTuR may not be suitable algorithms and we can motivate the use of algorithms such as LSRTR that allow for the consideration of a richer class of tensors that may lead to more reliable estimation.

Notice also that in Figures~\ref{2d_linear_est_256} and~\ref{2d_linear_pred_256} (as well as in Figures~\ref{2d_log_est_128} and~\ref{2d_log_pred_128}), when $r=8$, the normalised errors are greater than $1$. Here, LSRTR is operating in the under-sampling regime, where the model size is very large and the number of available samples $(9000)$ is just about the number of learnable parameters of the model. In such a setting we expect the estimation error to be large and to decrease with increasing samples. We also make specific remarks on Figures~\ref{2d_log_est_64} and~\ref{2d_log_est_128}. In the case of $m=64$, we see that LSRTR shows lower estimation error at $n=17500$ ($0.11$ vs $0.29$), yet has a larger standard deviation (shaded region) over $50$ experiment replications ($0.04$ vs $0.006$). This is an acceptable standard deviation, especially considering the reduction in estimation error compared to the Tucker-based LTuR algorithm. In the case of $m=128$, LTuR shows lower estimation error than LSRTR until about $n=7500$ (errors of  $0.6$ vs $0.4$). This could be due to the fact that LSRTR is operating in the undersampling regime for $n=7500$ (as LSR has more learnable parameters than Tucker). Additionally, we see in Figure~\ref{2d_log_est_128} that the estimation error for LTuR increases after $n=7500$, and in Figure~\ref{2d_log_pred_128} that LSRTR outperforms LTuR, which suggests that LTuR may be overfitting the GLM model. 

We also make a specific remark on Figure~\ref{2d_pois_est_32} in the appendix, where we witness a slight increase in estimation error for $(m,r) = (32,8)$ between $n=5000$ and $n=7500$ ($0.025$ vs $0.08$). However, we see that the estimation error continues to decrease after $n=7500$, and we do not witness this phenomena in any other experiment in Figure~\ref{2d_pois}.  

\subsubsection{3D Synthetic Data}
We have evaluated the performance of the LSRTR algorithm against an increase in model and sample size for $K=2$. Now, we wish to further explore the performance of our method compared to the other state-of-the-art methods with $3$-dimensional data ($K = 3$). In these experiments, the underlying coefficient tensor $\tnsrB$ has dimensions $m_1 \times m_2 \times m_3$ and Tucker rank $\{r,r,r\}$. The dimensions and rank of $\tnsrB$ are thus represented as the tuple $(\bm,r)$ where $\bm = [m_1,m_2,m_3]$. We study the linear regression problem and we fix the separation rank $S=1$, i.e., $\tnsrB$ is Tucker structured. Thus $\tnsrB$ is a $3$-dimensional Tucker tensor of dimension $m_1\times m_2\times m_3$ and rank $r$ along each mode, and the number of learnable parameters is $S(m_1r + m_2r +m_3r) + r^3= r(m_1+m_2+m_3 +r^2)$. We consider the model size $\bm = [16,32,64],~r=4$. The model size under the Tucker structure is $4(16 +32+64 +4^2)= 512$. We range sample size $n$ from (roughly) two times the degrees of freedom to roughly eight times the degrees of freedom.  We compare the performance of LSRTR to TTR and LS. Figure~\ref{3d_linear} reports the estimation accuracy and prediction accuracy, respectively, with shaded regions depicting one standard deviation of mean estimation accuracy, based on 50 replications. For LSRTR the rank $r$ is assumed to be known and other parameters (the separation rank $S$ and the step size $\alpha$) are set using separate validation experiments. The separation rank was thus chosen as $S=2$. 

\begin{figure}[ht]
     \centering
     \begin{subfigure}[b]{0.45\textwidth}
         \centering
         \includegraphics[width=\textwidth]{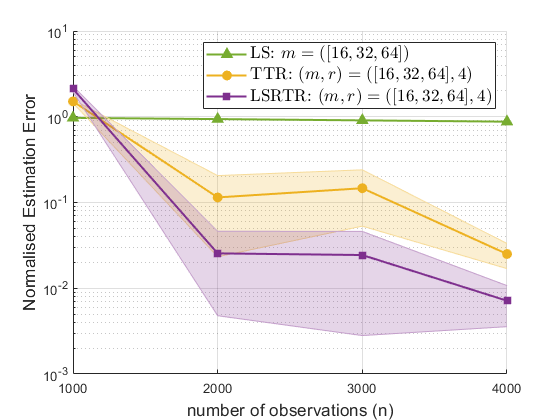}
         \caption{ }
         \label{3d_linear_est}
     \end{subfigure}
     \hfill
     \begin{subfigure}[b]{0.45\textwidth}
         \centering
         \includegraphics[width=\textwidth]{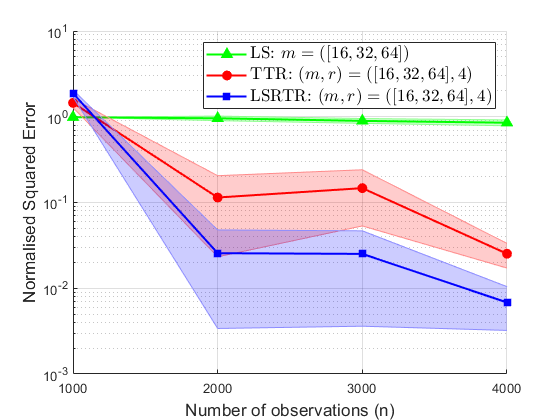}
         \caption{ }
         \label{3d_linear_pred}
     \end{subfigure}
        \caption{Comparison of LSRTR with LS and TTR for three-dimensional synthetic data when $\bm=[16,32,64]$, $r=4$, $S=1$. For LSRTR, $S=2$ was chosen. Normalised estimation error is shown in $(a)$. Normalised prediction error is shown in $(b)$. Each marker represents the mean normalised estimation/prediction errors, over 50 repetitions. The shaded regions correspond to one standard deviation of the mean normalised errors.}
        \label{3d_linear}
\end{figure}

We observe from Figure~\ref{3d_linear} the same trends as in the previous set of experiments. We also conclude that even with the underlying coefficient tensor being Tucker-structured, our proposed LSRTR method in this experiment exhibits better statistical performance with fewer observations than the `Tucker specific' algorithm TTR as well as LS. This suggests that the LSR model induces a richer class of solutions with greater representation power that is able to capture meaningful information in the data that the more compact state-of-the-art tensor models do not -- even with increased sample size. The results also suggest that the projection step in LSRTR (imposing orthonormal factor matrices) may contribute to the enhanced performance of LSRTR in terms of estimation/prediction error (as TTR does not impose such a constraint).

Finally, we numerically investigate the per-iteration computational complexity of LSRTR vs TTR for three-dimensional synthetic data, for varying sample size $n$. We repeat the previous experiment, i.e., $\bm = [16,32,64]$, $r=4$, $S=2$, over varying sample size ($n=100$, $n=300$, $n=500$), and report the mean per-iteration computation time (in seconds) over $50$ repetitions of the experiment, for LSRTR and TTR. Table~\ref{3d_comp} shows the results. We can see that LSRTR and TTR have comparable per-iteration computation times. In fact LSRTR has computation time of roughly $S=2$ times that of TTR.  
\begin{table}
\centering
\renewcommand{\arraystretch}{1.3}
\begin{tabular}{rrrr}
\toprule
     & $\mathbf{n=100}$ & $\mathbf{n=300}$ & $\mathbf{n=500}$ \\
\midrule
\textbf{LSRTR} & $0.1~ (7e-5)$ & $0.37~(2e-4)$ & $0.65~(5e-4)$\\
\textbf{TTR} & $0.048~(3e-5)$ & $0.18~(1e-4)$ & $0.31~(2e-4)$ \\
\bottomrule
\end{tabular}\caption{\label{3d_comp} Per-iteration computation time (in seconds) of LSRTR and TTR over varying sample size ($n=100$, $n=300$, $n=500$). We report the mean (and variance) over $50$ repetitions.}
\end{table}

\subsubsection{Convergence Behaviour of LSRTR}
We have previously discussed the challenges of providing theoretical guarantees for the proposed algorithm, specifically an analysis of its convergence and the non-expansiveness of the QR projection. Nonetheless, we are still interested in developing an understanding of the practical behaviour of LSRTR. For two-dimensional synthetic data, for fixed sample size $n$, we repeat the experiment from Section~\ref{2d_exp}, i.e., for linear regression of model sizes $(64,4)$, $(128,4)$ and $(256,4)$, with $n=7000$, and logistic regression of model sizes $(32,4)$, $(64,4)$ and $(128,4)$, with $n=7500$, and with $S=2$. We report the per-iteration normalised estimation error for LSRTR and the magnitude of the computed gradient at the final iteration of every sub-problem in Algorithm~\ref{algo}. Figure~\ref{convergence_plots} depicts the convergence behaviour of LSRTR. We see that the per-iteration normalised estimation error decreases with increasing number of iterations. In regards to the non-expansiveness of the QR projection, these results suggest that the error does not amplify, and the QR step does not prohibit LSRTR from finding a `good estimate' of the underlying coefficient tensor. The decrease in the estimation error also suggests that LSRTR converges. Moreover, Table~\ref{norm_grad} depicts the magnitude of the computed gradient of the loss function at the final iteration of every sub-problem in LSRTR. We can see in all cases that the magnitudes of the gradients are very close to $0$, suggesting the convergence of LSRTR to a stationary point.

\begin{figure}[ht]
     \centering
     \begin{subfigure}[b]{0.48\textwidth}
         \centering
         \includegraphics[width=\textwidth]{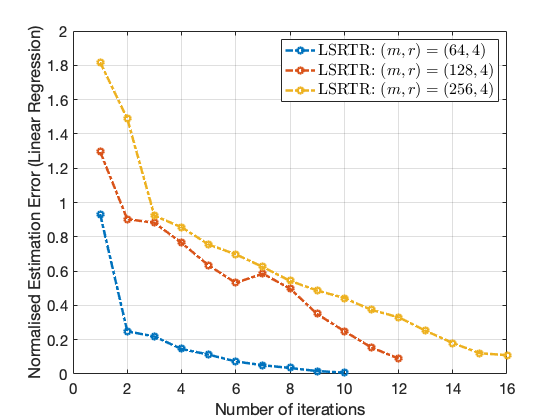}
         \caption{ }
         \label{lin_conv}
     \end{subfigure}
     \hfill
     \begin{subfigure}[b]{0.48\textwidth}
         \centering
         \includegraphics[width=\textwidth]{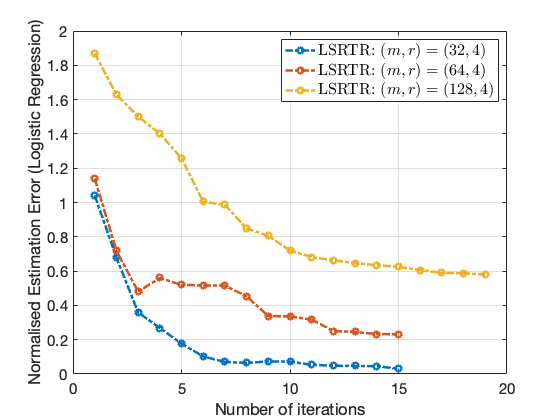}
         \caption{ }
         \label{log_conv}
     \end{subfigure}
        \caption{Convergence behaviour of LSRTR for two-dimensional synthetic data for linear regression when 
        $m\in \{64,~128,~256\}$, and $r=4$, for $n=7000$, and logistic regression when 
        $m\in \{32,~64,~128\}$, and $r=4$, for $n=7500$. For LSRTR, $S=2$ was chosen. Normalised estimation error for linear and logistic regression is shown in $(a)$ and $(b)$, respectively.}
        \label{convergence_plots}
\end{figure}

\begin{table}
\centering
\renewcommand{\arraystretch}{1.3}
\begin{tabular}{rrrrrrrr}
\toprule
    \multicolumn{1}{c}{ }& \multicolumn{3}{c}{\textbf{Linear Regression} $(m,r)$} & & \multicolumn{3}{c}{\textbf{Logistic Regression} $(m,r)$}\\
\cmidrule{2-4} \cmidrule{6-8}
     \textbf{gradient norm} & $(64,4)$ & $(128,4)$ & $(256,4)$ &&  $(32,4)$ & $(64,4)$ &  $(128,4)$\\
\midrule
$\norm{\triangledown \cL(\Bks{1}{1})}_2$ & $3.1\times 10^{-3}$ & $5.2\times 10^{-3}$ & $4.3\times 10^{-3}$ && $0.24$ & $0.38$ & $0.21$\\
$\norm{\triangledown \cL(\Bks{1}{2})}_2$ & $7.8\times 10^{-3}$ & $1.6\times 10^{-3}$ & $9.3\times 10^{-3}$ && $0.2$ & $0.32$ & $0.3$ \\
$\norm{\triangledown \cL(\Bks{2}{1})}_2$ & $1.7\times 10^{-2}$ & $4.7\times 10^{-3}$ & $8.73\times 10^{-4}$ && $0.4$ & $0.1$ & $0.18$ \\
$\norm{\triangledown \cL(\Bks{2}{2})}_2$ & $3.5\times 10^{-2}$ & $2.1\times 10^{-3}$ & $4\times 10^{-3}$ && $0.29$ & $0.15$ & $0.23$\\
$\norm{\triangledown \cL(\tnsrG)}_2$ & $7.42\times 10^{-4}$ & $3.8\times 10^{-3}$ & $1.2\times 10^{-3}$ && $0.37$ & $0.28$ & $0.19$\\
\hline
\end{tabular}\caption{\label{norm_grad} Magnitude of the final iteration gradient, per sub-problem, in linear and logistic regression.}
\end{table}


\subsection{Experiments on Medical Imaging Data}\label{real_data}
We move on to investigating our approach on medical imaging data. Regression analysis of medical imaging data can be a useful tool in medical decision making. We chose this type of data for several reasons. Medical images are usually multi-dimensional, and since data acquisition in medical sciences is expensive, the regression problems under study with such data is very high dimensional. Another major reason is that medical images model complex biological structures (such as brain maps). One expects the model parameter in the corresponding regression model to model this complexity. Lastly, the set of observations in medical imaging data is usually severely imbalanced. In other words, positive medical diagnoses are relatively rare, and thus a given set of observations will contain far fewer positive diagnoses than negative diagnoses. The high-dimensionality, complexity and imbalanced observation set of medical imaging data serve as an excellent assessment of the efficiency of our proposed model and algorithm. That is, we contend that due to the LSR model's favourable bias-variance trade-off and augmented representational power, LSR-TGLM may be well-suited to handling such data and may perform more robustly than some other compact tensor-structured GLMs. Here we study three datasets; ABIDE Autism~\citep{craddock2013neuro, lodhi2020learning}, ADHD200 \citep{bellec2017neuro} and Vessel MNIST $3$D~\citep{yang2021medmnist}.

\subsubsection{ABIDE Autism}
The Autism Brain Imaging Data Exchange (ABIDE) dataset contains the resting state fMRI data collected from $98$ subjects. The data has already been preprocessed for motion realignment and correction, slice timing correction and image normalisation. Each data sample corresponds to $111$ cortical and sub-cortical brain regions scanned over $116$ time periods \citep{craddock2013neuro}. Therefore each sample is a $111\times 116$ matrix of fMRI data from one subject. Each observation is a binary response variable $y\in\{0,1\}$ depicting a subject's diagnosis of either having autism $(y=1)$ or not $(y=0)$. The goal is to classify the subjects as either autistic or not. We perform a single train-test split procedure of $80$ and $14$ training and testing samples respectively. As for the autistic-control split, we choose $40$ autistic and $40$ control samples uniformly at random for training, and $14$ test subjects the same way. The case-control ratio is this dataset is thus $1:1$. ABIDE Autism is the only balanced dataset in this set of experiments.

\subsubsection{ADHD200}
ADHD200 is a repository of resting state fMRI images of subjects from $8$ research sites: Peking University, Brown University, Kennedy Krieger Institute, Donders Institute, NYU Child Study Center, Oregon Health and Science University, University of Pittsburgh and Washington University in St. Louis. The data includes brain maps of fractional Amplitude of Low-Frequency Fluctuations (fALFF) of $959$ child subjects. fALFF brain maps can be useful in predicting Attention Deficit Hyperactivity Disorder (ADHD) in children. The data has been preprocessed, \citep{adhd2012adhd,bellec2017neuro}, and we also perform standard preprocessing of the data such as removing missing observations or images with poor quality (a list of images with poor quality has been made public, \citep{adhd2012adhd}). Each data sample corresponds to a $3$-dimensional patient fMRI brain scan (T1-weighted image) of size $121 \times 145 \times 121$. Figure~\ref{fmri} shows the axial, sagittal and coronal views (views across each tensor mode) of a fMRI data sample. Each observation is a binary response variable $y\in \{0,1\}$ depicting a child subject's diagnosis of either having ADHD $(y=1)$ or being typically developing $(y=0)$. The objective here is classification of children subjects with ADHD. The dataset has already undergone a train-test split procedure. The training dataset consists of $762$ samples, $280$ of which are labelled as ADHD (hyperactive/impulsive, inattentive and combined) and $482$ of which are labelled as typical (control). The testing dataset consists of $197$ samples, where $76$ and $93$ are labelled as ADHD and control, respectively. The data is slightly imbalanced where the case-control ratio in the training and testing sets is $3:5$ and $4:5$, respectively.
\begin{figure}[ht]
     \centering
     \begin{subfigure}[b]{0.25\textwidth}
         \centering
         \includegraphics[width=\textwidth]{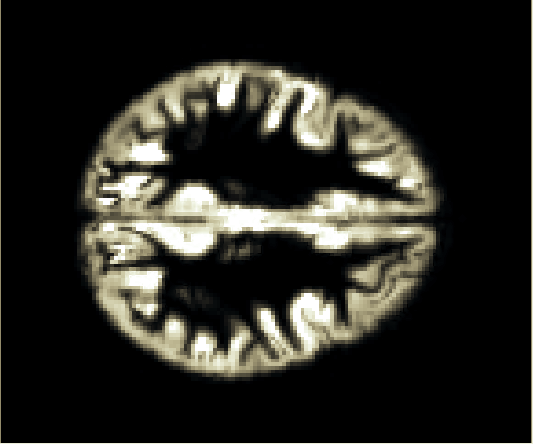}
         \caption{ }
         \label{fmri1}
     \end{subfigure}
     \hfill
     \begin{subfigure}[b]{0.25\textwidth}
         \centering
         \includegraphics[width=\textwidth]{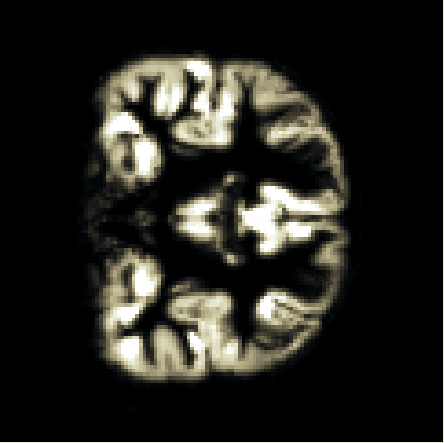}
         \caption{ }
         \label{fmri2}
     \end{subfigure}
     \hfill
     \begin{subfigure}[b]{0.25\textwidth}
         \centering
         \includegraphics[width=\textwidth]{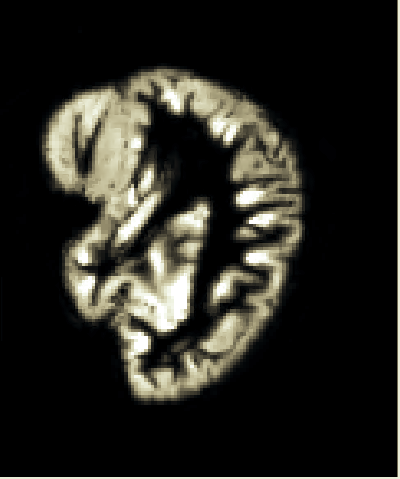}
         \caption{ }
         \label{fmri3}
     \end{subfigure}
        \caption{Axial $(a)$, coronal $(b)$, and sagittal $(c)$ views of an fMRI data sample.}
        \label{fmri}
\end{figure}
\subsubsection{Vessel MNIST 3D } 
MedMNIST \citep{yang2021medmnist} is a collection of medical imaging datasets suitable for various tasks such as classification and regression. One of the datasets is the Vessel $3$D dataset \citep{yang2020intra} containing $1909$ reconstructions of vessel segments from brain Magnetic Resonance Angiography (MRA) data of different subjects. Each data point corresponds to $3$-dimensional vessel segment of dimension $28 \times 28 \times 28$. Each vessel segment may or may not exhibit an intracranial aneurysm and the dataset is diverse in the sense that it includes a variety of shapes and scales of intracranial aneurysms. The data has also been preprocessed to restore incomplete scans made by neurosurgeons and to remove duplicated data. Each observation is a binary response variable $y\in\{0,1\}$ depicting a vessel segment diagnosis of either having an aneurysm $(y=1)$ or being healthy $(y=0)$. The goal is to therefore classify each vessel segment as either exhibiting an aneurysm or being healthy. The dataset has already undergone a train-test split procedure.  The training dataset consists of $1335$ samples, $150$ of which are labelled as `aneurysm' and $1185$ of which are labelled as `healthy'. The testing dataset consists of $382$ samples, where $43$ and $339$ are labelled as `aneurysm' and `healthy', respectively. The data is severely imbalanced where the case-control ratio in both the training and testing sets is $3:25$.

\subsubsection{Results of Experiments on Medical Imaging Data} 
All three medical imaging datasets support a logistic regression problem. We estimate the unknown coefficient tensor $\tnsrB$ through our LSRTR algorithm and use our estimate to predict the responses of the test data through a logistic regression model, i.e., we calculate the posterior probability using the sigmoid function $s = (1/(1 + \exp(\left<\tnsrB, \tnsrX_i\right>+z)))$ for every test subject $i$ and use a cut-off threshold of $0.5$ to classify each response as $1$ or $0$, where the positive class is chosen if $s >0.5$. Parameters including the Tucker rank $(r_k)_{k\in[K]}$, the separation rank $S$ and the step size $\alpha$ are set -- for each dataset -- using separate validation experiments. On all three datasets we compare our method with four other methods, two of which have been previously introduced, namely, LR and LTuR. The third method is a low-rank CP model method (which we name LCPR), specifically a block coordinate descent approach for parameter estimation of CP structured logistic regression models proposed by \citet{tan2012logistic}. The fourth method is a Support Vector Machine (SVM) with Gaussian Kernel~\citep{hearst1998support}. In the case of the Vessel MNIST dataset, we also compare our method with an established baseline in the literature for Medical MNIST datasets: ResNet $50$, which is residual neural network that is $50$ layers deep. This neural network has also been augmented with a $3$D convolutional layer designed to handle $3$-dimensional data. 

To evaluate the performance of each method we report the following scores: Specificity defined as the true negative rate, sensitivity defined as the true positive rate, the F$1$ score, the average accuracy score defined as the MAE, and the Area Under the Curve (AUC). We note that we penalise false positives and false negatives with the same severity and compute the MAE to be consistent with the current literature~\citep{li2018tucker, zhou2013tensor, zhang2016decomposition}. We also note that it is important to use the F$1$ score and AUC, as they are more appropriate metrics in the case of class imbalance, compared to average accuracy. 
\begin{table}
\centering
\renewcommand{\arraystretch}{1.2}
\begin{tabular}{rrrr}
\toprule
\textbf{Dataset} & \textbf{Covariate Dimensions} & \textbf{Train Size} & \textbf{Test  Size} \\
\midrule
ABIDE Autism & $(111, 116)$ & $80$ & $14$\\
ADHD200 & $(121, 145, 121)$ & $762$ & $169$ \\
Vessel MNIST $3$D & $(28, 28, 28)$ & $1335$ & $382$ \\
\bottomrule
\end{tabular}\caption{\label{dataset_prop} Description of three medical imaging datasets (ABIDE ADHD, ADHD $200$, and Vessel MNIST 3D). Described are the model dimensions and train and test sizes for all three datasets.}
\end{table}
Table~\ref{dataset_prop} summarises the description of each dataset and Tables~\ref{abide},~\ref{adhd}, and~\ref{vessel} summarise the results. The chosen rank $(r_k)_{k\in[K]}$ for ABIDE Autism, ADHD$200$ and Vessel MNIST $3$D are $(6,6)$, $(6,6,6)$ and $(3,3,3)$, respectively. The chosen LSR rank for LSRTR for ABIDE Autism, ADHD$200$ and Vessel MNIST $3$D are $S=2$, $S=3$ and $S=2$, respectively. The results show that the LSRTR method performs well in terms of sensitivity and average accuracy for all datasets and the vector based-method LR has poorer performance with all datasets, where for some datasets all observations are predicted as positive. Particularly with the ADHD$200$ dataset, we are faced with the challenge of efficient estimation as the sample size is very small with respect to the model size ($762$ vs.~$2{,}097{,}152$). The relatively good performance of LSRTR suggests that LSRTR performs well in the high-dimensional setting (i.e., when the sample size is significantly smaller than the number of covariates). We notice that LR performs very poorly with ADHD$200$, as there are over $2$ million parameters to estimate. With the chosen rank, the LSR model decreases the dimensionality to $2322\times S + 216$ which enables significantly more efficient classification. LTuR and LTR decrease the dimensionality to $2322 + 216$ and $2322$, respectively, but exhibit poorer performance than LSRTR (and even SVM), suggesting that CP and Tucker models cause a loss of representation power that is essential to parameter estimation for some datasets. With the Vessel MNIST dataset we have more training samples than ADHD$200$ and a lower-dimensional problem (only $21{,}952$). For this reason all methods perform relatively well, with LSRTR having one of the best performances. When compared to ResNet $50$, our method approaches its performance in terms of F$1$ score and beats its performance in terms of average accuracy.

We make a remark on Table~\ref{abide}. The sensitivity for LSRTR is $1$, which is likely due to the fact that we use only $14$ test samples from the ABIDE Autism dataset. In order to provide better interpretability, Figure~\ref{histo_abide} shows a histogram of the posterior probabilities for the 14 test data samples. We see that all of the samples labelled as $y=1$ are classified as such since their posterior probabilities are all over $0.5$. Most samples of Class $1$ are classified with high confidence (in the sense that posterior probabilities are well above the $0.5$ threshold). Only one sample of Class $0$ was mis-classified.

To provide a more comprehensive comparison between LSRTR and other state-of-the-art neural networks, we also compare the performance of LSRTR on vessel MNIST dataset to a benchmark analysis \citep{yang2021medmnist}. We compare three ResNet $18$ methods, with various additional convolution layers (namely $2.5$D, $3D$ and $ACS$, or, `axial-coronal-sagittal'), two additional ResNet $50$ methods, with additional convolution layers (namely $2.5$D and ACS) \citep{9314699}), and AutoKeras. The results are shown in Table~\ref{NN_comp}. 
The results show that, although LSRTR may be outperformed by some complex methods, it exhibits comparable and occasionally superior performance to several others. Additionally, we note that while neural networks often outperform regression models in certain fields like computer vision due to their increased flexibility, their potential overfitting in a small sample regime (limited sample sizes are often associated with medical datasets from clinical studies) remains a concern. Moreover, regression models are frequently termed `white-box models' owing to the transparency and interpretability of their parameters. This interpretability -- an indispensable attribute in numerous applications -- allows for a detailed analysis of covariates' statistical significance and the generation of confidence intervals (not just accuracy scores) as posterior probabilities -- qualities that are challenging to obtain with neural networks \citep{dreiseitl2002logistic, issitt2022classification}. Such confidence probabilities are important information for medical professionals to consider in medical studies, as they provide valuable insights into the classification results. Additionally, unlike the neural network methods, LSRTR is easily extendable beyond the $3$D case, and is trivially applied to other regression types (such as linear and Poisson), and does not require fine tuning for each case.

\begin{table}[t!]
\begin{subtable}{1\textwidth}
\centering
\renewcommand{\arraystretch}{1.2}
\begin{tabular}{rrrrrr}
\toprule
    & \textbf{SVM} &\textbf{LR} & \textbf{LCPR}   & \textbf{LTuR}   & \textbf{LSRTR}  \\
\midrule
\textbf{Sensitivity}& $0.71$ &$0.71$& $0.71$ & $0.71$ & $1$  \\
\textbf{Specificity} & $0.14$ & $0.71$ &$0.85$ & $0.85$ & $0.85$  \\
\textbf{F$1$ score}  & $0.55$ &$0.71$ &$0.77$ & $0.77$ & \hlcell $\mathbf{0.93}$  \\
\textbf{AUC}  & $0.42$ & $0.51$ &$0.84$ & $0.84$ & \hlcell $\mathbf{0.9}$  \\
\textbf{Average Accuracy}  & $0.43$ & $0.71$ &$0.78$ & $0.78$ & \hlcell $\mathbf{0.92}$  \\
\bottomrule
\end{tabular}\caption{\label{abide} \textbf{ABIDE Autism}}
\end{subtable}

\vspace{0.3cm}  

\begin{subtable}{1\textwidth}
\centering
\renewcommand{\arraystretch}{1.2}
\begin{tabular}{rrrrrr}
\toprule
    & \textbf{SVM} &\textbf{LR} & \textbf{LCPR}   & \textbf{LTuR}   & \textbf{LSRTR}  \\
\midrule
\textbf{Sensitivity} &$0.41$& $1$& $0.62$ & $0.51$ & $0.83$  \\
\textbf{Specificity} &$0.78$& $0$ &$0.34$ & $0.52$ & $0.4$  \\
\textbf{F$1$ score}  &$0.5$ &$0.62$ &$0.35$ & $0.48$ & \hlcell $\mathbf{0.65}$  \\
\textbf{AUC}  &$0.62$ & $0.5$  & $0.56$ & $0.62$ & \hlcell $\mathbf{0.67}$  \\
\textbf{Average Accuracy}  & \hlcell $\mathbf{0.62}$ & $0.45$ &$0.47$ & $0.51$ & $0.59$  \\
\bottomrule
\end{tabular}\caption{\label{adhd} \textbf{ADHD$200$} }
\end{subtable}

\vspace{0.3cm}  

\begin{subtable}{1\textwidth}
\centering
\renewcommand{\arraystretch}{1.2}
\begin{tabular}{rrrrrrr}
\toprule
    & \textbf{SVM} &\textbf{LR} & \textbf{LCPR}   & \textbf{LTuR}   & \textbf{LSRTR}   & \textbf{ResNet 50 + 3D}\\
\midrule
\textbf{Sensitivity}&$0.39$& $0.53$& $0.26$ & $0.32$ & $0.47$  & $0.85$\\
\textbf{Specificity} &$0.95$& $0.55$ &$0.946$ & $0.94$ & $0.96$& $0.86$ \\
\textbf{F$1$ score}  &$0.44$ &$0.21$ &$0.3$ & $0.37$ & $0.55$ & \hlcell $\mathbf{0.57}$ \\
\textbf{AUC}  &$0.84$ &$0.52$ &$0.6$ & $0.66$ & $0.81$ & \hlcell $\mathbf{0.9}$  \\
\textbf{Average Accuracy}  &$0.89$& $ 0.55$ &$0.869$ & $0.87$ & \hlcell $\mathbf{0.91}$& $0.85$\\
\bottomrule
\end{tabular}\caption{\label{vessel} \textbf{Vessel MNIST $3$D}}
\end{subtable}
\caption{\label{real_data_results} Comparison of LSRTR with LR, LCPR and LTuR for diagnosis of test subjects in datasets: $(a)$ABIDE Autism $(b)$ ADHD$200$ and $(c)$ Vessel MNIST $3$D. In the case of Vessel MNIST $3$D, LSRTR is also compared to ResNet $50$.}
\end{table}

\begin{table}[ht]
\centering
\renewcommand{\arraystretch}{1.2}
\begin{tabular}{ccccccc}
\toprule
    & \textbf{RN18+2.5D} &\textbf{RN18+3D} & \textbf{RN18+ACS}   & \textbf{RN50+2.5D}   & \textbf{RN50+ACS}   & \textbf{Autokeras}\\
\midrule
\textbf{AUC}  &$0.74$ &$0.87$ &$0.93$ & $0.75$ & $0.9$ & $0.773$ \\
\bottomrule
\end{tabular}\caption{Neural network benchmark analysis for classification diagnosis of test subjects in dataset Vessel MNIST $3$D. RN stands for ResNet. \label{NN_comp}}
\end{table}

\begin{figure}[ht]
         \centering
         \includegraphics[width=0.5\textwidth]{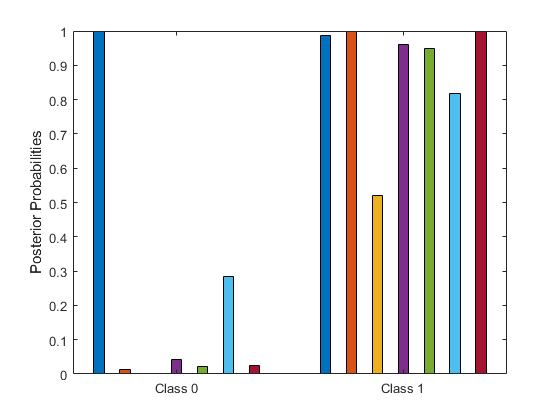}
        \caption{Histogram of posterior probabilities for test data in ABIDE Autism Dataset for the case of LSRTR.}
        \label{histo_abide}
\end{figure}


\section{Minimax Lower Bounds for Tensor-Structured GLMs}\label{minimax}
In this section we derive lower bounds on the minimax risk of the LSR-TGLM problem\footnote{Preliminary results appeared in a conference paper~{taki2021minimax}.}. We numerically assess the tightness of the derived bound in Section~\ref{tightness}. In the results we will derive, the minimax risk depends explicitly on the parameters of the LSR-structured tensor model, namely $\{r_k\}_{k\in[K]},~\{m_k\}_{k\in[K]},S$, the distribution of $\tnsrX$ and the number of samples $n$. Specifically, we derive lower bounds on $\varepsilon^*$ using an argument for estimation problems based on Fano's inequality as described by \citet{yu1997assouad}. This approach relates the minimax risk of coefficient tensor $\tnsrB$ to a multiple hypothesis testing problem. It states that if there exists an estimator (let us call it $\widehat{\tnsrB}(\by,\mathbf{\cX})$, and which can be any parameter estimation algorithm), with error matching the minimax risk $\varepsilon^*$ (i.e., $\norm{\widehat{\tnsrB}(\by,\mathbf{\cX}) - \tnsrB}_F^2 = \varepsilon^*$), then this estimator $\widehat{\tnsrB}(\by,\mathbf{\cX})$ can be used to solve a multiple hypothesis testing problem (MHTP) \citep{khas1979lower}. In this MHTP, the hypothesis set consists of a collection, $\cB_L$, of $L$ LSR-structured coefficient tensors, where the goal of the estimator, $\widehat{\tnsrB}(\by,\mathbf{\cX})$, is to detect the correct `generating' LSR-structured tensor. Fano's inequality provides a fundamental limit/bound on the error probability for the multiple hypothesis testing problem. This limit in turn provides a lower bound on the minimax risk $\varepsilon^*$. As mentioned in Section~\ref{prob_state}, our analysis is local and our approach to deriving a lower bound on the minimax risk is information-theoretic and thus involves the analysis of the mutual information (defined as $\mi{\by}{l}$) between observations $\by = [y_1 \dots y_n]$ and hypothesis $l \in [L]$;
\begin{align}
    \mi{\by}{l} \triangleq \E_{\by,l}\left[\log \frac{f(\by,l)}{f(\by)f(l)}\right],
\end{align}
where $f(\cdot, \cdot)$ and $f(\cdot)$ are joint and marginal probability distribution functions respectively. We are now ready to state our main result.
\begin{theorem}\label{mm_lb_LSR_TGLM}
    Consider the rank-$(r_1, \cdots, r_K)$ and separation rank $S$ LSR-TGLM problem in \eqref{LSR-TGLM} with $n$ i.i.d observations, $\bigl\{ \tnsrX_i, y_i\bigr\}_{i=1}^{n}$, where $\tvec(\tnsrX_i) \sim \cN(\mathbf{0},\bSigma_x)$, and the true coefficient tensor $\norm{\tnsrB}_F^2 < d^2$. 
    The minimax risk $\varepsilon^*$ is then lower bounded by
    \begin{align}
       \varepsilon^* \geq \frac{\left(\frac{1}{16} \right) S \sum_{k\in [K]} (m_k - 1)r_k + \prod_{k\in[K]}(r_k  - 1) -1}{128 M\norm{\bSigma_x}_2 n}, \label{mm_LB}
    \end{align}
    where $M$ is defined in Assumption~\ref{derivative_bound}.
\end{theorem}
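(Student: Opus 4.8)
The plan is to derive \eqref{mm_LB} through the information-theoretic reduction to multiple hypothesis testing based on Fano's inequality, following \citet{yu1997assouad}. The two quantities that ultimately control the bound are the cardinality and separation of a packing set of LSR-structured tensors, and the mutual information $\mi{\by}{l}$ between the observations and the hypothesis index. First I would set up the reduction. Suppose we construct a finite collection $\cB_L = \{\tnsrB_1, \dots, \tnsrB_L\} \subset \cB_d(\mathbf{\underline{0}})$ of LSR-structured tensors that are $\delta$-separated, i.e.\ $\norm{\tnsrB_i - \tnsrB_j}_F \geq \delta$ for $i \neq j$. Since $\cB_L \subset \cB_d(\mathbf{\underline{0}})$, the minimax risk over the ball dominates the risk over $\cB_L$, which by the standard testing reduction satisfies $\varepsilon^* \geq \tfrac{\delta^2}{4}\inf_{\psi}\max_{l}\bbP_l(\psi \neq l)$, where $\psi$ ranges over all tests for the index $l$. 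Fano's inequality then lower bounds this error probability:
\begin{align}
    \inf_{\psi}\max_{l}\bbP_l(\psi \neq l) \geq 1 - \frac{\mi{\by}{l} + \log 2}{\log L}.
\end{align}

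Next I would bound the mutual information. Using the generic inequality $\mi{\by}{l} \leq \tfrac{1}{L^2}\sum_{i,j} D\bigl(\bbP_{\tnsrB_i}^{\otimes n} \,\|\, \bbP_{\tnsrB_j}^{\otimes n}\bigr)$ together with the i.i.d.\ structure, it suffices to control the single-sample KL divergence. For the exponential family \eqref{exp_dist}, the KL divergence between two GLMs is the Bregman divergence generated by the cumulant $a(\cdot)$, which by a second-order expansion equals $\tfrac{1}{2}a''(\xi)\langle \tnsrB_i - \tnsrB_j, \tnsrX\rangle^2$ for an intermediate $\xi$. Assumption~\ref{derivative_bound} controls $a'$ (and hence, for the logistic and Poisson families of interest, the variance $a''$) by $M$, and Assumption~\ref{cov_dist} gives $\E_{\tnsrX}[\langle \tnsrB_i - \tnsrB_j, \tnsrX\rangle^2] \leq \norm{\bSigma_x}_2 \norm{\tnsrB_i - \tnsrB_j}_F^2$. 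Combining these yields
\begin{align}
    \mi{\by}{l} \leq \frac{n M}{2}\norm{\bSigma_x}_2 \max_{i,j}\norm{\tnsrB_i - \tnsrB_j}_F^2.
\end{align}

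The crux is the packing-set construction, which must produce the degrees of freedom $S\sum_k (m_k-1)r_k + \prod_k(r_k-1)$ appearing in the numerator. The term $(m_k-1)r_k$ counts the free perturbation directions of each factor matrix $\Bks{k}{s}$ (the $-1$ reflecting the orthonormality constraint on the Stiefel manifold $\bbO^{m_k \times r_k}$), taken over all $KS$ such matrices, while $\prod_k(r_k-1)$ counts the free parameters of the common core tensor $\tnsrG$. I would build the packing blockwise by a Varshamov--Gilbert argument: fix every parameter block but one, perturb that block along a local packing of the appropriate Stiefel manifold (for a factor matrix) or sphere in core-tensor space, indexed by binary codewords, and then assemble the resulting LSR tensors. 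The essential technical lemmas, which I expect to defer to the appendix, must establish \emph{two-sided} Frobenius control: a perturbation of size $\tau$ in a single block must change the assembled tensor by $\Theta(\tau)$, the lower bound guaranteeing $\delta$-separation and the upper bound keeping $\max_{i,j}\norm{\tnsrB_i-\tnsrB_j}_F^2 \asymp \delta^2$ so the KL estimate stays sharp. Here the LSR geometry is indispensable: orthonormal columns and the single shared core $\tnsrG$ are precisely what make the map from block perturbations to $\norm{\tnsrB_i - \tnsrB_j}_F$ bi-Lipschitz, so that distinct codewords yield genuinely distinct, well-separated tensors.

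Finally I would balance the terms. With $\log L$ proportional to the total degrees of freedom and $\max_{i,j}\norm{\tnsrB_i - \tnsrB_j}_F^2 \asymp \delta^2$, choosing $\delta^2$ so that $\mi{\by}{l} + \log 2 \leq \tfrac{1}{2}\log L$ forces the testing error to exceed $\tfrac{1}{2}$, whence $\varepsilon^* \geq \tfrac{\delta^2}{8} \asymp (\text{degrees of freedom})/(M \norm{\bSigma_x}_2 n)$; tracking the explicit constants recovers the factors $\tfrac{1}{16}$, the $-1$ in the numerator (absorbing the $\log 2$ Fano slack), and the $128$ in \eqref{mm_LB}. The main obstacle is unquestionably the packing-set construction: verifying the two-sided Frobenius bounds while simultaneously respecting the Stiefel constraints on every $\Bks{k}{s}$ and the single common core $\tnsrG$, and ensuring the blockwise perturbations combine additively in $\log L$ without the lower separation bound collapsing. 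This is exactly the step where the specific algebra of the LSR decomposition, as opposed to a generic BTD, does the real work.
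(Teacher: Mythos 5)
Your overall architecture matches the paper's proof: a Fano-type reduction to multiple hypothesis testing (the paper implements your testing-error step via a minimum-distance decoder, Lemma~\ref{cmi_LB_lemma}, rather than the explicit $\tfrac{\delta^2}{4}$ reduction, but the substance is identical), a Gilbert--Varshamov product packing of LSR tensors (Lemma~\ref{GV_bound}, Corollary~\ref{GV_bound_cor}), two-sided Frobenius control of pairwise distances (Lemma~\ref{packing_lemma}, Eq.~\eqref{distance_bounds}), the pairwise-KL bound on the (conditional) mutual information \eqref{cmi}, and the final balancing of $\log L$ against the KL radius. You also correctly identify the crux: the bi-Lipschitz map from block perturbations to $\norm{\tnsrB_l - \tnsrB_{l'}}_F$, enforced despite the Stiefel constraints and the shared core. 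The paper realizes this by embedding binary codewords via fixed orthonormal bases with a fixed leading coordinate in each column, Gram--Schmidt orthonormalizing the factor matrices, and proving the lower half of \eqref{distance_bounds} through an equiangularity argument (Lemma~\ref{equiangular_bound}) --- the technical lemmas you rightly anticipated deferring.

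The one step that would fail as written is your KL bound. You expand the exponential-family KL to second order, $\tfrac{1}{2}a''(\xi)\langle \tnsrB_i - \tnsrB_j, \tnsrX\rangle^2$, and then invoke Assumption~\ref{derivative_bound} to control $a''$. But Assumption~\ref{derivative_bound} bounds only the \emph{first} derivative $a'(\eta) \leq M$; since $a$ is convex, a uniform bound on $a'$ gives no uniform bound on $a''$ (the curvature can spike while $a'$ stays below $M$), and the theorem is stated for general exponential-family GLMs, not just the logistic and Poisson cases your parenthetical covers. Moreover the intermediate point $\xi$ would require controlling $a''$ along the whole random segment between $\eta_i$ and $\eta_j$. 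The paper sidesteps this entirely (Lemma~\ref{cmi_UB_lemma}) by working with the exact first-order identity $D_{KL} = (\eta_l - \eta_{l'})\,a'(\eta_l) - a(\eta_l) + a(\eta_{l'})$, noting that the cumulant terms cancel in expectation over the Gaussian covariates because the natural parameters are identically distributed under the construction ($\E_{\tnsrX}[a(\eta_l)] = \E_{\tnsrX}[a(\eta_{l'})]$, which is exactly why the packing builds all hypotheses with matched norms), and then applying Cauchy--Schwarz together with $a' \leq M$ and $\E[\langle \tnsrB_l - \tnsrB_{l'}, \tnsrX\rangle^2] \leq \norm{\bSigma_x}_2 \norm{\tnsrB_l - \tnsrB_{l'}}_F^2$; the upper half of \eqref{distance_bounds} then yields \eqref{exp_KL}. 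Replacing your Taylor expansion with this first-order treatment repairs the gap, and is the reason Assumption~\ref{derivative_bound} alone suffices; nothing else in your plan needs to change.
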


Theorem \ref{mm_lb_LSR_TGLM} can be specialised to the Tucker and CP decomposition regression problems in the existing literature. That is, when $S=1$ we have the Tucker model, and when $r_1 = r_2 =\dots = r_K = r$ and $\tnsrG$ is constructed as a diagonal tensor, we have the CP model. With these specialisations we obtain the following corollaries:

\begin{corollary}\label{mm_lb_Tucker_TGLM}
    Consider the rank-$(r_1, \cdots, r_K)$ Tucker-TGLM problem \citep{li2018tucker, zhang2020islet, zhang2016decomposition} with $n$ i.i.d observations, $\bigl\{ \tnsrX_i, y_i\bigr\}_{i=1}^{n}$, where $\tvec(\tnsrX_i) \sim \cN(\mathbf{0},\bSigma_x)$ and the true coefficient tensor $\norm{\tnsrB}_F^2 < d^2$. 
    The minimax risk for this problem is lower bounded by
    \begin{align}
       \varepsilon^* \geq \frac{\left(\frac{1}{16} \right) \sum_{k\in [K]} (m_k - 1)r_k + \prod_{k\in[K]}(r_k - 1) -1}{128M\norm{\bSigma_x}_2 n}. 
    \end{align}
\end{corollary}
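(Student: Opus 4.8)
The plan is to obtain Corollary~\ref{mm_lb_Tucker_TGLM} as a direct specialization of Theorem~\ref{mm_lb_LSR_TGLM} by setting the separation rank to $S=1$. The first step is to verify that the Tucker-TGLM problem is literally the $S=1$ instance of the LSR-TGLM problem. Setting $S=1$ in the LSR decomposition~\eqref{lsr_model} yields $\tnsrB = \tnsrG \times_1 \Bks{1}{1} \times_2 \dots \times_K \Bks{K}{1}$, which, after identifying $\Bks{k}{1}$ with the Tucker factor matrix $\bB_k$, is exactly the Tucker decomposition~\eqref{tucker_model}. Correspondingly, the parameter space $\cP_{\{r_k\},1}$ in~\eqref{param_space} reduces to the space of rank-$(r_1,\dots,r_K)$ Tucker tensors with factor matrices $\bB_k \in \bbO^{m_k \times r_k}$, and the energy constraint $\norm{\tnsrB}_F^2 < d^2$ together with the covariate assumption $\tvec(\tnsrX_i)\sim\cN(\mathbf{0},\bSigma_x)$ are inherited unchanged. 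Hence the data-generating model, the hypothesis class, and all distributional assumptions coincide with those of Theorem~\ref{mm_lb_LSR_TGLM} at $S=1$.

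Having established this identification, the second step is simply to substitute $S=1$ into the minimax lower bound~\eqref{mm_LB}. The numerator $\left(\tfrac{1}{16}\right)S\sum_{k\in[K]}(m_k-1)r_k + \prod_{k\in[K]}(r_k-1) - 1$ collapses to $\left(\tfrac{1}{16}\right)\sum_{k\in[K]}(m_k-1)r_k + \prod_{k\in[K]}(r_k-1) - 1$, while the denominator $128 M\norm{\bSigma_x}_2 n$ does not depend on $S$ and is therefore unchanged. This produces precisely the claimed bound.

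The only point requiring genuine care --- and the step I would treat as the main obstacle --- is to confirm that the packing-set construction underlying Theorem~\ref{mm_lb_LSR_TGLM} remains valid, and is not degenerate, when $S=1$. One must check that the intrinsic-degrees-of-freedom count used to build the local packing decomposes as a sum in which the factor-matrix contribution $S\sum_k(m_k-1)r_k$ (from the $KS$ orthonormal factors) scales linearly in $S$, while the core-tensor contribution $\prod_k(r_k-1)$ is independent of $S$, since the LSR model uses a \emph{single common} core $\tnsrG \in \R^{r_1\times\dots\times r_K}$ whose dimensions do not change with $S$. If the proof of Theorem~\ref{mm_lb_LSR_TGLM} constructs its packing by perturbing each of the $S$ summands' factor matrices independently around a common core, then the $S=1$ case corresponds to perturbing a single Tucker term, and no part of the argument --- neither the mutual-information upper bound controlled by $\norm{\bSigma_x}_2$ nor the Fano/Varshamov--Gilbert counting --- relies on $S\geq 2$. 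Verifying this non-degeneracy, rather than performing any new computation, is all that is needed, after which the corollary follows immediately.
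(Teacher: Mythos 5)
Your proposal is correct and takes essentially the same route as the paper, which obtains Corollary~\ref{mm_lb_Tucker_TGLM} precisely by observing that the $S=1$ instance of the LSR model is the Tucker model and substituting $S=1$ into the bound of Theorem~\ref{mm_lb_LSR_TGLM}. Your additional verification that the packing construction does not degenerate at $S=1$ is sound (Lemma~\ref{packing_lemma} only requires $S>0$) and is, if anything, slightly more careful than the paper, which treats the specialization as immediate.
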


\begin{corollary}\label{mm_lb_CP_TGLM}
    Consider the rank-$r$ CP-TGLM problem \citep{zhou2013tensor, tan2012logistic} with $n$ i.i.d observations, $\bigl\{ \tnsrX_i, y_i\bigr\}_{i=1}^{n}$, where $\tvec(\tnsrX_i) \sim \cN(\mathbf{0},\bSigma_x)$ and the true coefficient tensor $\norm{\tnsrB}_F^2 < d^2$. 
    The minimax risk for this problem is lower bounded by
    \begin{align}
       \varepsilon^* \geq \frac{\left(\frac{1}{16} \right) \sum_{k\in [K]} (m_k - 1)r + (r - 1) -1}{128M\norm{\bSigma_x}_2 n}. 
    \end{align}
\end{corollary}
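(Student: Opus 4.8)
The plan is to obtain this bound as a specialization of Theorem~\ref{mm_lb_LSR_TGLM}, since the rank-$r$ CP model is exactly the LSR model with separation rank $S=1$, common mode rank $r_1=\dots=r_K=r$, and a \emph{diagonal} core tensor $\tnsrG \in \R^{r\times\dots\times r}$; equivalently, one may start from the Tucker bound of Corollary~\ref{mm_lb_Tucker_TGLM} and further impose the diagonal-core constraint. First I would revisit the packing-set construction underlying the proof of Theorem~\ref{mm_lb_LSR_TGLM} and substitute $S=1$ and $r_k=r$. The factor-matrix portion of that construction contributes $S\sum_{k\in[K]}(m_k-1)r_k$ intrinsic degrees of freedom; under these substitutions this collapses to $\sum_{k\in[K]}(m_k-1)r$, which accounts for the $\tfrac{1}{16}\sum_{k\in[K]}(m_k-1)r$ term in the numerator and carries the same constant $\tfrac{1}{16}$ inherited from the main proof.

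The one genuinely new ingredient is the core-tensor packing. In the general LSR (and in the Tucker, $S=1$) case the core is a full tensor contributing $\prod_{k\in[K]}(r_k-1)$ degrees of freedom, but the CP constraint forces $\tnsrG$ to be diagonal, so only its $r$ diagonal entries are free. I would therefore replace the full-core packing with a packing over diagonal cores: construct a collection of diagonal tensors whose free entries are well-separated subject to the same energy/normalization constraint used in the main proof, yielding $(r-1)$ effective degrees of freedom in place of $\prod_{k\in[K]}(r_k-1)$. This produces the isolated $(r-1)$ term in the numerator, which — as in Theorem~\ref{mm_lb_LSR_TGLM} — does not carry the $\tfrac{1}{16}$ factor because it arises from a separate packing argument over the core.

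With the packing cardinality $L$ now governed by $\tfrac{1}{16}\sum_{k\in[K]}(m_k-1)r + (r-1)$, the remaining steps are inherited verbatim from the proof of Theorem~\ref{mm_lb_LSR_TGLM}: the pairwise Frobenius separation lower bound over the CP packing, the upper bound on the mutual information $\mi{\by}{l}$ through the KL divergence between the GLM observation distributions (which enters the data only through $M$ and $\norm{\bSigma_x}_2$ via Assumptions~\ref{cov_dist} and~\ref{derivative_bound}), and the final application of Fano's inequality with its additive $-1$ slack term. None of these steps sees the diagonal structure of $\tnsrG$ directly, so they transfer without modification and reproduce the $128 M\norm{\bSigma_x}_2 n$ denominator.

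The main obstacle I anticipate is verifying that restricting to diagonal cores preserves the two competing requirements of a good packing \emph{simultaneously}: a sufficiently large pairwise Frobenius distance and a sufficiently small pairwise KL divergence. In particular, I must confirm that the mixed packing — one sub-family perturbing the orthonormal factor matrices with the diagonal core held fixed, and another perturbing the diagonal core entries with the factors held fixed — still consists of valid, distinct CP-structured tensors in $\cP_{\{r_k\},1}$, and that combining the two sub-families does not degrade the separation established for the full-core case. Once this compatibility is checked, the Fano computation yields precisely the stated bound.
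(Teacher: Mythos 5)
Your proposal matches the paper's own route: the paper obtains Corollary~\ref{mm_lb_CP_TGLM} precisely by specialising Theorem~\ref{mm_lb_LSR_TGLM} with $S=1$, $r_1=\dots=r_K=r$, and a diagonal core, which is exactly what you do, and your observation that the core packing must be redone over the $r$ diagonal entries (replacing $\prod_{k\in[K]}(r_k-1)$ by $r-1$ rather than naively substituting into the theorem's formula) is the correct reading of why the corollary's numerator takes its stated form. Your closing compatibility check — that the diagonal-core sub-family still yields valid, well-separated CP-structured tensors so that Lemmas~\ref{GV_bound}--\ref{cmi_LB_lemma} transfer — is the implicit content of the paper's one-line specialisation, so the two arguments coincide.
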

Table~\ref{lb_comp_table} provides a summary of the minimax lower bounds from this work and the relevant current literature. Specifically, it shows the order-wise lower bounds on the minimax risk for logistic regression, linear regression and GLMs for several tensor structures on the model parameter. We make three remarks prior to our discussion of Table~\ref{lb_comp_table}. First, we define the terms $\wtm = \prod_{k\in[K]} m_k$ and $\wtr = \prod_{k\in[K]} r_k$. Additionally, $\sigma_y^2$ is the variance of observation $y$, $D$ is a fixed constant and an upper bound on the second derivative of the cumulant function $(a''(\eta))$, and we place a dash (\textminus) in the cells where there are no specific results in the existing literature. However, note that our derived minimax bounds for tensor-structured GLMs cover these gaps in the literature (namely the case of CP-structured linear and logistic regression, and Tucker-structured logistic regression). Secondly, the order-wise lower bounds reported for the unstructured case are in fact bounds on the minimax risk for prediction error, rather than estimation error. Nonetheless we include them in our comparison as the sample complexities of estimation and prediction in a  given problem tend to be proportional. 
Thirdly, here we have reported lower bounds on the minimax risk for non-sparse regression, as we are not studying sparsity in the scope of this work.

\newcolumntype{M}[1]{>{\centering\arraybackslash}m{#1}}
\begin{table}[ht]
    \centering
    \renewcommand{\arraystretch}{1.2}
    \begin{tabular}{cM{2.5cm}M{2.5cm}M{4.0cm}M{2.8cm}}
    \toprule
    \multicolumn{1}{l}{ } 
        & \multicolumn{4}{c}{\textbf{Structure of} $\tnsrB$} \\
    \cmidrule{2-5}
    \textbf{Regression} 
        & \textbf{Unstructured} 
        & \textbf{CP} 
        & \textbf{Tucker} 
        & \textbf{LSR}  \\
    \midrule
     \textbf{Linear}  
         & $\dfrac{\sigma_y^2\wtm}{n}$
         & \textminus
         & $\dfrac{\sigma_y^2\left( \sum\limits_{k\in[K]} m_kr_k - r_k^2 + \wtr\right)}{n}$
         & \textminus 
    \\
        ~
        & \small  \citep{raskutti2011minimax}
        & ~
        & \small  \citep{zhang2020islet} 
        & ~
    \\
        & & & &
    \\
    \textbf{Logistic} 
        & $\dfrac{\wtm}{n}$
        & \textminus 
        & \textminus 
        & \textminus 
    \\
        ~
        & \small \citep{abramovich2016model}
        & ~
        & ~ 
        & ~
    \\
        & & & &
    \\
    \textbf{GLM}  
        & $\dfrac{\sigma_y^2\wtm}{Dn}$
        & $\dfrac{ \sum\limits_{k\in[K]} m_kr +  r }{M\norm{\bSigma_x}_2n}$ 
        & $\dfrac{ \sum\limits_{k\in[K]}  m_kr_k + \wtr}{M\norm{\bSigma_x}_2n}$
        & $\dfrac{S \sum\limits_{k\in[K]}  m_k r_k + \wtr}{M\norm{\bSigma_x}_2n}$
        \\
        ~
        & \small \citep{lee2020minimax}
        & Corollary~\ref{mm_lb_CP_TGLM} 
        & Corollary ~\ref{mm_lb_Tucker_TGLM} 
        & Theorem~\ref{mm_lb_LSR_TGLM}
    \\
    \bottomrule
\end{tabular}
\caption{\label{lb_comp_table} 
Summary of order-wise lower bounds on the minimax risk for linear regression, logistic regression and GLM settings and for various tensor structures (unstructured, CP, Tucker and LSR). For unstructured problems, $\wtm = \prod_{k\in[K]} m_k$ is the total number of elements of the tensor. For Tucker and LSR models, $\wtr = \prod_{k\in[K]} r_k$.}
\end{table}

In the first two rows of the table, we report the order-wise minimax lower bounds for linear and logistic regression. In all cells of the first two rows, we see that the minimax risk decreases proportionally with increasing sample size. In the unstructured case, the minimax risk is proportional to the number of learnable parameters in the model $(\prod_km_k)$. For linear regression, however, this only applies if the number of parameters is less than $n$. This condition puts this result at a disadvantage for the high-dimensional setting under study. 
We can see that the minimax risk of linear regression in the Tucker-structured tensor case is proportional to the number of parameters in the Tucker model $(\sum_k m_kr_k +\prod_k r_k)$. The numerators $(\prod_km_k)$ and $(\sum_k m_kr_k +\prod_k r_k)$ give insights into the parameters on which an achievable minimax risk might depend, and thus give insights into the sample complexity of the parameter estimation problem. We can see that imposing a low-rank Tucker structure may significantly decrease the number of learnable parameters. 

The third row is the row of interest as it summarises results for GLMs. Similar to the previous rows, the minimax risk decreases with an increase in sample size. We also have a dependence on $\sigma_y^2$ and $\frac{1}{D}$.  Intuitively speaking, this means that the minimax risk increases with the variance of $y$. In the unstructured case the sample complexity is $(\prod_km_k)$, and does not account for any tensor-structured GLM settings. Theorem~\ref{mm_lb_LSR_TGLM} and Corollaries~\ref{mm_lb_Tucker_TGLM} and \ref{mm_lb_CP_TGLM} make up the last three columns and address this gap in the literature. Therefore, our proposed minimax lower bound is a unified bound, as it encompasses minimax lower bounds for the Tucker and CP-structured GLM models that were introduced in prior works \citep{zhou2013tensor, li2018tucker}, a feature that has yet to be examined in existing literature. The minimax risk is proportional to the number of parameters for the CP, Tucker and LSR case. This is intuitively pleasing as it suggests that by imposing either the CP, Tucker or LSR structure on $\tnsrB$, one can significantly reduce the sample complexity from the unstructured case to $\sum_{k}m_kr +  r $, $\sum_{k} m_kr_k + \prod_{k}r_k $, and $S \sum_{k} m_k r_k + \prod_{k}r_k$, respectively, and that we can develop algorithms (such as LSRTR in Algorithm~\ref{algo}) that meet these bounds. As previously mentioned, the CP model induces the lowest number of parameters and the LSR model has $(S-1)\sum_km_kr_k$ more parameters than the Tucker model. Additionally, our results also imply an inverse relationship between the minimax risk and $\norm{\bSigma_x}_2$ and $M$, which is an intuitively pleasing outcome of our analysis. The variance of the covariates $\bx$ (and thus $\norm{\bSigma_x}_2$) symbolises the signal power held by the GLM, in the sense that the signal-to-noise ratio increases with increasing variance of $\bx$. To illustrate perspicuously, we can take the specific case of binary logistic regression: In this setting, an increased variance of $\bx$ induces a more varied natural parameter, $\eta$, which in turn causes easier distinction between the two response classes. In a similar manner, a smaller variance of $\bx$ induces a more difficult classification problem (the classes are harder to distinguish). In the extreme case, a variance of $0$ causes all observations to collapse onto a single point, making the classes indistinguishable. Therefore with increased variance of $\bx$ (and thus $\norm{\bSigma_x}_2$), the estimation error should decrease. This argument is also consistent with various other minimax lower bounds in the literature \citep{barnes2019minimax, shakeri2016minimax}.

Lastly, though we do not provide any theoretical guarantees for the tightness of our bounds in Theorem~\ref{mm_lb_LSR_TGLM} and Corollaries~\ref{mm_lb_Tucker_TGLM} and \ref{mm_lb_CP_TGLM}, we can see that our minimax bound for the Tucker-GLM meets that of the Tucker linear regression works by \citet{zhang2020islet}, (we do not consider the $r_k^2$ term in~\citep{zhang2020islet} as that only accounts for the non-singular transformation indeterminacy, which we do not discuss in the scope of this work). \citet{zhang2020islet} have shown that their bounds are indeed optimal, which suggests that the minimax lower bounds derived in Theorem~\ref{mm_lb_LSR_TGLM} and Corollaries~\ref{mm_lb_Tucker_TGLM} and \ref{mm_lb_CP_TGLM} are also tight. Moreover, we now provide a numerical analysis to assess the tightness of our bounds. 

\subsection{Tightness of Theorem~\ref{mm_lb_LSR_TGLM}}\label{tightness}
We utilise the results of our experiments on synthetic data in Section~\ref{numerical_study} to investigate the tightness of the minimax lower bound in Theorem \ref{mm_lb_LSR_TGLM}. We have shown that our result matches an existing bound in the literature for the specific case of Tucker linear regression. Now, we perform an empirical assessment of our result. Figure~\ref{ratios} shows the ratio of the mean empirical error (over $50$ repetitions) of LSRTR from our experiments in Section~\ref{2d_exp} to our obtained lower bound in Theorem~\ref{mm_lb_LSR_TGLM}. We do this for linear and logistic regression and we can see the ratio is approximately constant as a function of sample size for both regression types. Such a result suggests that our bound may be achievable in the sense that we can develop algorithms that meet the minimax lower bound and take advantage of the LSR tensor structure to lower the sample complexity of GLM problems. 
\begin{figure}[ht!]
     \centering
     \begin{subfigure}[b]{0.44\textwidth}
         \centering
         \includegraphics[width=\textwidth]{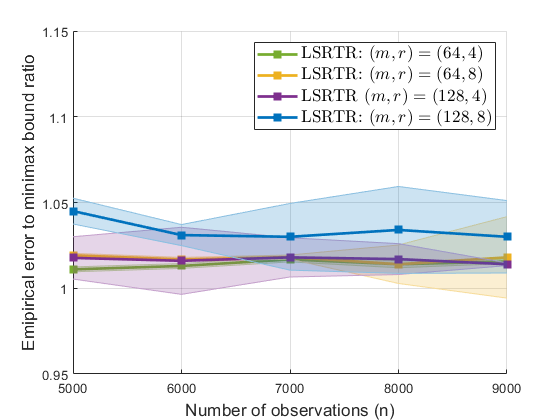}
         \caption{ }
         \label{linear_ratio}
     \end{subfigure}
     \hfill
     \begin{subfigure}[b]{0.44\textwidth}
         \centering
         \includegraphics[width=\textwidth]{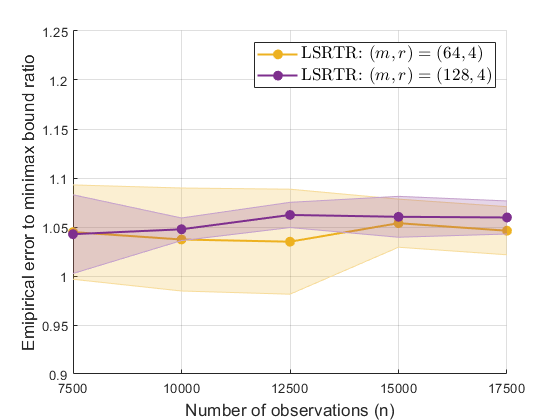}
         \caption{ }
         \label{log_ratio}
     \end{subfigure}
        \caption{Comparison of the empirical error in LSRTR to the minimax lower bound for linear and logistic regression. $(a)$ shows the ratio of the empirical error in linear regression to the minimax lower bound for $m=64,128$ and $r=4,8$. $(b)$ shows the ratio of the empirical error in logistic regression to the minimax bound for $m=64,128$ and $r=4$.}
        \label{ratios}
\end{figure}
\subsection{Roadmap for Proof of Theorem~\ref{mm_lb_LSR_TGLM}} \label{roadmap}
We begin by introducing a few important concepts. First, in order to set up the multiple hypothesis testing problem consider the constructed packing set 
\begin{align}
    \cB_L = \{ \tnsrB_l\in\cP_{\{r_k\},S} \colon l \in [L]\} \subset \cB_d(\mathbf{\underline{0}}),~L\in\N, \label{hyp_set}
\end{align}
from which the true index $l$ corresponding to the true coefficient tensor is generated uniformly at random.  In order to ensure a tight bound on our minimax risk, $\cB_L$ must possess the following properties: $(i)$ The minimum packing distance between any two distinct hypotheses $\tnsrB_l,\tnsrB_{l'} \in \cB_L$, $l,l' \in [L], ~l\neq l'$ is large. Specifically, for a strictly positive parameter $\delta$ we require a construction such that $\norm{\tnsrB_l - \tnsrB_{l'}}_F \geq \sqrt{8\delta}$. $(ii)$ The hypothesis testing problem is hard in the sense that it is difficult to detect the true index $l$ (and thus the true coefficient tensor) based on an observation $y$. $(iii)$ The construction of $\cB_L$ must induce lower bounds on the minimax risk that reflects the reduction in the sample complexity of the LSR tensor structure, i.e., we require a bound that exhibits a relation between the intrinsic degrees of freedom of the LSR tensor $(\{m_k\}_{k \in [K]}, \{r_k\}_{k \in [K]}, S)$ and the number of samples $(n)$.
Secondly, since the objective of the multiple hypothesis testing problem is to detect the index $l$, it is solved through the minimum distance decoder
\begin{align}
    \widehat{l}(\by) \triangleq \underset{l' \in [L]}{\argmin} \norm{\widehat{\tnsrB} - \tnsrB_{l'}}_F \label{min_dist_dec},
\end{align}
where $\widehat{\tnsrB}$ is estimated through any learning algorithm, such as Algorithm~\ref{algo} proposed in this work. In the LSR-TGLM problem we have the following hypothesis detection criteria:
\begin{itemize}
    \item If $\norm{\widehat{\tnsrB} - \tnsrB_l}_F \leq \sqrt{2\delta}$:  $\bbP(\widehat{l}(\by) \neq l) = 0$.
    \item If $\norm{\widehat{\tnsrB} - \tnsrB_l}_F > \sqrt{2\delta}$: A detection error \textbf{might} occur, $\bbP(\widehat{l}(\by) \neq l) \geq 0$. 
\end{itemize}
The minimum distance decoder detects the true hypothesis if the estimate $\widehat{\tnsrB}$ lies within the open ball $\cB(\tnsrB_l,\sqrt{2\delta})$, and a detection error can only occur if $\norm{\widehat{\tnsrB} - \tnsrB_l}_F>\sqrt{2\delta}$. Thus the probability of error is bounded by
\begin{align}
    \bbP(\widehat{l}(\by) \neq l) \leq \bbP\Big(\norm{\widehat{\tnsrB} - \tnsrB_l}_F \geq \sqrt{2\delta}\Big). \label{prob_error_bound}
\end{align}
In this approach we interpret the coefficient estimation problem as a communication problem. From the set in \eqref{hyp_set}, the source selects the true hypothesis $\tnsrB_l$ (the true coefficient tensor) uniformly at random. The hypothesis $\tnsrB_l$ then generates the channel output according to a `channel model' as in \eqref{LSR-TGLM} with a chosen link function $g(\cdot)$. Specifically, the channel outputs $y_i$ such that $\E[y_i|\tnsrX_i] = \mu_i = g^{-1}(\langle\tnsrB_l, \tnsrX_i\rangle)$. The minimum distance decoder then successfully recovers the true hypothesis if the estimator $\widehat{\tnsrB}$ is within a ball of radius $\sqrt{2\delta}$ around $\tnsrB_l$. Thirdly, we recognise that the problem under study is supervised and the response variables $y_i ~\forall i\in [n]$ are conditioned on input data $\tnsrX_i ~\forall i \in [n]$. On the basis thereof we opt to evaluate the conditional mutual information $\cmi{\by}{l}{\mathcal{X}} \geq \mi{\by}{l}$. 

\begin{figure}[ht]
\centering
\includegraphics[width=1\textwidth]{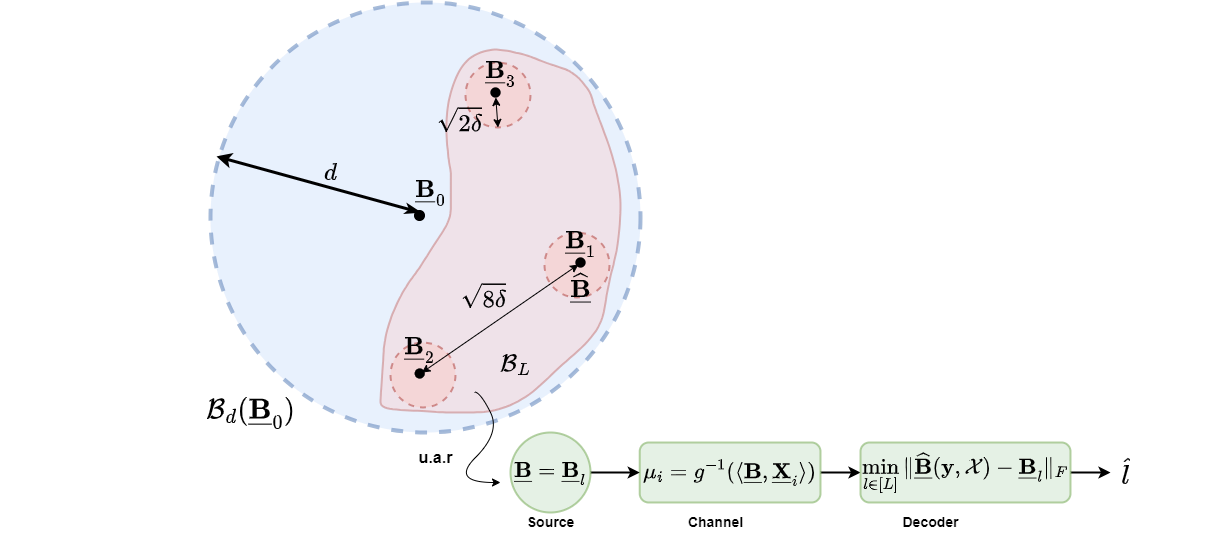}
\caption{Information theoretic approach for deriving bounds on the minimax risk. Consider a packing set $\cB_L = \{ \tnsrB_l \colon l \in [L]\}\subset \cB_d(\mathbf{\underline{0}})$ with minimum distance between any two elements as $\sqrt{8\delta}$. The true coefficient tensor is selected uniformly at random (u.a.r) and the LSR-TGLM model generates the channel output. We detect the true coefficient tensor using the minimum distance decoder. Here, the minimum distance decoder will detect $\tnsrB_1$ as the true hypothesis.}
\label{bl_channel_decoder}
\end{figure}
We use the aforementioned concepts to achieve our result in \eqref{mm_LB} through the accomplishment of the following tasks: $1)$ We must construct the set $\cB_L$ as an exponentially large (with respect to the dimensions of an LSR-structured tensor) family of $L$ distinct LSR structured tensors, and satisfying the properties described above. This involves constructing $KS + 1$ individual sets (for the $KS$ factor matrices $\{\Bks{k}{s}\}_{k \in [k], ~s\in[S]}$ and core tensor $\tnsrG$) and deriving conditions under which all sets can exist simultaneously (with high probability). $2)$ We must find tight bounds on the conditional mutual information $\cmi{\by}{l}{\mathcal{X}}$. As explained in Section~\ref{prob_state}, `Off-the-shelf' packing sets and bounds on the minimax risk for the unstructured (vector-based), low-rank matrix, or low-rank Tucker-structured parameter estimation problems are not useful in our application as these results do not capture the LSR tensor structure and would yield suboptimal lower bounds. Thus designing the right geometric insight into LSR tensors -- by constructing a packing that respects the factorisation structure and constraints (e.g., orthogonality) on the factor matrices -- is novel, as our goal is to understand the benefits of imposing the LSR structure on the problem's sample complexity. The novelty in our work is that we explicitly leverage this structure leading to a hypothesis set $\cB_L$ with a structure that cannot be achieved through current methods. Another challenge in using this technique is actually bounding the relevant information-theoretic quantities. To that end, we derive a new tight bound on the Kullback-Leibler divergence in tensor-structured GLMs. Our bound involves explicitly accounting for the link function with an LSR-structured coefficient tensor. This is a non-trivial derivation, as analysing the link function entails the careful construction of a packing set and the derivation of tight bounds on the packing distance as in Lemma~\ref{packing_lemma}. The results in Lemma~\ref{packing_lemma} and the bounds on the KL-divergence can be used in other problems involving LSR tensors and distinguish our work from previous studies. Lastly, though our analysis is local and depends on a fixed tensor in a neighbourhood with known radius, our minimax risk becomes independent of this radius if it becomes sufficiently large. 

\subsection{Proof of Theorem,~\ref{mm_lb_LSR_TGLM}}\label{proof}
Four lemmas lay the foundation to the formal proof of Theorem \ref{mm_lb_LSR_TGLM}. Through Lemmas~\ref{GV_bound},~\ref{packing_lemma},~\ref{cmi_UB_lemma} and \ref{cmi_LB_lemma} we achieve the following: We construct the hypothesis set $\cB_L$ containing $L$ distinct tensors. Since each $\tnsrB_l \in\cB_L$ is LSR-structured, as defined in~\eqref{lsr_model}, we construct $\cB_L$ by constructing $KS+1$ individual sets that correspond to $KS$ sets for $\Bks{k}{s}$ $(k\in[K],~s\in[S])$ and a set for $\tnsrG$. Our construction of $\cB_L$ results in tight upper and lower bounds on the distance $\norm{\tnsrB_l - \tnsrB_{l'}}_F^2$ for any two distinct $l,l'\in[L]$, sampled uniformly at random (u.a.r), which will be used to derive bounds on the mutual information $\mi{\by}{l}$ and minimax risk $\varepsilon^*$. Complete proofs of these lemmas are in the appendix.  
Lemma \ref{GV_bound} (and Corollary \ref{GV_bound_cor}) introduce the sets of `generating' vectors (and matrices) from which we will later construct $\tnsrG$ (and $\Bks{k}{s}$ for all ${k \in [k], ~s\in[S]}$), that are in turn used to construct each $\tnsrB_l$. Specifically, for some $\alpha\in \N$, our aim is to construct a set of $(\alpha)$-dimensional vectors with entries sampled uniformly at random from $\left\{\frac{-1}{\sqrt{\alpha}},\frac{1}{\sqrt{\alpha}}\right\}$, with some designated minimum packing distance in the Hamming metric between any two vectors. Topologically speaking, this can also be viewed as the construction of a subset of an $(\alpha)$-dimensional hypercube with a required minimum Hamming distance between any two distinct elements in the set. Lemma \ref{GV_bound} and its subsequent corollary utilise the Gilbert-Varshamov bound on constructing binary codes with minimum distance in order to construct the `generating' sets we have just described. The following can be found in the book by Tsybakov \citep[Lemma 2.9]{tsybakov2009introduction}, where we restate the bound here in the interest of keeping this work self-contained. 

\begin{lemma}[Gilbert-Varshamov Bound, Lemma $2.9$ \citep{tsybakov2009introduction}]
\label{lem:simplegv}
Let $d \ge 8$. Then there exists a subset $\cC \subset \{0,1\}^d$ of size
    \begin{align}
    |\cC| \ge 2^{d/8}
    \end{align}
such that for any pair $\bv, \bv' \in \cC$, we have $\norm{ \bv - \bv'}_1 \ge \frac{d}{8}$.
\end{lemma}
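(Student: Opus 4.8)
The plan is to prove this by the classical greedy maximal-packing argument coupled with a volumetric (sphere-covering) count, which is the standard route to the Gilbert--Varshamov bound. First I would construct $\cC$ greedily: initialise $\cC = \{\bv^{(1)}\}$ for an arbitrary $\bv^{(1)} \in \{0,1\}^d$, and repeatedly adjoin any point $\bw \in \{0,1\}^d$ satisfying $\norm{\bw - \bv}_1 \ge d/8$ for every $\bv$ already in $\cC$, stopping only when no such point remains. By the selection rule, every pair of distinct elements of the resulting set is at Hamming distance at least $d/8$, so the minimum-distance requirement holds automatically; it remains only to lower-bound $|\cC|$.

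The size bound comes from maximality. When the procedure halts, no point of the cube can be adjoined, meaning every $\bw \in \{0,1\}^d$ lies at Hamming distance strictly less than $d/8$ from some element of $\cC$. Hence the closed Hamming balls of radius $r \triangleq \lceil d/8\rceil - 1$ centred at the codewords cover the entire cube, which yields the counting inequality
\begin{align}
    |\cC| \cdot \sum_{i=0}^{r}\binom{d}{i} \ge 2^d.
\end{align}
The hypothesis $d \ge 8$ guarantees $d/8 \ge 1$, so that $r \ge 0$ and the balls are nondegenerate; this is precisely where the assumption is used. The remaining work is to control the ball volume $V \triangleq \sum_{i=0}^{r}\binom{d}{i}$, for which I would invoke the standard binomial-sum estimate $\sum_{i=0}^{\lfloor \lambda d\rfloor}\binom{d}{i} \le 2^{H_2(\lambda)d}$, valid for $\lambda \le 1/2$, where $H_2(\lambda) = -\lambda\log_2\lambda - (1-\lambda)\log_2(1-\lambda)$ is the binary entropy. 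Since $r < d/8$ and $1/8 \le 1/2$, this gives $V \le 2^{H_2(1/8)d}$, and substituting into the counting inequality produces $|\cC| \ge 2^{(1 - H_2(1/8))d}$.

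I expect the only genuinely delicate point to be the closing numerical check: one must verify $1 - H_2(1/8) \ge 1/8$. A direct evaluation gives $H_2(1/8) = \tfrac{3}{8} + \tfrac{7}{8}\log_2(8/7) \approx 0.544$, so that $1 - H_2(1/8) \approx 0.456 > 0.125$, whence $|\cC| \ge 2^{0.456\, d} \ge 2^{d/8}$ as required. The margin here is comfortable, so no sharpening is needed. An alternative route that avoids the entropy estimate is the probabilistic method: draw $M = \lceil 2^{d/8}\rceil$ points independently and uniformly, note that each pairwise distance is $\mathrm{Binomial}(d,\tfrac12)$, bound $\bbP\bigl(\norm{\bv^{(i)}-\bv^{(j)}}_1 < d/8\bigr) \le e^{-9d/32}$ by Hoeffding's inequality, and union-bound over the fewer than $2^{d/4}/2$ pairs to show the bad event has probability strictly below one for all $d \ge 1$; this again exhibits the desired code. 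I would present the greedy/volumetric version as the primary argument, since it matches the cited source and isolates the one numerical inequality that requires care.
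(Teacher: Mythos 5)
Your proof is correct, but it is worth noting that the paper does not prove this lemma at all: it is imported verbatim as Lemma~2.9 of Tsybakov (2009), restated only to keep the paper self-contained, so your write-up supplies an argument where the paper simply defers to a citation. Your primary route --- greedy maximal packing, the covering inequality $|\cC|\cdot\sum_{i=0}^{r}\binom{d}{i}\ge 2^d$ with $r=\lceil d/8\rceil-1\le\lfloor d/8\rfloor$ (valid because Hamming distances are integers), and the entropy estimate $\sum_{i\le\lfloor d/8\rfloor}\binom{d}{i}\le 2^{H_2(1/8)d}$ --- is the classical volumetric proof, and your closing numerical check $1-H_2(1/8)\approx 0.456>1/8$ is accurate, with the comfortable slack you claim. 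Tsybakov's own proof is a pruning argument whose numerical ingredient is the binomial tail bound $\bbP\bigl(\mathrm{Bin}(d,\tfrac12)\le d/8\bigr)\le e^{-9d/32}$ via Hoeffding rather than the entropy function, so it is actually your alternative probabilistic sketch that is closer in spirit to the cited source; either exponent ($\approx 0.456$ via entropy, $\tfrac{9}{32}\log_2 e\approx 0.406$ via Hoeffding) beats the required $1/8$. Two cosmetic remarks that do not affect validity: first, the hypothesis $d\ge 8$ is not used where you say it is, since $r=\lceil d/8\rceil-1\ge 0$ holds for every $d\ge 1$; in fact your entropy argument proves the bound for all $d$ (and for $d<8$ the full cube works trivially, as distinct binary vectors are at distance $\ge 1>d/8$), so the assumption is merely inherited from the cited formulation. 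Second, in the probabilistic variant the pair count ``fewer than $2^{d/4}/2$'' ignores the ceiling in $M=\lceil 2^{d/8}\rceil$; the union bound still closes with ample room for $d\ge 8$, but the claim ``for all $d\ge 1$'' would need that rounding handled.
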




\begin{corollary}\label{cor:gv:pm1}
For any $d \ge 8$, there exists a set of binary vectors $\cC \subseteq \{-\alpha,\alpha\}^d$ of size $|\mathcal{C}| \geq 2^{d/8}$ such that for any pair $\bv, \bv' \in \cC$, we have $\norm{ \bv - \bv'}_0 \geq \frac{d}{8}$.
\end{corollary}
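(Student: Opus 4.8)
The plan is to transfer the Gilbert--Varshamov guarantee of Lemma~\ref{lem:simplegv} from the binary hypercube $\{0,1\}^d$ to the signed set $\{-\alpha,\alpha\}^d$ by means of a coordinate-wise affine bijection, keeping careful track of how the distance notion transforms. First I would introduce the map $\phi \colon \{0,1\}^d \to \{-\alpha,\alpha\}^d$ acting coordinate-wise by $\phi(\bu)_i = \alpha(2u_i - 1)$, so that $0 \mapsto -\alpha$ and $1 \mapsto \alpha$. This map is an affine bijection, hence injective and cardinality-preserving, which is what lets us carry over the size bound $2^{d/8}$ verbatim.

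The key observation I would then record is that $\phi$ preserves the disagreement pattern coordinate-by-coordinate: for each $i$ we have $\phi(\bu)_i = \phi(\bu')_i$ if and only if $u_i = u_i'$. Consequently the number of coordinates in which $\phi(\bu)$ and $\phi(\bu')$ differ equals the number in which $\bu$ and $\bu'$ differ, i.e.
\begin{align}
    \norm{\phi(\bu) - \phi(\bu')}_0 = \norm{\bu - \bu'}_0 .
\end{align}
In particular the rescaling by $\alpha$ is irrelevant here, since it never turns a nonzero coordinate of the difference into a zero one. Alongside this I would note the elementary identity that for binary vectors $\bu,\bu' \in \{0,1\}^d$ one has $\norm{\bu - \bu'}_1 = \norm{\bu - \bu'}_0$, because each coordinate of the difference lies in $\{-1,0,1\}$ and so its absolute value coincides with the indicator of disagreement.

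With these facts in hand the corollary follows by direct assembly. I would invoke Lemma~\ref{lem:simplegv} to obtain a set $\cC_0 \subset \{0,1\}^d$ with $|\cC_0| \ge 2^{d/8}$ and pairwise $\ell_1$ separation at least $d/8$; the norm identity upgrades this to a Hamming ($\ell_0$) separation of at least $d/8$. Setting $\cC \triangleq \phi(\cC_0)$ then yields a subset of $\{-\alpha,\alpha\}^d$ with $|\cC| = |\cC_0| \ge 2^{d/8}$, and by the distance-preservation property every pair $\bv, \bv' \in \cC$ satisfies $\norm{\bv - \bv'}_0 \ge \tfrac{d}{8}$, as required. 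I do not expect any genuine obstacle in this argument; the only point demanding care is the bookkeeping among the three coinciding notions of distance ($\ell_0$, $\ell_1$, and Hamming) and the verification that the affine scaling by $\alpha$ leaves the support of the difference vector untouched, so that the Hamming separation transfers intact.
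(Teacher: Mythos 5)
Your proposal is correct and follows essentially the same route as the paper's own proof: both apply the coordinate-wise relabeling $0 \mapsto -\alpha$, $1 \mapsto \alpha$ to the Gilbert--Varshamov packing of Lemma~\ref{lem:simplegv} and observe that this bijection leaves the support of each difference vector (and hence the $\ell_0$ separation $\geq d/8$) unchanged. Your explicit bookkeeping that $\norm{\bu - \bu'}_1 = \norm{\bu - \bu'}_0$ on $\{0,1\}^d$ is exactly the step the paper handles by noting every entry of $\bv - \bv'$ lies in $\{-2\alpha, 0, 2\alpha\}$.
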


Lemma~\ref{GV_bound} and Corollary~\ref{GV_bound_cor} introduce a set of `generating' binary vectors and matrices, respectively, with minimum distance.  
\begin{lemma}\label{GV_bound}
Let $r_k > 0 ~\forall k\in[K]$ and $\wtr = \prod_{k\in[K]} r_k$. For any integers  $(r_1, r_2, \cdots, r_k)$ such that $\wtr \geq 8$, there exists a collection of $F \geq 2^{(\wtr-1)/8}$ vectors $\{\sff \in \frac{1}{\sqrt{\wtr-1}}\{-1,1\}^{\wtr -1} \colon f \in[F]\}$ such that for all $f,f' \in [F]$ we have $\norm{\sff- \sfprime}_0 \geq (\wtr - 1)/8$.
\end{lemma}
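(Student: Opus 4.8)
The plan is to obtain Lemma~\ref{GV_bound} as a direct instantiation of the scaled Gilbert--Varshamov bound of Corollary~\ref{cor:gv:pm1}, applied in ambient dimension $d=\wtr-1$ with scale $\alpha=1/\sqrt{\wtr-1}$. All the mathematical substance already resides in Lemma~\ref{lem:simplegv}; the only remaining work is to translate the abstract binary code into the concrete collection of rescaled sign vectors $\{\sff\}$ asserted here, while tracking the correct dimension.

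First I would establish the intermediate Corollary~\ref{cor:gv:pm1}, since Lemma~\ref{lem:simplegv} is phrased over $\{0,1\}^d$ with the Hamming ($\ell_1$) metric. Consider the coordinatewise bijection $\varphi\colon\{0,1\}^d\to\{-\alpha,\alpha\}^d$ sending $0\mapsto-\alpha$ and $1\mapsto\alpha$. For any pair $\bv,\bv'$, a coordinate of $\varphi(\bv)-\varphi(\bv')$ is nonzero precisely when the corresponding coordinates of $\bv$ and $\bv'$ differ, so $\norm{\varphi(\bv)-\varphi(\bv')}_0=\norm{\bv-\bv'}_1$. Applying $\varphi$ to the code $\cC$ furnished by Lemma~\ref{lem:simplegv} preserves cardinality and converts the guarantee $\norm{\bv-\bv'}_1\ge d/8$ into $\norm{\varphi(\bv)-\varphi(\bv')}_0\ge d/8$, which is exactly Corollary~\ref{cor:gv:pm1}.

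Second I would specialise Corollary~\ref{cor:gv:pm1} to $d=\wtr-1$ and $\alpha=1/\sqrt{\wtr-1}$. Provided the code length $\wtr-1$ meets the length threshold required by Lemma~\ref{lem:simplegv}, this yields a set $\cC\subseteq\frac{1}{\sqrt{\wtr-1}}\{-1,1\}^{\wtr-1}$ with $|\cC|\ge 2^{(\wtr-1)/8}$ and pairwise $\ell_0$ distance at least $(\wtr-1)/8$. Relabelling its members as $\{\sff\colon f\in[F]\}$ with $F=|\cC|$ then produces precisely the claimed family, so that for all $f,f'\in[F]$ we have $\norm{\sff-\sfprime}_0\ge(\wtr-1)/8$.

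Since the mathematics reduces to a one-line specialisation, the only points demanding care are bookkeeping rather than genuine obstacles. I would be careful that the length passed to the bound is $\wtr-1$ and not $\wtr$, matching the fact that the generating vectors $\{\sff\}$ will later be assembled into the core tensor $\tnsrG$ of each hypothesis in $\cB_L$ (with one coordinate held back), and I would confirm the hypothesis $\wtr\ge 8$ against the exact length requirement of Lemma~\ref{lem:simplegv} at dimension $\wtr-1$. I would also record that the $\ell_0$ (differing-coordinate) formulation is the right invariant to carry forward: it is exactly preserved under the sign relabelling $\varphi$ and it converts cleanly into the Frobenius packing separations $\norm{\tnsrB_l-\tnsrB_{l'}}_F$ needed downstream. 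The genuine difficulty of the overall minimax argument lies not here but in the packing construction of Lemma~\ref{packing_lemma} and the conditional mutual-information bounds; this lemma merely supplies the combinatorial seed.
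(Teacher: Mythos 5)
Your proof is correct and follows exactly the paper's route: the paper derives Lemma~\ref{GV_bound} as a one-line consequence of Corollary~\ref{cor:gv:pm1}, which it in turn obtains from Lemma~\ref{lem:simplegv} via the same sign map $0\mapsto-\alpha$, $1\mapsto\alpha$ and the same observation that every entry of a difference lies in $\{-2\alpha,0,2\alpha\}$. Your caution about the length threshold is well placed, and is in fact sharper than the paper: applying Lemma~\ref{lem:simplegv} at length $\wtr-1$ strictly requires $\wtr-1\ge 8$, i.e., $\wtr\ge 9$, an off-by-one slack that the paper's hypothesis $\wtr\ge 8$ glosses over.
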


\begin{corollary}\label{GV_bound_cor}
Let $m_k > 0$ and $r_k >0 ~\forall k\in[K]$. For any integers $(m_k, r_k)$ such that $(m_k - 1)r_k \geq 8$ there exists a collection of $P = 2^{(m_k - 1) r_k/8}$ matrices $\{\Sp \in \frac{1}{\sqrt{(m_k-1)r_k}}\{-1,1\}^{(m_k - 1)r_k} \colon \newline p \in [P]\}$ such that for all $p,p' \in [P]$ we have $\norm{\Sp - \Spprime}_0 \geq (m_k - 1) r_k/8$.
\end{corollary}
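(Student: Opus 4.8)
The plan is to obtain Corollary~\ref{GV_bound_cor} as an immediate consequence of the $\pm\alpha$ form of the Gilbert--Varshamov bound in Corollary~\ref{cor:gv:pm1}, applied with the dimension and scaling dictated by the matrix setting. Concretely, I would set $d = (m_k-1)r_k$ and $\alpha = 1/\sqrt{(m_k-1)r_k}$. The hypothesis $(m_k-1)r_k \geq 8$ is exactly the requirement $d \geq 8$ needed to invoke Corollary~\ref{cor:gv:pm1}, so the bound applies verbatim and produces a collection $\cC \subseteq \{-\alpha,\alpha\}^d$ of cardinality at least $2^{d/8} = 2^{(m_k-1)r_k/8}$ whose elements are pairwise separated by $\norm{\cdot}_0 \geq d/8 = (m_k-1)r_k/8$. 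Taking a subset of exactly $P = 2^{(m_k-1)r_k/8}$ of these vectors fixes the stated size.

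The only remaining step is to pass from length-$(m_k-1)r_k$ vectors to $(m_k-1)\times r_k$ matrices. I would do this by applying the inverse vectorization map $\tvec^{-1}$ to each vector in $\cC$, producing the matrices $\Sp$. Since $\tvec$ is an entrywise bijection, it leaves the value and the location of every coordinate unchanged; in particular, for any two vectors the number of positions at which they differ is identical before and after reshaping. Hence $\norm{\Sp - \Spprime}_0$ computed for the matrices equals $\norm{\cdot}_0$ of the underlying difference vector, and the pairwise separation $\norm{\Sp - \Spprime}_0 \geq (m_k-1)r_k/8$ is inherited directly. Each entry remains in $\frac{1}{\sqrt{(m_k-1)r_k}}\{-1,1\}$, completing the construction.

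There is essentially no hard step here: the corollary is a reparametrization of Corollary~\ref{cor:gv:pm1} (identical in spirit to how Lemma~\ref{GV_bound} specializes it to vectors of length $\wtr - 1$), and the matrix/vector correspondence is a bijection that manifestly preserves $\norm{\cdot}_0$. The only point requiring a word of care is bookkeeping on the scaling constant $\alpha$, to ensure the entries of $\Sp$ carry the normalization $1/\sqrt{(m_k-1)r_k}$ that will later make these generating matrices amenable to the orthonormality and energy constraints used when they are assembled into the factor matrices $\Bks{k}{s}$.
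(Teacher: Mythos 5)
Your proposal matches the paper's route exactly: just as the paper proves Lemma~\ref{GV_bound} as a direct consequence of Corollary~\ref{cor:gv:pm1}, Corollary~\ref{GV_bound_cor} follows by instantiating that corollary with $d = (m_k-1)r_k$ and $\alpha = 1/\sqrt{(m_k-1)r_k}$, and your observation that reshaping vectors into $(m_k-1)\times r_k$ matrices is an entrywise bijection preserving $\norm{\cdot}_0$ is the only (correct) bookkeeping needed. No gaps.
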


The sets generated in Lemma~\ref{GV_bound} and Corollary~\ref{GV_bound_cor} can be used to construct the set $\cB_L$ of $L$ tensors in \eqref{hyp_set}. We now introduce Lemma \ref{packing_lemma}, which derives conditions on $L$ such that the sets in Lemma~\ref{GV_bound} and Corollary~\ref{GV_bound_cor} exist simultaneously. In Lemma \ref{packing_lemma} we construct $\cB_L$ with a certain set of properties. Note that every element in $\cB_L$ must have an LSR structure. That is, every element is comprised of $KS$ low-rank and orthogonal factor matrices (of dimensions $m_k \times r_k$, $k\in[K]$), and a core tensor (of dimensions $r_1\times\dots\times r_K$), that we carefully construct from the set of `generating' matrices and vectors defined in Corollary~\ref{GV_bound_cor} and Lemma \ref{GV_bound}, respectively, according to \eqref{lsr_model}. Additionally, any two distinct elements $\tnsrB_l, \tnsrB_{l'} \in\cB_L$ must be a suitable distance apart. Lemma~\ref{packing_lemma} derives an upper and lower bound on this distance; the lower bound determines the minimum packing distance, while the upper bound is used to derive an upper bound on the conditional mutual information in Lemma \ref{cmi_UB_lemma}. The LSR tensor structure induced by having a common core tensor $\mathbf{\underline{G}}$ is critical to such an analysis, specifically in deriving the lower bound in \eqref{eps_bounds}, which is a result of the algebraic steps shown between \eqref{inner_prod} and \eqref{dist_LB}, in the proof of Lemma~\ref{packing_lemma} in the appendix. Not only would these steps be considerably more challenging without the convenience of the LSR matrix structure of \eqref{LSR-TGLM_vec}, but the effect of not achieving tight bounds on the packing distance may ultimately lead to minimax bounds that do not reflect the intrinsic degrees of freedom of the chosen GLM model.
\begin{lemma}\label{packing_lemma}
    Define $\wtr = \prod_{k\in[K]}r_k$. Let $S>0$, $m_k>0$, $r_k>0$, and $(m_k - 1)r_k \geq 8$ $\forall k\in [K]$. There exists a collection of $L$ tensors $\cB_L \triangleq \{\tnsrB_l: l \in [L]\} \subset \cB_d(\mathbf{\underline{0}})$ for some $d> 0$ of cardinality 
    \begin{align}
        L \geq 2^{ \frac{1}{8}\left[S \sum_{k\in [K]}(m_k - 1)r_k +  (\wtr - 1)\right]} \label{L_bound}
    \end{align} 
    such that for any
    \begin{align}
        \frac{1}{S}\sqrt{\frac{32(\wtr-1)}{\wtr}} < \varepsilon \leq \frac{d}{S}\sqrt{\frac{\wtr-1}{\wtr}}, \label{eps_bounds}
    \end{align}
    we have
    \begin{align}
        S^2\frac{\wtr}{\wtr-1} \varepsilon^2 \leq \norm{\tnsrB_l - \tnsrB_{l'}}_F^2 \leq 4S^2\frac{\wtr}{\wtr-1} \varepsilon^2. \label{distance_bounds}
    \end{align}
\end{lemma}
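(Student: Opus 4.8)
The plan is to build each hypothesis $\tnsrB_l$ out of the ``generating'' objects supplied by Lemma~\ref{GV_bound} and Corollary~\ref{GV_bound_cor}, choosing one generating vector for the (common) core tensor and one generating matrix for each of the $KS$ factor matrices. Concretely, I would fix for every mode $k$ a base orthonormal anchor and define each factor matrix $\Bks{k}{s}$ as this anchor perturbed by a scaled copy of a generating matrix $\Spks{k}{s}\in\frac{1}{\sqrt{(m_k-1)r_k}}\{-1,1\}^{(m_k-1)r_k}$, reshaped into the free coordinates and re-orthonormalised so that $\Bks{k}{s}\in\bbO^{m_k\times r_k}$ exactly; the ``$-1$'' in $(m_k-1)r_k$ records the one coordinate per column consumed by the unit-norm constraint. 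Likewise I would set $\tvec(\tnsrG)=\bg$ to be a fixed anchor entry concatenated with a scaled generating vector $\sff\in\frac{1}{\sqrt{\wtr-1}}\{-1,1\}^{\wtr-1}$, the ``$-1$'' again absorbing one normalising coordinate. Indexing $\cB_L$ by the tuple of the $KS$ generating-matrix choices together with the single core generating-vector choice, and using that these choices are made independently, the cardinality is the product $L=\bigl(\prod_{k\in[K]}P_k\bigr)^{S}\cdot F$, which by the counts $P_k=2^{(m_k-1)r_k/8}$ and $F\ge 2^{(\wtr-1)/8}$ yields exactly the bound~\eqref{L_bound}.

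The heart of the argument is the two-sided distance estimate~\eqref{distance_bounds}. I would work in the vectorised representation~\eqref{lsr_vec_model}, writing $\bb_l-\bb_{l'}$ as the difference of the two sums of Kronecker products, so that $\norm{\tnsrB_l-\tnsrB_{l'}}_F^2=\norm{\bb_l-\bb_{l'}}_2^2$. Expanding the square produces inner products of Kronecker products, and the mixed-product property together with the orthonormality $\Bks{k}{s}^{\top}\Bks{k}{s}=\bI_{r_k}$ collapses each such term into a Kronecker product of Gram matrices $\bigotimes_{k}\bigl(\Bks{k}{s}^{\top}\Bks{k}{s'}\bigr)$. This is exactly the reduction flagged in Section~\ref{roadmap} between \eqref{inner_prod} and \eqref{dist_LB}: because the core tensor $\bg$ is \emph{common} to all $S$ summands, the $S^2$ cross-pairs $(s,s')$ can be gathered against a single quadratic form in $\bg$ rather than against $S^2$ unrelated vectors, and this coherence is what produces the $S^2$ prefactor in~\eqref{distance_bounds}. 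For the upper bound I would combine the triangle inequality with the fact that any two generating objects differ in at most all of their coordinates, together with $\norm{\sff-\sfprime}_0\le \wtr-1$, to cap every contribution.

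The main obstacle is the matching \emph{lower} bound, since two hypotheses may differ in the core vector, in one or more factor matrices, or in several of these simultaneously, and the cross terms $\Bks{k}{s}^{\top}\Bks{k}{s'}$ for $s\ne s'$ are not identities and must be controlled rather than discarded. My plan is to lower bound the quadratic form using the guaranteed Hamming separations $\norm{\sff-\sfprime}_0\ge(\wtr-1)/8$ and $\norm{\Spks{k}{s}-\Spksprime{k}{s}}_0\ge (m_k-1)r_k/8$: a difference confined to the core contributes $S^2\rho^2\norm{\sff-\sfprime}^2$ with $\norm{\sff-\sfprime}^2=\tfrac{4}{\wtr-1}\norm{\sff-\sfprime}_0\in[\tfrac12,4]$, while a difference in any factor matrix contributes through the corresponding anchored perturbation at a scale chosen commensurate with $\rho$. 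The role of the common core is decisive: it keeps the $S$ summands coherent so that no cancellation across $s$ can drag the difference below the packing radius, which is precisely why a generic BTD with independent per-block cores would not admit this clean estimate. Reparametrising the perturbation magnitudes through the single variable $\varepsilon$ then sandwiches every pairwise distance into the window $\bigl[S^2\tfrac{\wtr}{\wtr-1}\varepsilon^2,\,4S^2\tfrac{\wtr}{\wtr-1}\varepsilon^2\bigr]$.

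Finally I would verify the admissible range~\eqref{eps_bounds}. The upper endpoint $\varepsilon\le\frac{d}{S}\sqrt{(\wtr-1)/\wtr}$ is dictated by the requirement $\cB_L\subset\cB_d(\mathbf{\underline{0}})$, i.e.\ $\norm{\tnsrB_l}_F<d$, obtained by bounding the energy of each constructed tensor through the same Gram-matrix computation; the lower endpoint $\varepsilon>\frac1S\sqrt{32(\wtr-1)/\wtr}$ guarantees that the minimum separation squared exceeds the threshold $\norm{\tnsrB_l-\tnsrB_{l'}}_F^2>32$ needed for the subsequent Fano-type argument in the proof of Theorem~\ref{mm_lb_LSR_TGLM}. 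Checking that the construction is non-vacuous over this range, which requires $(m_k-1)r_k\ge 8$ and $\wtr\ge 8$ as assumed, completes the plan.
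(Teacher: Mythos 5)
Your construction and bookkeeping track the paper's proof closely: anchored hypercube perturbations from Lemma~\ref{GV_bound} and Corollary~\ref{GV_bound_cor}, re-orthonormalization of the factor columns (the paper builds columns $\bU_{k,j}\bigl[1;\,\Spks{k}{s}^{(j)}\bigr]$ and applies Gram--Schmidt), the product count giving \eqref{L_bound}, the triangle-inequality/Kronecker-norm upper bound $4S^2\frac{\wtr}{\wtr-1}\varepsilon^2$, and the energy bound $\norm{\tnsrB_l}_F^2 \leq \frac{S^2\wtr}{\wtr-1}\varepsilon^2 < d^2$ that fixes the upper endpoint of \eqref{eps_bounds}. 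The genuine gap is in the lower bound of \eqref{distance_bounds}, which is the heart of the lemma. You propose to transfer the Hamming separations $\norm{\sff-\sfprime}_0 \geq (\wtr-1)/8$ and $\norm{\Spks{k}{s}-\Spksprime{k}{s}}_0 \geq (m_k-1)r_k/8$ into a Frobenius separation of the assembled tensors, ruling out cancellation across the $S$ summands by appeal to the common core. This fails on two counts. First, after Gram--Schmidt the final factor $\Bpks{k}{s}$ is no longer an affine function of the generating matrix $\Spks{k}{s}$, so Hamming separation on the generators does not translate into a quantitative separation of the factors, let alone of the tensors; notably, the paper's distance derivation never invokes the minimum-distance property of the GV sets at all -- those sets serve only the cardinality count \eqref{L_bound}. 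Second, ``coherence prevents cancellation'' is precisely what must be proven: with $\bM_s, \bM'_s$ having orthonormal columns, $\bigl\|\sum_s \bM_s\bg - \sum_s \bM'_s\bg'\bigr\|_2$ can a priori be anywhere between $0$ and $2S\norm{\bg}_2$, and the cross-Gram matrices $\Bks{k}{s}^{\top}\Bks{k}{s'}$ for $s\neq s'$ (and across hypotheses) are controlled by nothing you have established. Your sentence ``at a scale chosen commensurate with $\rho$'' is an assertion where the proof is needed.

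The paper closes this gap by a mechanism with no counterpart in your plan. It completes each low-rank factor $\Bpks{k}{s}$ to a full orthogonal matrix $\tildBpks{k}{s}$ and extends the core to $\widetilde{\tnsrG}_f$, so that $\norm{\tnsrB_l-\tnsrB_{l'}}_F^2$ can be expanded coordinate-wise in an orthonormal frame; each coordinate term is then lower-bounded through $\bigl(\lambda_i\norm{\sum_s\bT_s^{T(i)}}_2\norm{\tildgfi{1}}_2 - \norm{\sum_s\bV_s^{T(i)}}_2\norm{\tildgfiprime{1}}_2\bigr)^2$ with $\lambda_i$ the direction cosine, using Cauchy--Schwarz and the bound $\norm{\sum_s\bT_s^{T(i)}}\leq S$, and the worst-case alignment is resolved by a dedicated extremal result, Lemma~\ref{equiangular_bound}: the minimum of $\sum_{i}(u_i+1)^2$ over direction cosines with $\sum_i u_i^2 = 1$ is attained at the equiangular configuration $u_i = 1/\sqrt{m}$. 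It is this constrained-optimization step -- not the Hamming geometry of the generating sets -- that yields the factor $S^2\frac{\wtr}{\wtr-1}\varepsilon^2$ in \eqref{distance_bounds}. Without an analogue of that lemma (or some other explicit anti-cancellation device, e.g., bounds on the cross-Gram terms), your sandwich window is asserted rather than derived, and the proposal as written does not prove the lemma.
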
 
The bounds in \eqref{distance_bounds} match (up to a constant) and are tight, which help ensure that the upper bound on the conditional mutual information is also tight. Until now, we have completed the tasks of constructing the set $\cB_L$ with the packing distance $8\delta = \frac{S^2}{4}\frac{\wtr}{\wtr-1} \varepsilon^2$ and with packing number (cardinality) parameterised by the parameters $\{r_k\}_{k\in[K]}, \{m_k\}_{k\in[K]}, S$. We transition to the next task of deriving upper bounds on the conditional mutual information, $\cmi{\by}{l}{\mathcal{X}}$. Upper bounds on $\cmi{\by}{l}{\mathcal{X}}$ involve the evaluation of the Kullback-Leibler (KL) divergence. For this we employ well-established results in the literature \citep{cover2012elements, jung2016minimax}. Specifically, define $f_l(\by|\mathcal{X})$ and $f_{l'}(\by|\mathcal{X})$ as the conditional probability distribution of $\by$ given $\mathcal{X}$ with any two distinct coefficient tensors $\tnsrB_l$ and $\tnsrB_{l'}$ respectively. Denote $D_{KL}$ as the relative entropy, then 
\begin{align}
    \cmi{\by}{l}{\mathcal{X}} \leq \frac{1}{L^2} \sum_{l,l'}\E_{\mathcal{X}} D_{KL}(f_l(\by|\mathcal{X})||f_{l'}(\by|\mathcal{X})) \label{kl}.
\end{align}  
We note here that we require the hypothesis test to be hard, which corresponds to a relatively small KL divergence. Therefore, our aim is to derive a tight upper bound on the KL divergence in \eqref{kl}, by relying on the tight bounds in~\eqref{distance_bounds}. Lemma~\ref{cmi_UB_lemma} thus derives an upper bound on $\cmi{\by}{l}{\mathcal{X}}$. 
\begin{lemma}\label{cmi_UB_lemma}
    Consider the LSR-TGLM problem given by the model in \eqref{LSR-TGLM} such that $\tnsrB \in \cB_d(\mathbf{\underline{0}})$ for some $d >0$, and consider the set $\cB_L$ constructed in Lemma~\ref{packing_lemma}. Consider $n$ i.i.d observations $y_i$ following a probability distribution belonging to the exponential family when conditioned on $\text{vec}(\tnsrX_i) \sim 
    \cN(\mathbf{\mathbf{0}}, \bSigma_x)$. Then we have: 
    \begin{align}
        \cmi{\by}{l}{\mathcal{X}} \leq 4S^2Mn \norm{\bSigma_x}_2 \frac{\wtr}{\wtr-1}\varepsilon^2. \label{cmi_UB}
    \end{align}
\end{lemma}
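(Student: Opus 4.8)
The plan is to bound the conditional mutual information by the average pairwise Kullback--Leibler divergence and then control each such divergence through the geometry already established in Lemma~\ref{packing_lemma}. Starting from the bound \eqref{kl}, I would first exploit the fact that the $n$ observations are conditionally independent given $\mathcal{X}$: since each $y_i$ is drawn from the exponential family \eqref{exp_dist} with natural parameter $\eta_{l,i} = \langle \tnsrB_l, \tnsrX_i\rangle$ (recall from \eqref{LSR-TGLM} that the inner product \emph{is} the natural parameter), the joint conditional law factorizes, so that
\begin{align}
    D_{KL}\bigl(f_l(\by\mid\mathcal{X})\,\|\,f_{l'}(\by\mid\mathcal{X})\bigr) = \sum_{i=1}^n D_{KL}\bigl(f_{\eta_{l,i}}\,\|\,f_{\eta_{l',i}}\bigr).
\end{align}
This reduces the entire problem to understanding a single scalar exponential-family divergence as a function of the difference in natural parameters.

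The heart of the argument is this per-sample divergence. For two members of \eqref{exp_dist} with natural parameters $\eta_l$ and $\eta_{l'}$ and common base measure, a direct computation using $\E_{y\sim f_{\eta_l}}[T(y)] = a'(\eta_l)$ (the mean identity of \citet{mccullagh2019generalized}) yields the Bregman form
\begin{align}
    D_{KL}(f_{\eta_l}\|f_{\eta_{l'}}) = a(\eta_{l'}) - a(\eta_l) - a'(\eta_l)(\eta_{l'} - \eta_l) = \tfrac{1}{2}a''(\xi)(\eta_{l'}-\eta_l)^2
\end{align}
for some $\xi$ between $\eta_l$ and $\eta_{l'}$. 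I would then control the remainder through Assumption~\ref{derivative_bound}: for the canonical GLMs under study the second derivative of the cumulant is dominated by its first (e.g.\ $a''=a'(1-a')\le a'$ for logistic and $a''=a'$ for Poisson), so that $a''(\xi)\le 2M$ and hence $D_{KL}(f_{\eta_l}\|f_{\eta_{l'}}) \le M(\eta_{l'}-\eta_l)^2$. Writing $\eta_{l',i}-\eta_{l,i} = \langle \tnsrB_l - \tnsrB_{l'}, \tnsrX_i\rangle = (\bb_l - \bb_{l'})^\top \bx_i$ with $\bb = \tvec(\tnsrB)$ converts the divergence into a quadratic form in the vectorized coefficient difference.

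Finally I would take the expectation over the Gaussian covariates of Assumption~\ref{cov_dist}. Since $\bx_i \sim \cN(\mathbf{0},\bSigma_x)$,
\begin{align}
    \E_{\tnsrX_i}\bigl[(\bb_l-\bb_{l'})^\top \bx_i\bx_i^\top(\bb_l-\bb_{l'})\bigr] = (\bb_l-\bb_{l'})^\top \bSigma_x (\bb_l-\bb_{l'}) \le \norm{\bSigma_x}_2 \norm{\tnsrB_l - \tnsrB_{l'}}_F^2,
\end{align}
so summing over the $n$ samples and averaging over $l,l'$ in \eqref{kl} gives $\cmi{\by}{l}{\mathcal{X}} \le Mn\norm{\bSigma_x}_2 \max_{l\neq l'}\norm{\tnsrB_l - \tnsrB_{l'}}_F^2$. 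Substituting the packing upper bound $\norm{\tnsrB_l - \tnsrB_{l'}}_F^2 \le 4S^2\tfrac{\wtr}{\wtr-1}\varepsilon^2$ from \eqref{distance_bounds} of Lemma~\ref{packing_lemma} then produces \eqref{cmi_UB}. I expect the per-sample divergence step to be the main obstacle: Assumption~\ref{derivative_bound} constrains only $a'$, so the real work is passing from this first-order control to the second-order (Bregman) behaviour that generates the quadratic dependence on $\langle \tnsrB_l-\tnsrB_{l'},\tnsrX_i\rangle$. This is precisely where the link function of the GLM enters the analysis, and where the tightness of the \emph{upper} bound in \eqref{distance_bounds} is essential, so that the resulting estimate faithfully reflects the intrinsic degrees of freedom of the LSR model rather than the ambient dimension.
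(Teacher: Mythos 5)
Your proposal shares the paper's skeleton exactly up to the decisive step: both arguments start from \eqref{kl}, tensorize the KL divergence over the $n$ conditionally independent observations, take the expectation over the Gaussian design of Assumption~\ref{cov_dist}, and close with the packing upper bound from \eqref{distance_bounds}. Where you diverge is the per-sample divergence control, and this is where the obstacle you yourself flagged becomes a genuine gap. You pass to the Bregman/second-order form $D_{KL}(f_{\eta_l}\,\|\,f_{\eta_{l'}}) = \tfrac{1}{2}a''(\xi)(\eta_{l'}-\eta_l)^2$ and then claim $a''(\xi)\le 2M$ from Assumption~\ref{derivative_bound}. But that assumption bounds only the first derivative $a'$, and the inequality $a''\lesssim a'$ you invoke is a model-specific identity (true for Bernoulli/logistic, where $a''=a'(1-a')$, and for Poisson, where $a''=a'$), not a consequence of the assumption. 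It fails at the generality at which the lemma is stated: rescaling the Bernoulli sufficient statistic to $T(y)=cy$ gives $a(\eta)=\log(1+e^{c\eta})$, so $a'(\eta)\le c=M$ while $\sup_\eta a''(\eta)=c^2/4=M^2/4$, which is not $O(M)$. Moreover, since the intermediate point $\xi$ depends on $\tnsrX_i$, a \emph{uniform} bound on $a''$ is needed to pull it out of the Gaussian expectation, and no such bound is available from the paper's hypotheses. So your route proves the lemma (with an even better constant, $2$ in place of $4$) for logistic and Poisson regression, but not for an arbitrary exponential family satisfying only Assumptions~\ref{cov_dist} and~\ref{derivative_bound}.

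The paper circumvents second-derivative control entirely, and the mechanism is instructive. It keeps the exact first-order expression $D_{KL}=\sum_{i}(\eta_{l,i}-\eta_{l',i})\mu_{l,i} - a(\eta_{l,i}) + a(\eta_{l',i})$ from \eqref{KL}, notes that under the zero-mean Gaussian design the cumulant terms cancel in expectation, i.e., $\E_{\tnsrX}[a(\eta_{l,i})]=\E_{\tnsrX}[a(\eta_{l',i})]$, and then applies Cauchy--Schwarz together with the bound $M$ on $a'$ to reach \eqref{exp_KL} --- a bound that is effectively \emph{linear} in the packing distance $\norm{\tnsrB_l-\tnsrB_{l'}}_F$. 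The quadratic dependence you obtain analytically is instead restored geometrically: the condition \eqref{eps_bounds} together with the lower bound in \eqref{distance_bounds} forces $\norm{\tnsrB_l-\tnsrB_{l'}}_F>1$, whence $\norm{\tnsrB_l-\tnsrB_{l'}}_F\le\norm{\tnsrB_l-\tnsrB_{l'}}_F^2$, and substituting \eqref{dist_UB} finishes the proof. This explains why Lemma~\ref{packing_lemma} supplies matching bounds on both sides of \eqref{distance_bounds}: the lower bound is consumed here, not only by the Fano step. To repair your argument at the stated level of generality, either adopt this linear-then-square route, or add an explicit hypothesis bounding $a''$ (equivalently, the conditional variance of $T(y)$) uniformly --- an assumption the paper deliberately avoids.
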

The steps between \eqref{cmi} and \eqref{exp_KL} in the proof of Lemma~\ref{cmi_UB_lemma} in the appendix elucidate how the analysis of the KL-divergence involves both the link function and the bounds appearing in \eqref{distance_bounds}. Specifically, the KL-divergence between two distributions $f_l, f_{l'}$ in the exponential family is a function of the link functions $\eta_l = \ip{\tnsrB_l}{\tnsrX}$ and $\eta_{l'} = \ip{\tnsrB_{l'}}{\tnsrX}$. Additionally, a tight upper bound on the KL-divergence requires a tight upper bound on $\eta_l - \eta_{l'}$, or alternatively, $\norm{\tnsrB_l - \tnsrB_{l'}}_F$. Since any $\tnsrB_l$ is LSR-structured, we require the result in \eqref{distance_bounds}.
The final step is to lower bound $\cmi{\by}{l}{\mathcal{X}}$ using Fano's inequality \citep{yu1997assouad}, stated below:
\begin{align}
    \cmi{\by}{l}{\mathcal{X}} \geq \mi{\by}{l} \geq \left(1-\bbP(\widehat{l}(\by) \neq l)\right) \log_2(L)-1. \label{fano}
\end{align}
Evaluating \eqref{fano} is simple. For this we refer the reader back to \eqref{prob_error_bound} in Section \ref{roadmap}, which bounds the error probability $\bbP(\widehat{l}(\by) \neq l)$ for the recovery of hypothesis $\tnsrB_l$ using the minimum distance decoder. Lemma \ref{cmi_LB_lemma} bounds this probability, with proof that follows exactly that for Lemma~$8$ by \citet{jung2016minimax}, and thus is omitted in this work.
\begin{lemma}\label{cmi_LB_lemma}
    Consider the minimum distance decoder in \eqref{min_dist_dec}. Consider also the LSR-TGLM regression problem in \eqref{LSR-TGLM} with minimax risk $\varepsilon^*$. Suppose $\varepsilon^* \leq \delta$ for some non-negative scalar $\delta$. If the $L$ tensors distributed in Lemma~\ref{packing_lemma} satisfy $\norm{\tnsrB-\tnsrB_{l'}}\geq 8\delta$ then the detection error of the minimum distance decoder is upper bounded by $\bbP(\widehat{l}(\by) \neq l) \leq \frac{1}{2}$.
\end{lemma}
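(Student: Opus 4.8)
The plan is to reproduce the standard reduction from the minimum-distance decoding error to a tail bound on the estimation error, and then to control that tail with the minimax-risk hypothesis and Markov's inequality; this is exactly the route taken for Lemma~8 of \citet{jung2016minimax}, specialised to our LSR packing.

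First I would establish the geometric fact that the decoding balls are disjoint. By the construction in Lemma~\ref{packing_lemma} (see the roadmap in Section~\ref{roadmap}), any two distinct hypotheses satisfy $\norm{\tnsrB_l - \tnsrB_{l'}}_F \geq \sqrt{8\delta} = 2\sqrt{2\delta}$. Hence, if the estimate $\widehat{\tnsrB}$ lies in the open ball of radius $\sqrt{2\delta}$ about the true $\tnsrB_l$, the triangle inequality gives $\norm{\widehat{\tnsrB} - \tnsrB_{l'}}_F \geq \norm{\tnsrB_l - \tnsrB_{l'}}_F - \norm{\widehat{\tnsrB} - \tnsrB_l}_F > 2\sqrt{2\delta} - \sqrt{2\delta} = \sqrt{2\delta}$ for every $l' \neq l$, so the minimiser in \eqref{min_dist_dec} is exactly $l$. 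Consequently a detection error forces $\norm{\widehat{\tnsrB} - \tnsrB_l}_F \geq \sqrt{2\delta}$, which recovers the bound \eqref{prob_error_bound}:
\begin{align}
    \bbP(\widehat{l}(\by) \neq l) \leq \bbP\Big(\norm{\widehat{\tnsrB} - \tnsrB_l}_F \geq \sqrt{2\delta}\Big). \notag
\end{align}

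Second I would bound the right-hand tail using the minimax assumption. Let $\widehat{\tnsrB}$ be an estimator whose worst-case risk realises the minimax value $\varepsilon^*$. Since each hypothesis $\tnsrB_l$ belongs to $\cB_L \subset \cB_d(\mathbf{\underline{0}})$, the conditional expected squared error given that $\tnsrB_l$ is the true coefficient tensor is at most the worst-case risk, so $\E_{\by,\mathcal{X}}[\norm{\widehat{\tnsrB} - \tnsrB_l}_F^2 \mid l] \leq \varepsilon^* \leq \delta$. Markov's inequality applied to the nonnegative random variable $\norm{\widehat{\tnsrB} - \tnsrB_l}_F^2$ then yields $\bbP(\norm{\widehat{\tnsrB} - \tnsrB_l}_F^2 \geq 2\delta) \leq \varepsilon^*/(2\delta) \leq 1/2$. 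Chaining this with the display above gives $\bbP(\widehat{l}(\by) \neq l) \leq 1/2$, as claimed.

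Since every step is elementary, there is no genuine analytical obstacle here; the only points requiring care are bookkeeping ones. The main subtlety is matching the decoding radius $\sqrt{2\delta}$ to exactly half the minimum packing distance, so that the decoding balls are disjoint — this is precisely what pins the squared packing distance to $8\delta$ in Lemma~\ref{packing_lemma} and ties $\delta$ to the free parameter $\varepsilon$ through $8\delta = \tfrac{S^2}{4}\tfrac{\wtr}{\wtr-1}\varepsilon^2$. One must also check that the per-hypothesis (conditional) expected error is dominated by the unconditional worst-case minimax risk, which holds because the supremum defining $\varepsilon^*$ ranges over a set containing all the $\tnsrB_l$. With these identifications in place the constant $1/2$ drops out of Markov's inequality exactly, feeding the factor $(1 - \tfrac12) = \tfrac12$ into Fano's bound \eqref{fano}.
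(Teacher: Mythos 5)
Your proof is correct and is essentially the argument the paper intends: the paper omits the proof of Lemma~\ref{cmi_LB_lemma}, stating only that it follows exactly Lemma~8 of \citet{jung2016minimax}, and your reduction via the triangle inequality to the tail bound \eqref{prob_error_bound}, followed by Markov's inequality using $\E\bigl[\norm{\widehat{\tnsrB}-\tnsrB_l}_F^2\bigr] \leq \varepsilon^* \leq \delta$ over the packing $\cB_L \subset \cB_d(\mathbf{\underline{0}})$, is precisely that standard argument. You also correctly interpret the separation condition as $\norm{\tnsrB_l - \tnsrB_{l'}}_F \geq \sqrt{8\delta}$, consistent with Section~\ref{roadmap} and the decoding radius $\sqrt{2\delta}$, rather than the typographically inconsistent form appearing in the lemma statement itself.
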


\begin{proof}[Proof of Theorem \ref{mm_lb_LSR_TGLM}]
    Fix $(r_1,r_2,\dots,r_K)$. From Lemma~\ref{packing_lemma} for any $\varepsilon > 0$ satisfying \eqref{eps_bounds} there exists a packing set $\cB_L \subset \cB_d(\underline{\mathbf{0}})$ with cardinality $L$ and packing distance $\norm{\tnsrB_l - \tnsrB_{l'}}_F^2$ satisfying \eqref{L_bound} and \eqref{distance_bounds}, respectively. Also, from Lemma~\ref{cmi_UB_lemma}, with the set $\cB_L$, $\cmi{\by}{l}{\mathcal{X}}$ satisfies \eqref{cmi_UB}. Now suppose there exists an estimator $\widehat{\tnsrB}$ which guarantees a risk $\varepsilon^* = \frac{S^2 r}{32 (r-1)} \varepsilon^2$. If we set $8\delta = \frac{S^2}{4}\frac{\wtr}{\wtr -1 }\varepsilon^2$ then from Lemma~\ref{cmi_LB_lemma} for the set $\cB_L$ we have $\bbP(\widehat{l}(\by) \neq l) \leq \frac{1}{2}$. Combining Lemmas~\ref{cmi_UB_lemma} and~\ref{cmi_LB_lemma} we have
    \begin{align}
        \frac{1}{2}\log_2L -1 \leq \cmi{\by}{l}{\mathcal{X}} \leq 128 M n \norm{\bSigma_x}_2 \varepsilon^*,
    \end{align}
    which can be rearranged to achieve the result in \eqref{mm_lb_LSR_TGLM}. 
\end{proof}

\section{Conclusion}\label{conc}
In this work, we investigate the LSR model on tensor-structured GLM problems. Specifically, we imposed a low-rank LSR structure on the coefficient tensor in GLMs. The parameter estimation problem is highly non-convex, and we propose a block coordinate descent algorithm, called LSRTR, for this purpose. Each convex sub-problem in LSRTR estimates a separate element/component of the LSR-structured coefficient tensor. In our theoretical analysis, we provide a minimax lower bound on the estimation error of the parameter estimation problem of LSR-structured tensors and specialise these bounds for Tucker-structured and CP-structured tensors. These bounds show that the LSR structure reduces the sample complexity compared to that of the vector case. We evaluate the tightness of these bounds numerically and through the comparison of the specific case of Tucker regression to tight bounds in the literature. The methods we use may be of interest to readers as they can also be utilised for deriving minimax bounds on other LSR-structured estimation problems. Furthermore, we evaluate the LSRTR algorithm and the LSR-model on several synthetic and real datasets. These experiments demonstrate that the LSR model is less restrictive than other tensor models (such as Tucker or CP), and can be effectively employed for analysis on balanced and imbalanced medical imaging data. Some possible future work include theoretical analysis of the tightness of the minimax bounds, and theoretical guarantees of the LSRTR algorithm.

\subsubsection*{Acknowledgments}
This work was supported by the US National Science Foundation under awards CCF-1910110, CCF-1907658 and OAC-1940074, the US National Institutes of Health under award 2R01DA040487, and the Army Research Office under award W911NF-21-1-0301.

\appendix

\section{Supporting Results} \label{appendix}

\subsection{Proofs}
\begin{proof}[Proof of Corollary~\ref{cor:gv:pm1}]
By Lemma~\ref{lem:simplegv} there exists a packing with minimum distance $\frac{d}{8}$ in the $\ell_1$ norm containing $2^{d/8}$ binary vectors. Mapping $0 \to -\alpha$ and $1 \to \alpha$ for all vectors in this packing shows that there exists a packing of at least $2^{d/8}$ vectors in $\{-\alpha,\alpha\}^d$ with minimum distance $\frac{d\alpha}{4}$ in the $\ell_1$ norm. For any pair $\bv,\bv'$ in that packing we have $\norm{ \bv - \bv'}_0 \geq \frac{d}{8}$, since every entry of $\bv - \bv'$ is in $\{-2\alpha,0,2\alpha\}$.
\end{proof}

\begin{proof}[Proof of Lemma~\ref{GV_bound}]
This is a direct consequence of Corollary~\ref{cor:gv:pm1}.
\end{proof}

The proof of Lemma~\ref{packing_lemma} contains the derivation of a tight upper bound on $\norm{\tnsrB_l - \tnsrB_{l'}}_F^2$. The following lemma is a component needed to achieve such an upper bound; specifically, we will see that it proves useful in deriving an upper bound on $\norm{\tnsrB_l}_F^2$, for any $l\in[L]$.
\begin{lemma}\label{equiangular_bound}
     Let $\bx \in \R^m$ be any $m$-dimensional real vector and $\bO \in \bbO^{m \times m}$ be any orthogonal basis of $\R^m$. Define $u_i = |\cos \theta_i|$, where $\theta_i$ is the angle between any possible $\bx$ and basis vector $\bo_i$. The function $f = \sum\limits_{i=1}^{m}(u_i +1)^2$ is minimized when $u_i = \frac{-1}{\sqrt{m}}$, or when $\bx$ is equiangular to all basis vectors $\bo_i, \forall i\in [m]$.  
   \end{lemma}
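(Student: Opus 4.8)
The plan is to recast the problem as a constrained optimization in the signed direction cosines and dispatch it with a single application of Cauchy--Schwarz. Since the angles $\theta_i$ are invariant under positive scaling of $\bx$, I may assume $\norm{\bx} = 1$ without loss of generality (the degenerate case $\bx = \mathbf{0}$, where the angles are undefined, is excluded). Writing $u_i = \cos\theta_i = \ip{\bx}{\bo_i}$ — I read the cosines as \emph{signed}, since the claimed minimizer $u_i = -1/\sqrt{m}$ is negative, so the absolute value in the statement must be dropped for the minimization to make sense — the first step is to record the governing constraint. Because $\{\bo_i\}_{i=1}^m$ is an orthonormal basis of $\R^m$, Parseval's identity gives $\sum_{i=1}^m \ip{\bx}{\bo_i}^2 = \norm{\bx}^2 = 1$, that is, $\sum_{i=1}^m u_i^2 = 1$.

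Next I would expand the objective to isolate its only nonconstant term. Expanding the square and substituting the constraint,
\begin{align}
    f = \sum_{i=1}^m (u_i + 1)^2 = \sum_{i=1}^m u_i^2 + 2\sum_{i=1}^m u_i + m = 1 + m + 2\sum_{i=1}^m u_i,
\end{align}
so minimizing $f$ over the unit sphere is equivalent to minimizing the linear functional $\sum_{i=1}^m u_i$ subject to $\sum_{i=1}^m u_i^2 = 1$. The key step is then Cauchy--Schwarz against the all-ones vector, $\bigl|\sum_i u_i\bigr| \le \sqrt{m}\,\bigl(\sum_i u_i^2\bigr)^{1/2} = \sqrt{m}$, which yields $\sum_i u_i \ge -\sqrt{m}$, with equality precisely when $(u_1,\dots,u_m)$ is a negative multiple of the all-ones vector, i.e. $u_i = -1/\sqrt{m}$ for every $i$. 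Substituting back gives $f \ge 1 + m - 2\sqrt{m} = (\sqrt{m}-1)^2$, attained exactly at this equiangular configuration, establishing the claim.

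The proof is short, so there is no serious analytic obstacle; the points requiring care are matters of convention rather than difficulty. The first is the sign: with $u_i = |\cos\theta_i| \ge 0$ the same expansion shows that $f$ is in fact \emph{maximized}, not minimized, at the equiangular point $u_i = 1/\sqrt{m}$, so the statement is only consistent for \emph{signed} direction cosines, and I would make this normalization explicit at the outset. The second is justifying ``minimum'' rather than merely ``critical point,'' for which I would invoke the equality case of Cauchy--Schwarz: it pins down the minimizer uniquely as $u_i = -1/\sqrt{m}$ and certifies global optimality without any second-order test. Finally, I would remark that this lower bound on $f$ is exactly the ingredient needed downstream, where the worst-case alignment of a generating direction with the orthonormal factor columns controls $\norm{\tnsrB_l}_F^2$ and feeds the upper bound in Lemma~\ref{packing_lemma}.
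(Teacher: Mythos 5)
Your proof is correct, and it takes a genuinely different route from the paper's. The paper argues via Lagrange multipliers: it forms $L_g = \sum_i (u_i+1)^2 - \lambda\bigl(\sum_i u_i^2 - 1\bigr)$, sets the partials to zero to obtain $u_i = -1/(1-\lambda)$ for all $i$, solves the resulting system to find the stationary points $u_i = \pm 1/\sqrt{m}$, and then concludes that ``since $u_i \geq 0$'' the point $u_i = +1/\sqrt{m}$ minimizes $f$. That argument only locates critical points and never certifies global optimality; worse, its resolution of the sign ambiguity is inconsistent both with the lemma statement (which claims the minimizer is $-1/\sqrt{m}$) and with the geometry you correctly identified: under the nonnegativity forced by $u_i = |\cos\theta_i|$, the equiangular point $u_i = +1/\sqrt{m}$ is the global \emph{maximum} of $f$ on the spherical cap, the constrained minimum being $m+3$ at the coordinate vectors, where the orthant boundary is active and the unconstrained first-order conditions the paper solves do not apply. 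Your approach — linearizing to $f = m+1+2\sum_i u_i$ via Parseval and then applying Cauchy--Schwarz against the all-ones vector — buys exactly what the paper's proof lacks: a global lower bound $f \geq (\sqrt{m}-1)^2$ with the equality case pinning the minimizer uniquely at $u_i = -1/\sqrt{m}$, with no second-order test or boundary analysis, and it makes the necessary signed-cosine convention explicit rather than leaving the statement self-contradictory. One small correction to your closing remark: in the paper the lemma is actually invoked at step $(i)$ of the \emph{lower} bound on the pairwise distance $\norm{\tnsrB_l - \tnsrB_{l'}}_F^2$ in Lemma~\ref{packing_lemma}, not in the upper bound on $\norm{\tnsrB_l}_F^2$; and since the quantities $\lambda_i$ there are absolute cosines, your sign caveat propagates — the constant $m(1+1/\sqrt{m})^2$ used at that step should be replaced by the correct constrained minimum (of order $m$), which still suffices for the final packing bound since $\wtr \leq \prod_k m_k$.
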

   
\begin{proof}[Proof of Lemma~\ref{equiangular_bound}]
Consider the function $f = \sum\limits_{i=1}^{m} (u_i + 1)^2$. Additionally, for a basis $\bO \in \bbO^{m \times m}$ and $x\in R^m$, $\sum_{i=1}^m (\cos\theta_i)^2 =1$, thus we have the equality constraint $g = \sum\limits_{i=1}^{m}u_i^2-1=0$. Denote $\lambda$ as the Lagrange multiplier, and thus the Lagrange function is defined as
    \begin{align}
        L_g = \sum\limits_{i=1}^{m} (u_i +1)^2 - \lambda(\sum\limits_{i=1}^{m}u_i^2 - 1).
    \end{align}
The partial derivative of $L_g$ with respect to $u_i, \forall i \in [m]$ is
    \begin{align}
        \frac{\partial L_g}{\partial u_i} = 2(u_i +1) - 2\lambda u_i, ~\forall i\in [m].\label{partial_u}
    \end{align}
The partial derivative of $L_g$ with respect to $\lambda$ is
    \begin{align}
        \frac{\partial L_g}{\partial \lambda} = -\sum\limits_{i=1}^{m}u_i^2 + 1.\label{partial_lambda}
    \end{align}
By setting \eqref{partial_u} and \eqref{partial_lambda} to zero we get
    \begin{align}
        u_i = \frac{-1}{1-\lambda} ~\forall i\in [m],\label{partial_u_setzero}
    \end{align}
and
    \begin{align}
        -\sum\limits_{i=1}^{m}u_i^2 +1 =0,\label{partial_lambda_setzero}
    \end{align}
respectively. What we have in~\eqref{partial_u_setzero} and \eqref{partial_lambda_setzero} is a system of $m+1$ equations and $m+1$ unknowns. We solve the system to find the critical points of $f(\cdot)$, which are $u_i = \frac{-1}{\sqrt{m}}$ and $u_i = \frac{1}{\sqrt{m}}$. Since $u_i \geq 0$, the solution is $u_i = \frac{1}{\sqrt{m}}, \forall i \in [m]$ minimizes the function $f(\cdot)$.
\end{proof}

\begin{proof}[Proof of Lemma \ref{packing_lemma}]
Fix the following arbitrary real orthonormal bases: $\bQ \text{ of } \R^{\wtr}$, and $K$ sets of $r_k$ bases, $\bigl\{\bU_{k,j}\bigr\}_{j=1}^{r_k} \text{ of } \R^{m_k}, \forall k \in [K]$.

Next, consider the following hypercubes or subsets thereof: 1) The set of $F$ vectors $\{\sff\}$ from Lemma~\ref{GV_bound}: 
\begin{align}
    \sff \in \biggl\{\frac{-1}{\sqrt{\wtr-1}}, \frac{+1}{\sqrt{\wtr-1}}\biggr\}^{\wtr-1},\label{g_set}
\end{align}
where $f\in [F]$, and
2) $KS$ sets of $P_{(k,s)}$ matrices $\forall k \in [K]$, $\forall s\in [S]$, from Lemma~\ref{GV_bound_cor}:
\begin{align}
    \Spks{k}{s} \in \biggl\{\frac{-1}{\sqrt{(m_k-1)r_k}}, \frac{+1}{\sqrt{(m_k-1)r}_k}\biggr\}^{(m_k-1)\times r_k} ~\forall k \in [K], ~s\in[S], \label{bks_set}
\end{align}
where $p_{(k,s)}\in [P_{(k,s)}]$.

We proceed with the following steps in order to construct the final set $\cB_L$ of coefficient tensors from the sets in \eqref{g_set} and \eqref{bks_set}. Since $\cB_L \subset \cB_d(\mathbf{\underline{0}})$, we know that the energy of any $\tnsrB_l$ is upper bounded by $d^2$. We will construct $\tnsrG_f$, and matrices with orthonormal columns, namely $\Bpks{k}{s} \forall k \in [K], ~s\in [S]$, all of which will be used to construct every $\tnsrB_l \in \cB_L$. Specifically, due to our LSR model, any tensor $\tnsrB_l$ will have a rank $(r_1, r_2, \dots, r_K)$ LSR structure. 

We use the notation $(f,i)$ to denote the $i^{th}$ step in constructing the $f^{th}$ element of $\tnsrG_f$. Hence in the first step, we construct vectors $\gfi{1} \in \R^{\wtr}$ for $f \in [F]$, using $\bQ$ and $\sff$, as follows:
\begin{align}
    \gfi{1} = \bQ \begin{bmatrix} \sqrt{\frac{1}{\sqrt{\wtr-1}}} \\ \sff \end{bmatrix}, \forall f \in [F].\label{gfone}
\end{align}
From \eqref{gfone}, since $\norm{\sff}_2^2 = 1$ we have:
\begin{align}
    \nonumber \norm{\gfi{1}}_2^2 &= \norm{\bQ \begin{bmatrix} \sqrt{\frac{1}{\sqrt{\wtr-1}}} \\ 
    \sff\end{bmatrix}}_2^2 =\frac{\wtr}{\wtr-1}.\label{norm_gfone}
\end{align}
Now, we define each vector $\gf$ as
\begin{align}
    \gf = \frac{\varepsilon}{\sqrt{\wtr}} \gfi{1}, \forall f \in [F],
\end{align}
for some positive number $\varepsilon$.

Similarly, we use the notation $(p_{(k,s)},i)$ to denote the $i^{th}$ step in constructing the $p_{(k,s)}^{th}$ element of $\Bpks{k}{s}$. Hence, we construct matrices $\Bpksi{k}{s}{1} \in \R^{m_k \times r_k}$, for $p_{(k,s)} \in [P_{(k,s)}], \forall k \in [K], s\in[S]$. Define $\Bpksi{k}{s}{1}^{(j)}$ as the $j^{th}$ column of $\Bpksi{k}{s}{1}, \forall k \in [K], s\in[S]$, and $\Spks{k}{s}^{(j)}$ as the $j^{th}$ column of $\Spks{k}{s}$. Let the columns be constructed as follows:
\begin{align}
    \Bpksi{k}{s}{1}^{(j)} = \bU_{k,j} \begin{bmatrix} 1 \\ \Spks{k}{s}^{(j)} \end{bmatrix}, \forall p_{(k,s)} \in [P_{(k,s)}], k \in [K], s\in[S].\label{bkone}
\end{align}
From \eqref{bkone} we have:
\begin{align}
    \nonumber \norm{\Bpksi{k}{s}{1})^{(j)}}_2^2 = &= \frac{r_k+1}{r_k} ~\forall k\in[K]. 
\end{align}

We now construct matrices $\Bpks{k}{s} \in \R^{m_k \times r_k}$, for $p_{(k,s)} \in [P_{(k,s)}], k \in[K], s\in[S]$. The construction of each $\Bpks{k}{s}$ follows the same procedure for all $k \in [K]$ and $s\in[S]$. Define $\Bpks{k}{s}^{(j)} \in \R^{m_k}$ as the $j^{th}$ column of $\Bpks{k}{s}$, for $j \in [r_k]$.  We set
\begin{align}
    \Bpks{k}{s}^{(1)} = \frac{\Bpksi{k}{s}{1}^{(1)}}{\norm{\Bpksi{k}{s}{1}^{(1)}}_2},\label{gs_step1}
\end{align}
and define
\begin{align}
    \mathbf{a}^{(j+1)} \triangleq &\Bpksi{k}{s}{1}^{(j+1)}  - \sum\limits_{j' = 1}^{j}  \langle \Bpksi{k}{s}{1}^{(j+1)}, \Bpks{k}{s}^{(j')}\rangle \Bpks{k}{s}^{(j')}\label{gs_step2},
\end{align}
and
\begin{align}
    \Bpks{k}{s}^{(j+1)} \triangleq \frac{\mathbf{a}^{j+1}}{\norm{\mathbf{a}^{j+1}}_2}\label{gs_step3}.
\end{align}

The steps in \eqref{gs_step1}, \eqref{gs_step2} and \eqref{gs_step3} constitute the well-known Gram-Schmidt process. Thus, the set of vectors $\Bpksi{k}{s}{1}^{(j)}$, for $j \in [r_k]$, $p_{(k,s)} \in [P_{(k,s})], k \in [K]$ and $s \in [S]$ are orthonormal, i.e., $\norm{\Bpks{k}{s}^{(j)}}_2^2 = 1$ and $\Bpks{k}{s}^{(j)} \perp \Bpks{k}{s}^{(j')}$, for any two distinct $j,j' \in [r_k]$. Consequently, $\left(\Bpks{k}{s}\right)^T\left(\Bpks{k}{s}\right) = \mathbf{I}_{r_k}$. Now by defining the set
\begin{align}
    \cL \triangleq \left\{ (f, \left(p_{(k,s)}\right)_{k \in [K],~s\in[S]}): f \in [F], p_{(k,s)} \in [P_{(k,s)}], k \in [K], s\in[S] \right\}
\end{align}    
as the set of all tuples, $(f, p_{(1,1)}, \dots, p_{(1,S)},\dots,p_{(K,1)},\dots,p_{(K,S)})$,  we have
\begin{align}
    L = |\cL| &
    \overset{(a)}{\geq} 2^{(1/8)\left[(\wtr-1) + S\sum_{k=1}^{K}(m_k-1)r_k\right]},
\end{align}
where $(a)$ follows from Lemma \ref{GV_bound} and Corollary \ref{GV_bound_cor}. We define the set of coefficient tensors, $\cB_L$ as, 
    \begin{align}
        \cB_L \triangleq \biggl\{ \tnsrB_l = \sum_{s=1}^S\tnsrG_{f} \times_1 \Bpks{1}{s} \times_2 \Bpks{2}{s} \dots \times_K \Bpks{K}{s}: l \in [L], f \in [F], p_{(k,s)} \in [P_{(k,s)}], k \in [K], s\in[S]\biggr\},
    \end{align}
and we restrict $\varepsilon$ such that 
    \begin{align}
       \frac{1}{S}\sqrt{\frac{32(\wtr-1)}{\wtr}} < \varepsilon < \frac{d}{S}\sqrt{\frac{\wtr-1}{\wtr}}. \label{eps_cond}
    \end{align}
We make the final note that, due to the Kronecker product, we can express $\tvec(\tnsrB_l)$ as:
    \begin{align}
        \tvec(\tnsrB_l) = \sum_{s=1}^S (\Bpks{K}{s} \otimes \Bpks{K-1}{s} \otimes \dots \otimes \Bpks{1}{s}) \gf.
    \end{align}

We have the following remaining tasks at hand: 1) We must show that the energy of any $\tnsrB_l$ is less than $d^2$. 2) We must derive an upper and lower bound on the distance $\left(\norm{\tnsrB_l -\tnsrB_{l'}}_F^2\right)$ between any two distinct tensors $\tnsrB_l, \tnsrB_{l'}, \in B_L$. We begin by showing $\norm{\tnsrB_l}_F^2 < d^2$:
\begin{align}
    \nonumber \norm{\tnsrB_l}_F^2 & = \norm{\sum_{s=1}^S \tnsrG_f\times_1 \Bpks{1}{s}\times_2\Bpks{2}{s}\times_3 \dots \times_K \Bpks{K}{s}}_F^2\\
    &\nonumber = \norm{\sum_{s=1}^S \left(\Bpks{K}{s} \otimes \Bpks{K-1}{s} \otimes 
    \dots \otimes \Bpks{1}{s} \right) \gf }_2^2\\
    &\overset{(b)}{\leq} \norm{\sum_{s=1}^S\Bpks{K}{s}\otimes \dots \otimes \Bpks{1}{s}}_F^2 \norm{\gf}_2^2\\
    &\overset{(c)}{\leq}\left(\sum_{s=1}^S \norm{\Bpks{K}{s}\otimes \dots \otimes \Bpks{1}{s}}_F\right)^2 \norm{\gf}_2^2\\
    &\overset{(d)}{=} S^2 \prod\limits_k\norm{\Bpk}_F^2 \norm{\gf}_2^2 = \frac{S^2\wtr\varepsilon^2}{\wtr-1} ~\overset{(e)}{<} d^2,
    \end{align}
where $(b)$ follows from the fact that for any matrix $\bA$ and any vector $\ba$, $\norm{\bA\ba}_2 \leq \norm{\bA}_2 \norm{\ba}_2$ and the fact that $\norm{\bA}_2 \leq \norm{\bA}_F$~\citep{petersen2matrixcookbook}. Additionally, $(c)$ follows the triangle inequality and $(d)$ from the fact that the matrix norm of the Kronecker product is the product of the matrix norms. Additionally, $(e)$ holds due to \eqref{eps_cond}. 

We proceed with deriving lower and upper bounds on $\norm{\tnsrB_l -\tnsrB_{l'}}_F^2$ for any two distinct $\tnsrB_l, \tnsrB_{l'} \in B_L$. We first denote the square matrix $\tildBpks{k}{s} \in \R^{m_k \times m_k}$ as the completed orthonormal matrix of each low-rank matrix $\Bpks{k}{s}\in \R^{m_k \times r_k}$. Also, $\widetilde{\tnsrG}_f \in \R^{m_1 \times \dots \times m_K}$ has entries $\widetilde{\tnsrG}_f(\cdot)$ defined as follows:
\begin{align}
\begin{cases}
    \widetilde{\tnsrG}_f(1\colon r_1, \dots, 1\colon r_K) = \tnsrG_f(1\colon r_1, \dots, 1\colon r_K) \\
    \widetilde{\tnsrG}_f(r_1+1\colon m_1, \dots,r_K+1\colon m_K) = \tnsrG_f(1\colon r_1, \dots, 1\colon r_K).
 \end{cases}   
\end{align}
Also define $\widetilde{\tnsrB}_l =\sum_{s=1}^S \widetilde{\tnsrG}_f\times_1 \tildBpks{1}{s} \times_2 \dots \times_K \tildBpks{K}{s}$ for any $l \in [L]$. With these definitions, we have the equality $\norm{\tnsrB_l -\tnsrB_{l'}}_F^2 = \norm{\widetilde{\tnsrB}_l -\widetilde{\tnsrB}_{l'}}_F^2$. Defining $\bigotimes\limits_{k=K}^{k=1} \tildBpks{k}{s} \triangleq \tildBpks{K}{s}\otimes\dots\otimes\tildBpks{1}{s}$ for any $l \in [L]$ and we have the following:
    \begin{align}
        \nonumber \norm{\tnsrB_l - \tnsrB_{l'}}_F^2 &\nonumber= \frac{\varepsilon^2}{\wtr}\norm{\sum_{s=1}^S\left(\bigotimes\limits_{k=K}^{k=1} \tildBpks{k}{s} \right)\tildgfi{1} -\sum_{s=1}^S\left(\bigotimes\limits_{k=K}^{k=1} \tildBpksprime{k}{s} \right)\tildgfiprime{1}}_2^2\\
        &\nonumber= \frac{\varepsilon^2}{\wtr}\left(\norm{\sum_{s=1}^S\left(\bigotimes\limits_{k=K}^{k=1} \tildBpks{k}{s} \right)\tildgfi{1}}_2^2 + \norm{\sum_{s=1}^S\left(\bigotimes\limits_{k=K}^{k=1} \tildBpksprime{k}{s}\right)\tildgfiprime{1}}_2^2 \right. \\
         &\nonumber \left. \qquad - 2\left<\sum_{s=1}^S\left(\bigotimes\limits_{k=K}^{k=1} \tildBpks{k}{s} \right)\tildgfi{1}, \sum_{s=1}^S\left(\bigotimes\limits_{k=K}^{k=1} \tildBpksprime{k}{s} \right)\tildgfiprime{1}\right>
         \right).
    \end{align}
Define $\bT_s = \bigotimes\limits_{k=K}^{k=1} \tildBpks{k}{s}$, $\bV_s = \bigotimes\limits_{k=K}^{k=1}\tildBpksprime{k}{s}$ for any $s\in [S]$, and $\bT_s^{(j)}$, $\bV_s^{(j)}$ as the $j^{th}$ column of $\bT_s$ and $\bV_s$, respectively, then we have
\begin{align}
     \nonumber & \norm{\tnsrB_l - \tnsrB_{l'}}_F^2  = \frac{\varepsilon^2}{\wtr}\left(\norm{\sum_{s=1}^S \bT_s \tildgfi{1}}_2^2 + \norm{\sum_{s=1}^S \bV_s \tildgfiprime{1}}_2^2  - 2\left<\sum_{s=1}^S \bT_s \tildgfi{1}, \sum_{s=1}^S \bV_s \gfiprime{1}\right>\right) \\
     &\nonumber = \frac{\varepsilon^2}{\wtr}\left( \left(\sum_{s=1}^S\bT_s^{T(1)}\tildgfi{1}\right)^2 + \dots + \left(\sum_{s=1}^S\bT_s^{T(m)}\tildgfi{1}\right)^2 + 
     \left(\sum_{s=1}^S\bV_s^{T(1)}\tildgfiprime{1}\right)^2 + \dots + \left(\sum_{s=1}^S\bV_s^{T(m)}\tildgfiprime{1}\right)^2 \right.\\
     &\nonumber \left. \quad - 2\left( \left(\sum_{s=1}^S\bT_1^{T(1)}\tildgfi{1}\right)\left(\sum_{s=1}^S\bV_s^{T(1)}\tildgfiprime{1}\right) + \dots + \left(\sum_{s=1}^S\bT_s^{T(m)}\tildgfi{1}\right)\left(\sum_{s=1}^S\bV_s^{T(m)}\tildgfiprime{1}\right)\right) \right)
\end{align}
We group every $\left(\sum_{s=1}^S\bT_s^{T(j)}\tildgfi{1}\right)^2 + \left(\sum_{s=1}^S\bV_s^{T(j)}\tildgfiprime{1}\right)^2 -  2\left(\sum_{s=1}^S\bT_1^{T(j)}\tildgfi{1}\right)\left(\sum_{s=1}^S\bV_s^{T(j)}\tildgfiprime{1}\right)$, for $j\in[m]$. We get
\begin{align}
     \norm{\tnsrB_l - \tnsrB_{l'}}_F^2 & \geq \frac{\varepsilon^2}{\wtr} \sum_{i=1}^{m} \left(\left|\sum_{s=1}^S\bT_s^{T(i)}\tildgfi{1}\right| - \left|\sum_{s=1}^S\bV_s^{T(i)}\tildgfiprime{1}\right|\right)^2.\label{inner_prod}
\end{align}
The expression in ~\eqref{inner_prod} contains inner products. Specifically, $\left|\sum_{s=1}^S\bT_s^{T(i)}\tildgfi{1}\right| = \left|\left<\sum_{s=1}^S\bT_s^{T(i)},\tildgfi{1}\right>\right|$. Denote $\lambda_i = \left|\cos{\angle\left(\sum_{s=1}^S\bT_s^{T(i)}, \tildgfi{1}\right)}\right|$, then we have
\begin{align}
     \nonumber \norm{\tnsrB_l - \tnsrB_{l'}}_F^2 & \overset{(f)}{\geq}\frac{\varepsilon^2}{\wtr} \sum_{i=1}^{m} \left(\lambda_i\norm{\sum_{s=1}^S\bT_s^{T(i)}}_2\norm{\tildgfi{1}}_2 - \norm{\sum_{s=1}^S\bV_s^{T(i)}}_2\norm{\tildgfiprime{1}}_2\right)^2 \\
     & \nonumber \overset{(g)}{=} \frac{\wtr \varepsilon^2}{\wtr(\wtr-1)}\sum_{i=1}^m \left(\lambda_i\norm{\sum_{s=1}^S\bT_s^{T(i)}} - \norm{\sum_{s=1}^S\bV_s^{T(i)}}\right)^2\\
     & \nonumber \overset{(h)}{\geq} \frac{\wtr \varepsilon^2}{\wtr(\wtr -1)} \sum_{i=1}^m S^2(\lambda_i +1)^2\\
     &  \overset{(i)}{\geq} \frac{\wtr \varepsilon^2}{\wtr(\wtr -1)} \sum_{i=1}^m S^2(1+\frac{1}{\sqrt{m}})^2 \label{dist_LB} \geq \frac{S^2 \wtr}{\wtr -1}\varepsilon^2,
    \end{align} 
where $(f)$ is due to applying Cauchy-Schwartz inequality to $\left|\sum_{s=1}^S\bV_s^{T(i)}\tildgfiprime{1}\right|$, $(g)$ is due to \eqref{norm_gfone}, $(h)$ is due to the fact that $\norm{\sum_{s=1}^S\bT_s^{T(i)}}$ and $\norm{\sum_{s=1}^S\bT_s^{V(i)}}$ are lower and upper bounded by $-S$ and $S$, respectively, and $(i)$ is from the result in Lemma~\ref{equiangular_bound}. Finally, for finding upper bounds on $\norm{\tnsrB_l - \tnsrB_{l'}}_F^2$, we have:
\begin{align}
    \nonumber &\norm{\tnsrB_l - \tnsrB_{l'}}_F^2 \\ &\nonumber \overset{(j)}{\leq} \left(\norm{\sum_{s=1}^S\left(\Bpks{K}{s}\otimes\dots\otimes\Bpks{1}{s} \right) \gf}_F +\norm{\sum_{s=1}^S\left(\Bpksprime{K}{s}\otimes\dots\otimes\Bpksprime{1}{s} \right) \gfprime}_F\right)^2\\
    &\nonumber \leq \left(\sum_{s=1}^S\prod_k\norm{\Bpks{k}{s}}_F\norm{\gf}_2 +\sum_{s=1}^S\prod_k\norm{ \Bpksprime{k}{s}}_F\norm{ \gfprime}_2\right)^2\\
    & \nonumber \overset{(k)}{=} \left( 2\sum_{s=1}^S\prod_k\norm{\Bpks{k}{s}}_F \norm{\gf}_2 \right)^2 \\ 
    & = \frac{4S^2\wtr}{\wtr-1} \varepsilon^2, \label{dist_UB}
\end{align}
where $(j)$ follows from the triangle inequality, and $(k)$ follows from the fact that $\norm{\Bpks{k}{s}}_F = \norm{\Bpksprime{k}{s}}_F$ and that $\norm{\gf}_2 = \norm{\gfprime}_2$. 
\end{proof}

\begin{proof}[Proof of Lemma~\ref{cmi_UB_lemma}]
Consider the set $\cB_L$ from Lemma~\ref{packing_lemma}, where the bounds in \eqref{dist_LB} and \eqref{dist_UB} hold. For the LSR-TGLM model in \eqref{LSR-TGLM}, consider $n$ i.i.d samples, with covariate tensors $\tnsrX_i \in \R^{m_1 \times \dots \times m_K}, \forall i \in [n]$, where $\tvec(\tnsrX_i) \sim \cN(\mathbf{0}, \bSigma_x)$. According to \eqref{LSR-TGLM}, observations $y_i$ follow an exponential family distribution when conditioned on $\tnsrX_i$, $\forall i \in [n]$. Consider the vector of $n$ observations, $\by$, and the tensor of $n$ samples, $\mathcal{X}$. Define also $\cmi{\by}{l}{\mathcal{X}}$ as the mutual information between observations $\by$ and index $l$ conditioned on side-information $\mathcal{X}$. From \citep{cover2012elements, wainwright2009information}, we have,
\begin{align}
    \cmi{\by}{l}{\mathcal{X}} \leq \frac{1}{L^2} \sum_{l,l'}\E_{\mathcal{X}} D_{KL}(f_l(\by|\mathcal{X})||f_{l'}(\by|\mathcal{X}) \label{cmi},
\end{align}
where $D_{KL}(f_l(\by|\mathcal{X})||f_{l'}(\by|\mathcal{X})$ is the Kullback-Leibler (KL) divergence of probability distribution $f_l(\by|\mathcal{X})$ and $f_{l'}(\by|\mathcal{X})$ of $\by$ given $\mathcal{X}$ for some $\tnsrB_l, \tnsrB_{l'} \in \cB_L$. Denote $\eta_{l_i}$ and $\eta_{{l'}_i}$ as the link functions associated with $f_l(y_i|\tnsrX_i)$ and $f_{l'}(y_i|\tnsrX_i)$, respectively. Also denote $\mu_{l_i}$ as the expectation of sufficient statistic $T(y_i)$ conditioned on $\tnsrX_i$ under model $\tnsrB_l$ (otherwise known as the canonical parameter). We evaluate the KL divergence which is given as follows~\citep{nielsen2022statistical}:
\begin{align}
     D&_{KL}(f_l(\by|\mathcal{X})||f_{l'}(\by|\mathcal{X}))=\sum\limits_{i \in [n]} ({\eta_l}_i - {\eta_{l'}}_i)\mu_l - a({\eta_l}_i) + a({\eta_{l'}}_i) \label{KL}. 
\end{align}
Now, we take the expectation of \eqref{KL} with respect to the side-information $\mathcal{X}$. We have $\E_{\tnsrX}\left[({\eta_l}_i - {\eta_{l'}}_i)\mu_l - a({\eta_l}_i) + a({\eta_{l'}}_i)\right] = \E_{\tnsrX}\left[({\eta_l}_i - {\eta_{l'}}_i)\mu_l\right]$, due to the fact that $\E_{\tnsrX}[a({\eta_l}_i)] = \E_{\tnsrX}[a({\eta_{l'}}_i)]$. We now have:
\begin{align}
    \nonumber \E_{\tnsrX} D_{KL}(f_l(\by|\mathcal{X})||f_{l'}(\by|\mathcal{X}) &= \sum\limits_{i \in[n]} \E_{\tnsrX} \bigl[\left(\left<\tnsrB_l,\tnsrX_i\right> - \left<\tnsrB_{l'},\tnsrX_i\right>\right)\E[T(y_i)|\tnsrX_i,l]]\\
    \nonumber & \overset{(l)}{=}\sum\limits_{i \in[n]} \E_{\tnsrX}\biggl[\left(\left<\tnsrB_l - \tnsrB_{l'},\tnsrX_i\right>\right) \frac{\partial a({\eta_l}_i)}{\partial {\eta_l}_i}\biggr]\\
    &\leq\sum\limits_{i \in[n]} \sqrt{\E_{\tnsrX}\bigl[\left<\tnsrB_l -\tnsrB_{l'},\tnsrX_i\right>\bigr] \E_{\tnsrX}\biggl[\frac{\partial a({\eta_l}_i)}{\partial {\eta_l}_i}\biggr]^2} \label{cauchy}\\
    & \leq n \norm{\Sigma_x}_2 \norm{\tnsrB_l -\tnsrB_{l'}}_F^2 M\label{exp_KL},
\end{align}
where $(l)$ follows from the fact that $\mu_{l_i} = \E[T(y_i)|\tnsrX_i,l] = \frac{\partial a({\eta_l}_i)}{\partial {\eta_l}_i}$. We achieve \eqref{cauchy} through Cauchy-Schwartz inequality. We make some remarks regarding \eqref{exp_KL}: First, we replace the summation over $n$ samples with $n$ since each sample $\tnsrX_i$ is independent. Secondly, Assumption~\ref{derivative_bound} allows us to bound $\frac{\partial a({\eta_l}_i)}{\partial {\eta_l}_i}$ with $M$. Thirdly, the conditions on $\varepsilon$ in \eqref{eps_cond} mean $\norm{\tnsrB_l - \tnsrB_{l'}}_F >1$ thus $\norm{\tnsrB_l - \tnsrB_{l'}}_F^2 > \norm{\tnsrB_l - \tnsrB_{l'}}_F$. 
Plugging in \eqref{exp_KL} into \eqref{cmi} gives us
\begin{align}
    \nonumber \cmi{\by}{l}{\mathcal{X}} &\leq n \norm{\bSigma_x}_2 \norm{\Bvl -\Bvlprime}_F^2 M
    \overset{(m)}{\leq} 4S^2Mn\norm{\Sigma_x}_2 \frac{\wtr}{\wtr-1} \varepsilon^2,
\end{align}
where $(m)$ follows from \eqref{dist_UB}. 
\end{proof}

\subsection{Numerical Results for Poisson Regression}
Figure~\ref{2d_pois} reports the estimation and prediction accuracy for Poisson regression from the experiments on synthetic data discussed in Section~\ref{synthetic_experiments}. The shaded regions depict one standard deviation of mean estimation and prediction accuracy, based on 50 replications.
 \begin{figure}[ht]
     \centering
     \begin{subfigure}[b]{0.44\textwidth}
         \centering
         \includegraphics[width=\textwidth]{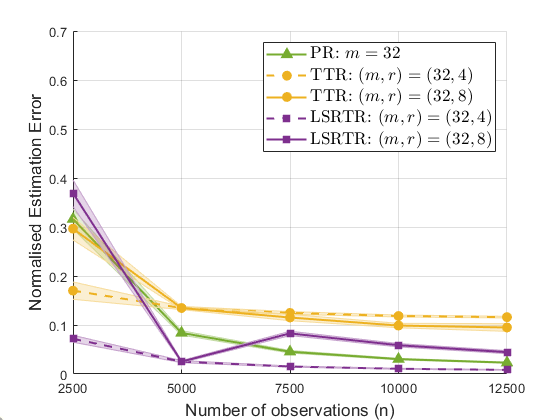}
         \caption{ }
         \label{2d_pois_est_32}
     \end{subfigure}
     \hfill
     \begin{subfigure}[b]{0.44\textwidth}
         \centering
         \includegraphics[width=\textwidth]{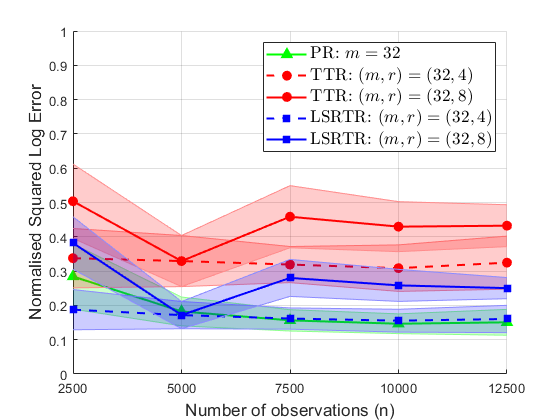}
         \caption{ }
         \label{2d_pois_pred_32}
     \end{subfigure}
     \begin{subfigure}[b]{0.44\textwidth}
         \centering
         \includegraphics[width=\textwidth]{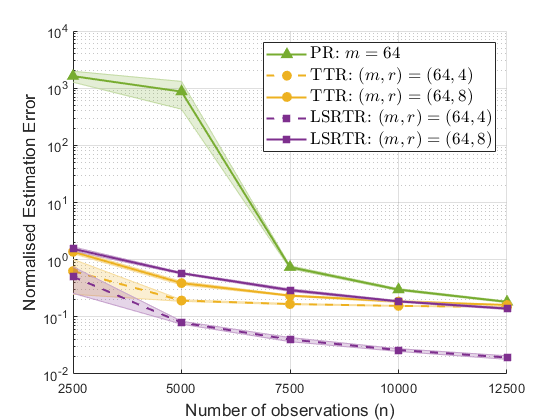}
         \caption{ }
         \label{2d_pois_est_64}
     \end{subfigure}
     \hfill
     \begin{subfigure}[b]{0.44\textwidth}
         \centering
         \includegraphics[width=\textwidth]{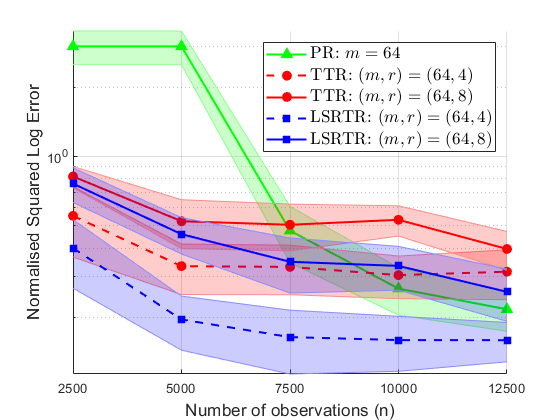}
         \caption{ }
         \label{2d_pois_pred_64}
     \end{subfigure}
 \end{figure}
    \begin{figure}[t!]\centering\ContinuedFloat   
     \begin{subfigure}[b]{0.44\textwidth}
         \centering
         \includegraphics[width=\textwidth]{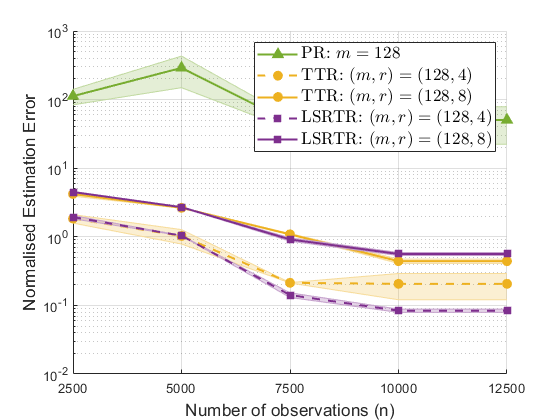}
         \caption{ }
         \label{2d_pois_est_128}
     \end{subfigure}
     \hfill
     \begin{subfigure}[b]{0.44\textwidth}
         \centering
         \includegraphics[width=\textwidth]{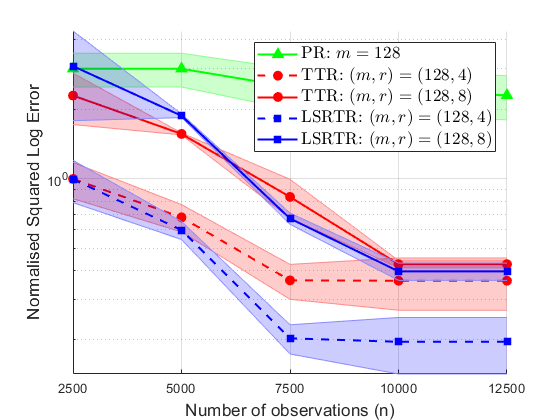}
         \caption{ }
         \label{2d_pois_pred_128}
     \end{subfigure}
     \caption{Comparison of LSRTR with PR and TTR for two-dimensional synthetic data when $m\in \{32,~64,~128\}$, $r\in \{4,~8\}$ and $S=2$. Normalised estimation error for $m=32,~64,$ and $128$ is shown in $(a),~(c),$ and $(e)$, respectively. Normalised prediction error for $m=32,~64,$ and $128$ is shown in $(b),~(d),$ and $(f)$, respectively. Each marker represents the mean normalised estimation/prediction errors, over 50 repetitions. The shaded regions correspond to one standard deviation of the mean normalised errors.}
        \label{2d_pois}
\end{figure}
\vspace*{3in}

\end{document}